\documentclass{article}

\PassOptionsToPackage{numbers, compress}{natbib}

\usepackage[preprint]{neurips_2026}

\usepackage[utf8]{inputenc}
\usepackage[T1]{fontenc}
\usepackage{hyperref}
\usepackage{url}
\usepackage{booktabs}
\usepackage{graphicx}
\usepackage{wrapfig}
\usepackage{amsmath}
\usepackage{amssymb}
\usepackage{amsfonts}
\usepackage{amsthm}
\usepackage{dsfont}
\usepackage{nicefrac}
\usepackage{microtype}
\usepackage[table]{xcolor}
\usepackage{subcaption}
\usepackage{tabularx}
\usepackage[normalem]{ulem}
\usepackage{algorithm}
\usepackage{algpseudocode}
\theoremstyle{plain}
\newtheorem{theorem}{Theorem}
\newtheorem{lemma}{Lemma}

\newtheorem{corollary}{Corollary}
\theoremstyle{definition}
\newtheorem{definition}{Definition}
\newtheorem{construction}{Construction}
\theoremstyle{remark}
\newtheorem{remark}{Remark}

\title{Hypergraph Pattern Machine: Compositional Tokenization for Higher-Order Interactions}

\author{  
\textbf{Kyrie Zhao}$^{1,*,\ddag}$\;
\textbf{Zehong Wang}$^{1,*,\dagger}$\;
\textbf{Tianyi Ma}$^1$\;
\textbf{Fang Wu}$^2$\; 
\textbf{Xiangru Tang}$^3$\\ 
\textbf{Pietro Li\`o}$^4$\;
\textbf{Sheng Wang}$^5$\;
\textbf{Yanfang Ye}$^{1,\dagger}$\\
$^1$University of Notre Dame\;
$^2$Stanford University\;
$^3$Yale University\; \\
$^4$University of Cambridge
$^5$University of Washington\\
\textsuperscript{$*$}{Equal Contribution}\;
\textsuperscript{$\dagger$}{Corresponding Authors}\;
\textsuperscript{$\ddag$}{Visiting Student}\;
\\
\texttt{<zwang43,yye7>@nd.edu} \\
}
\begin{document}

\maketitle

\begin{abstract}
Hypergraphs model higher-order relations that drive real-world decisions, from drug prescriptions to recommendations.
A central structural signal in such data, beyond what pairwise relations can express, is \textit{interaction compositionality}: whether a higher-order relation is compositional, emergent, or inhibitory with respect to its observed or unobserved sets.
In polypharmacy, the regime decides whether a drug should be dropped, kept, or excluded: a compositional drug triple can be safely simplified, an emergent triple requires all drugs jointly, and an inhibitory triple flags a drug that disrupts an existing interaction.
However, existing hypergraph learning methods, which merely propagate messages over observed hyperedges, leave this compositional signal unmodeled, allowing dangerous drug combinations to slip through and be misclassified.
To this end, we propose the Hypergraph Pattern Machine (HGPM), shifting the paradigm from message passing to learning the compositional pattern of subsets.
It tokenizes compositional subsets, organizes them in an inclusion DAG, and trains an inclusion-aware Transformer under masked reconstruction.
On ten hypergraph benchmarks, HGPM matches or exceeds state-of-the-art methods. 
Notably, in a real adverse-event prediction case, HGPM correctly identifies the drug addition that inhibits the side effect among feature-identical candidates, a discrimination existing methods cannot make.
The code and data are in \url{https://github.com/KryieZhao/HGPM.git}. 
\end{abstract}

\section{Introduction}

Hypergraphs~\citep{feng2019hgnn,chien2022allset,wang2023edhnn,tang2024hypergraphmlp,gao2025hyperfm,ma2025adaptive} model higher-order relations, e.g., drug interactions, recommendation, knowledge bases. Hypergraph data carries a structural property that pairwise graphs~\citep{kipf2017gcn,velickovic2018gat} cannot express~\citep{hajij2023topological,bodnar2021mpsn,bodnar2021cwn}: whether a higher-order interaction is \emph{compositional}, \emph{emergent}, or \emph{inhibitory} with respect to its observed or absent sets (Figure~\ref{fig:intro-comparison}). We call this property \emph{interaction compositionality}. Yet existing methods~\citep{yadati2019hypergcn,bai2021hcha,dong2020hnhn,qian2025adaptive}, which propagate messages only over observed hyperedges, leave it unmodeled. 
We argue that compositionality is the natural learning target for hypergraph learning. 
In this work, we accordingly change the learning object from supervising on observed hyperedges to supervising on the compositional, emergent, and inhibitory transition patterns. 

\begin{figure}[!t]
\centering
\includegraphics[width=\textwidth]{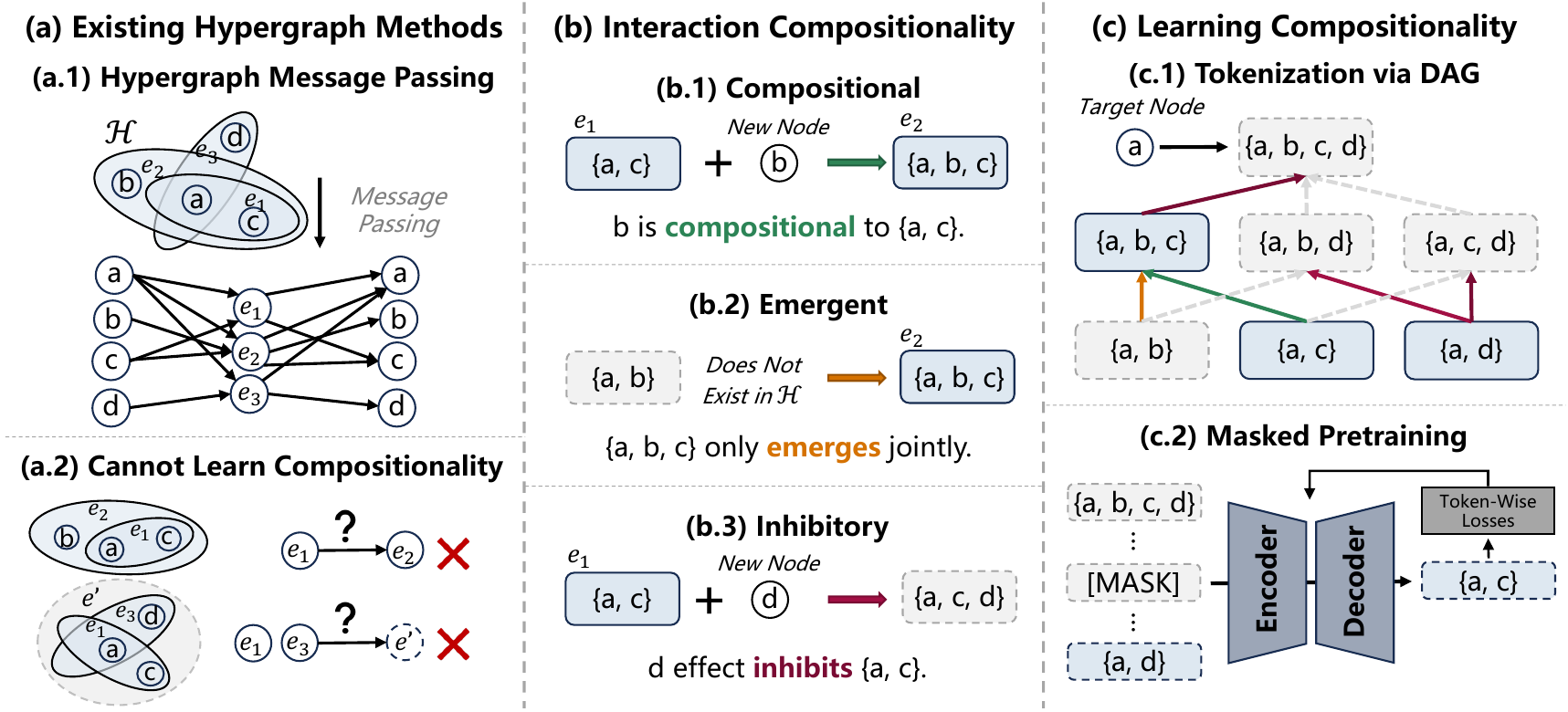}
\caption{\textbf{Interaction compositionality.} \textbf{(a)} Existing hypergraph methods propagate messages only over observed hyperedges (a.1), failing to distinguish hyperedge pairs whose compositionality structure differs (a.2). \textbf{(b)} Each adjacent-order subset pair falls into one of three regimes by which endpoints are observed: \emph{compositional} (b.1), \emph{emergent} (b.2), and \emph{inhibitory} (b.3). \textbf{(c)} HGPM tokenizes observed (solid) and unobserved (dashed) subsets containing a target node into an inclusion DAG with COMP/EMER/INHIB-labeled transitions (c.1, color-matched to (b)); the encoder is pretrained via masked-subset reconstruction with token-wise losses on subset semantics and existence (c.2).}
\vspace{-20pt}
\label{fig:intro-comparison}
\end{figure}

Polypharmacy~\citep{tatonetti2012ddi,chou2006synergism} makes this concrete. A clinician evaluating multi-drug regimens needs to distinguish three patterns. If both a pair and the full triple of drugs are effective, the third drug merely compounds an existing effect, and the prescription can be simplified. If only the full triple is effective while no smaller subset is, the three drugs are jointly required, and none can be dropped. If a pair is effective but adding the third drug eliminates the effect, the third drug actively disrupts the existing interaction and should be excluded. Each pattern points to a different prescription. 

Existing hypergraph methods, however, cannot distinguish these three patterns, for a structural reason. Hyperedge-centric methods such as AllSet~\citep{chien2022allset} and ED-HNN~\citep{wang2023edhnn} propagate between nodes and observed hyperedges; expansion-based methods such as HGNN~\citep{feng2019hgnn} and LEGCN~\citep{yang2022legcn} reduce the hypergraph to a clique-expanded pairwise graph and propagate over its edges; simplicial-complex networks~\citep{ebli2020scn,bodnar2021mpsn} pass messages between adjacent observed $k$-faces. All rely on message passing~\citep{telyatnikov2025lens}, where information flows only between entities in the hyperedges that already exist. Yet emergent and inhibitory regimes are defined by which subsets do \emph{not} exist, a signal that observed structure alone cannot carry. To a model that sees only what is observed, compositional, emergent, and inhibitory interactions collapse into a single binary prediction of whether the combination exists.

To prevent this collapse, we shift the paradigm: make compositionality itself the learning target. Capturing it requires addressing three challenges simultaneously. (1) Since absence is itself part of the compositional signal, both observed and absent subsets must become first-class entities the model can read. We tokenize subsets, each token carrying its content, order, and an explicit existence label that distinguishes observed from unobserved. (2) Each subset participates in many supersets, and each adjacent subset--superset transition carries its own compositional, emergent, or inhibitory label, so the representation must encode all of these multi-parent relations together. We organize tokens into an inclusion DAG whose subset--superset edges carry compositional, emergent, or inhibitory labels as first-class structure. (3) Subsets grow combinatorially with the size of the higher-order interaction (a 10-drug regimen has 1022 proper non-empty subsets, an 8-drug regimen 254), making the full DAG intractable on real datasets. We sample tokens within a bounded per-order budget, keeping the DAG local around each target while preserving the subset--superset edges that carry compositionality. Together, these three choices yield a representation in which every adjacent transition exposes the joint pattern of presence and absence that message passing leaves invisible.

We propose the Hypergraph Pattern Machine (HGPM), a Transformer that learns interaction compositionality from a target-centered inclusion DAG. For each target entity, HGPM tokenizes the subsets that contain it; each token carries an existence label, and each subset--superset edge carries one of three composition labels: compositional, emergent, or inhibitory. The encoder augments self-attention with structural biases~\citep{ying2021graphormer,kreuzer2021san,rampasek2022gps} that inject inclusion topology and composition labels directly, so the model reads the regime of each transition rather than re-learning it from token features. We pretrain HGPM by masking subset tokens~\citep{hou2022graphmae,xia2023molebert,hou2023graphmae2} and predicting whether each is observed; since each composition label is a deterministic function of subset existence, this prediction recovers compositionality on every adjacent edge of the DAG. On standard hypergraph node classification benchmarks, HGPM matches or exceeds state-of-the-art methods.

Notably, on the HODDI~\citep{wang2025hoddi} and JADER~\citep{pmda_jader} drug interaction datasets, HGPM is state-of-the-art under both edge classification and link prediction. Beyond benchmark scores, our case study reveals that HGPM correctly identifies the drug addition that inhibits a side effect among feature-identical candidates. We track a FOLFOX peripheral-neuropathy report in which two near-identical anti-EGFR/VEGF antibodies, panitumumab and bevacizumab, produce opposite regimes: panitumumab preserves the side effect (compositional) while bevacizumab inhibits it. HGPM correctly recovers this split, a discrimination existing similarity-based methods cannot make.

\section{Related Work}

\noindent\textbf{Message-Passing-Based Hypergraph Learning.}
Similar to graph learning~\citep{li2026substrate,zhao2023self,ju2022grape,qian2022co,zhao2021multi}, dominant approaches in hypergraph learning primarily rely on message passing over observed hyperedges. \emph{Hyperedge-centric methods}~\citep{yadati2019hypergcn,dong2020hnhn,huang2021unignn,chien2022allset,wang2023edhnn,ma2025hypergraph,ma2026bhygnn+} aggregate between nodes and hyperedges through incidence, with recent variants enriching propagation via sheaf, energy, framelet, or higher-order constructions~\citep{duta2023sheaf,wang2023phenomnn,li2025framehgnn,xie2025khgnn,sun2026healhgnn}. \emph{Expansion-based methods}~\citep{yang2022legcn,feng2019hgnn,ma2025adaptive} reduce the hypergraph to a pairwise graph and apply standard message passing~\citep{velickovic2018gat,kipf2017gcn}, while \emph{simplicial-complex networks}~\citep{bodnar2021mpsn,eijkelboom2023empsn} propagate over $k$-faces but require subset closure, foreclosing inhibitory transitions. Because all such schemes are message passing on observed structure~\citep{telyatnikov2025lens}, the joint pattern of subset presence and absence is not represented in the state. HGPM departs from this paradigm by tokenizing both observed and unobserved subsets, making that pattern an explicit input.

\noindent\textbf{Substructure Tokenization For Structured Data.}
A parallel line in graph learning replaces message passing with substructure-token sequences encoded by Transformers with structural attention biases~\citep{ying2021graphormer,rampasek2022gps} and pretrained by masked reconstruction~\citep{hou2022graphmae,xia2023molebert}; recent work establishes substructure tokenization as a scalable alternative on graphs~\citep{wang2024subgraphpool,wang2024gft,wang2025gpm,wang2025g2pm} and temporal graphs~\citep{ma2026tgpm}. HGPM extends it to hypergraphs: tokens shift from subgraph patterns to centered subset hierarchies, and supervision from token reconstruction to compositional, emergent, and inhibitory regimes. Concurrent hypergraph foundation models~\citep{gao2025hyperfm} pretrain a vertex-centric encoder for cross-corpus transfer, orthogonal to HGPM's subset-level focus.

\noindent\textbf{Higher-Order Drug Interaction Prediction.}
Computational modeling of drug interactions has historically been pairwise: synergy estimation~\citep{preuer2018deepsynergy,yang2021graphsynergy}, side-effect classification~\citep{zitnik2018decagon,ryu2018deepddi}, and substructure-based DDI~\citep{nyamabo2021ssiddi,chen2021muffin}. A more recent line extends to combinations of arbitrary size via permutation-invariant pooling~\citep{peng2019d3i}, latent combination types~\citep{nguyen2022sparse}, or hypergraph networks reduced to pairwise targets~\citep{saifuddin2023hygnn}, with HODDI~\citep{wang2025hoddi} and JADER~\citep{pmda_jader} supplying higher-order drug-effect data at scale. Both lines encode a combination as a single observation and score its effect, without contrasting it against the indicators of its subsets and adjacent supersets, the distinction HGPM is built to recover.

\section{Problem Setup}
\label{sec:problem-setup}

We consider a hypergraph $\mathcal{H} = (\mathcal{V}, \mathcal{E}, \mathbf{X})$, where $\mathcal{V}$ is a set of $n$ entities, $\mathcal{E} \subseteq 2^{\mathcal{V}}$ is the set of observed hyperedges, and $\mathbf{X} \in \mathbb{R}^{n \times d}$ collects per-entity features. For any subset $S \subseteq \mathcal{V}$, let $\mathds{1}_{S} = \mathds{1}[\, S \in \mathcal{E} \,] \in \{0, 1\}$ denote its observation indicator, with $\mathds{1}_{S} = 1$ when $S$ is an observed hyperedge and $0$ otherwise. Standard hypergraph methods supervise $f_{\theta}$ on downstream labels such as node class for $c \in \mathcal{V}$ or combination effect for $G \subseteq \mathcal{V}$; we instead supervise $f_{\theta}$ on the \emph{interaction compositionality} structure of $\mathcal{H}$, defined next.

\noindent\textbf{Interaction Compositionality.}
The \emph{inclusion hierarchy} of $\mathcal{H}$ is the set of subsets of $\mathcal{V}$ ordered by inclusion, $(2^{\mathcal{V}}, \subseteq)$. Within it, an \emph{adjacent-order pair} is any $(S, S')$ with $S \subset S'$ and $|S'| = |S| + 1$. Each adjacent-order pair has a \emph{composition label} determined by the indicators of $S$ and $S'$,
\begin{equation}
\label{eq:comp-label}
\mathrm{comp}(S, S') \;=\;
\begin{cases}
\textsc{Comp}, & \mathds{1}_{S} = 1,\; \mathds{1}_{S'} = 1, \\
\textsc{Emer}, & \mathds{1}_{S} = 0,\; \mathds{1}_{S'} = 1, \\
\textsc{Inhib}, & \mathds{1}_{S} = 1,\; \mathds{1}_{S'} = 0,
\end{cases}
\end{equation}
with pairs where both $S$ and $S'$ are absent excluded as uninformative. Collecting all labeled pairs gives the \emph{compositionality structure} $\mathcal{C}(\mathcal{H})$ of $\mathcal{H}$, which HGPM takes as the learning object: the model is trained to recover $\mathrm{comp}(S, S')$ from a representation of the inclusion hierarchy, so that supervision is keyed to adjacent-order transitions rather than to the indicator $\mathds{1}_{S}$ of a single subset in isolation.

\noindent\textbf{Why This Requires Going Beyond Message Passing.}
Recovering $\mathrm{comp}(S, S')$ requires reading the indicator pair $(\mathds{1}_S, \mathds{1}_{S'})$ jointly. This rules out hyperedge message-passing encoders, whose representations propagate only through observed hyperedges: there exist hypergraph pairs on which any such encoder produces identical features at every finite depth~\citep{xu2019gin,morris2019kgnn}, yet whose compositionality structures differ (Appendix~\ref{sec:thm-mp-separation}). What is needed instead is a representation that treats every relevant subset, observed or not, as a first-class object, with its existence indicator readable directly from the input. Such a representation provably separates pairs message passing cannot (Appendix~\ref{sec:thm-hgpm-separation}). We leave the theoretical grounding in Appendix~\ref{sec:theory}.

\section{Hypergraph Pattern Machine}
\label{sec:method}

We propose the HGPM, an inclusion-aware Transformer that learns interaction compositionality from a target-centered inclusion DAG of subset tokens (Figure~\ref{fig:framework}). 
Concretely, for each target $c \in \mathcal{V}$ HGPM proceeds in three stages: (i) build a bounded inclusion DAG $\mathcal{G}_c$ of subset tokens around $c$, with composition labels read directly from observation indicators (Section~\ref{sec:method-tokenization}); (ii) linearize $\mathcal{G}_c$ and encode it with self-attention augmented by pairwise biases that re-inject the inclusion topology and composition labels at every layer (Section~\ref{sec:method-encoder}); (iii) pretrain the encoder with masked-reconstruction losses, then transfer it via lightweight readouts. (Section~\ref{sec:method-pretrain})

\subsection{Compositional Tokenization}
\label{sec:method-tokenization}

\begin{figure}[t]
\centering
\includegraphics[width=\textwidth]{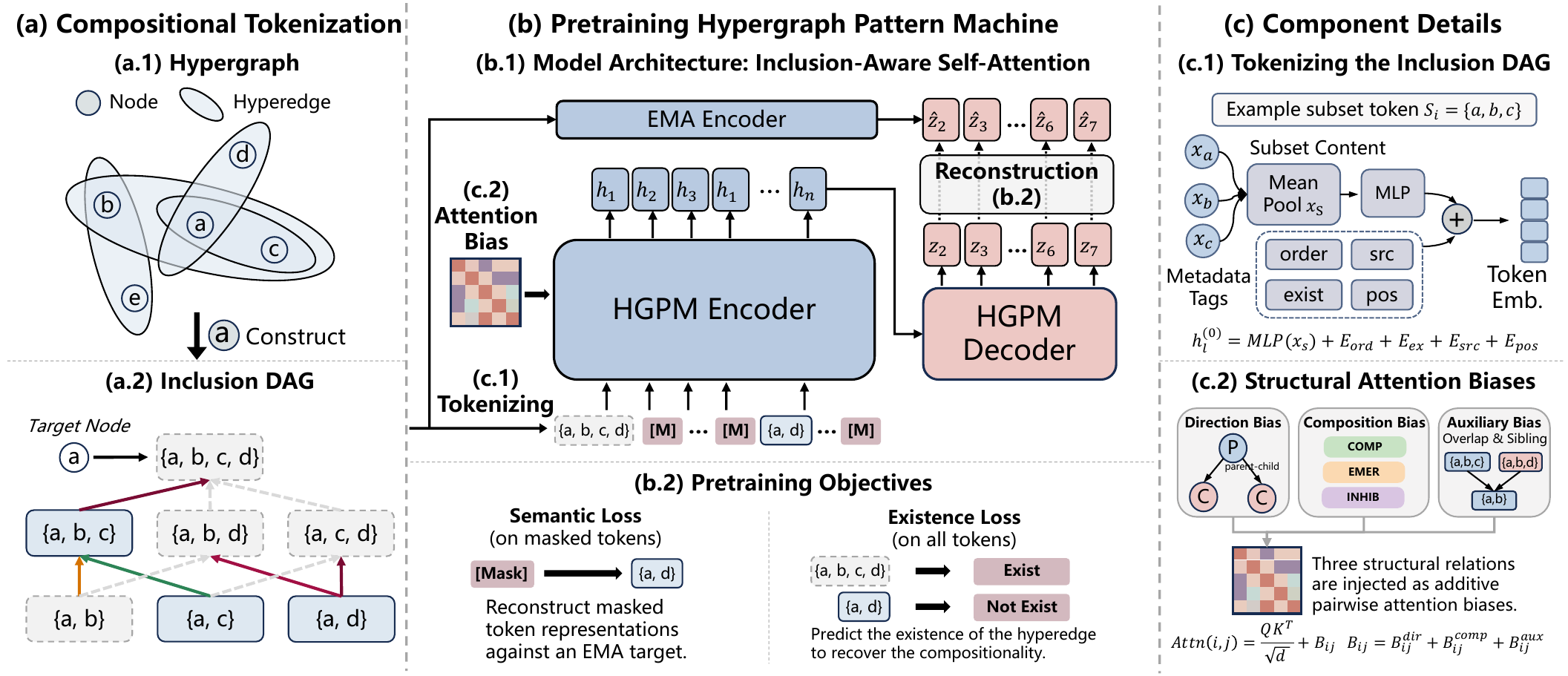}
\vspace{-15pt}
\caption{\textbf{Hypergraph Pattern Machine.} For each target entity, HGPM tokenizes subsets containing it and organizes them as an inclusion DAG with composition-labeled edges (left). A masked-reconstruction objective jointly supervises subset semantics and existence (middle). Pairwise structural biases inject inclusion topology and composition labels into self-attention (right).}
\vspace{-10pt}
\label{fig:framework}
\end{figure}

\noindent\textbf{Inclusion DAG.}
For each target entity $c \in \mathcal{V}$, HGPM organizes the local inclusion hierarchy around $c$ as a directed acyclic graph (DAG) $\mathcal{G}_c = (\mathcal{S}_c, \mathcal{R}_c)$ with existence-labeled nodes and composition-labeled edges, placing the compositionality structure $\mathcal{C}(\mathcal{H})$ directly in the input:
\begin{equation}
\label{eq:dag-sets}
\begin{aligned}
\mathcal{S}_c &= \{\, S \subseteq \mathcal{V} : c \in S,\; 1 \leq |S| \leq K_{\max} \,\}, \\
\mathcal{R}_c &= \{\, (S, S') \in \mathcal{S}_c \times \mathcal{S}_c : S \subset S',\; |S'| = |S| + 1 \,\}.
\end{aligned}
\end{equation}
The node set contains only subsets that include $c$, anchoring every token and edge to the prediction target. The edge set keeps only adjacent-order pairs, since compositionality is defined locally on $(S, S')$ with $|S'| = |S| + 1$ and non-adjacent inclusions arise as chains of atomic steps. Each edge carries the composition label $\mathrm{comp}(S, S')$, read off the data without learning, which promotes pair-level compositionality from a derived quantity to a first-class structural object. We use a DAG rather than a tree because a subset typically has multiple adjacent-order supersets carrying different labels. For example, $\{\mathrm{aspirin}, \mathrm{warfarin}\}$ extends to $\{\mathrm{aspirin}, \mathrm{warfarin}, \mathrm{ibuprofen}\}$ as COMP (additive bleeding risk~\citep{battistella2005warfarin}) but to $\{\mathrm{aspirin}, \mathrm{warfarin}, \mathrm{vitamin\ K}\}$ as INHIB (vitamin K reverses warfarin's effect~\citep{holbrook2005warfarin}), and a tree rooted at the pair must commit to one child label and erase the other. In practice, $|\mathcal{S}_c|$ grows combinatorially in $|\mathcal{V}|$. We therefore cap how many tokens we keep at each order, filling the cap with observed subsets first and negatives second (Appendix~\ref{sec:dag-construction}).

\noindent\textbf{Tokenizing the Inclusion DAG.}
To feed $\mathcal{G}_c$ to a Transformer, we first linearize it into a unique sequence by sorting subsets in descending $(|S|, S)$ order, placing the center singleton $\{c\}$ at the last position; the edge set $\mathcal{R}_c$ is dropped here and re-injected as structural attention biases in Section~\ref{sec:method-encoder}. Each position in the sequence is a token for one subset $S \in \mathcal{S}_c$, carrying four attributes (a mean-pooled semantic feature, the order, the observation indicator from Section~\ref{sec:problem-setup}, and a provenance tag):
\begin{equation}
\label{eq:token}
\mathbf{t}_S = \big(\mathbf{x}_S,\, |S|,\, \mathds{1}_{S},\, \tau(S)\big), \quad \mathbf{x}_S = \frac{1}{|S|}\sum_{v \in S} \mathbf{x}_v, \quad \tau(S) \in \{\mathrm{center},\, \mathrm{obs},\, \mathrm{neg}\}.
\end{equation}
Each attribute is chosen to expose a distinct signal that compositionality depends on. The \emph{semantic} feature $\mathbf{x}_S$ is mean-pooled for permutation-invariance~\citep{zaheer2017deepsets}. The \emph{order} $|S|$ lets the model condition on hierarchy depth. The \emph{existence indicator} $\mathds{1}_S$ is kept first-class because compositionality is defined by existence patterns alone, and so we can mask it during pretraining (Section~\ref{sec:method-pretrain}). The \emph{source} $\tau(S)$ tags each token as $\mathrm{center}$, $\mathrm{obs}$, or $\mathrm{neg}$. A $\mathrm{neg}$ token is an unobserved subset ($S \notin \mathcal{E}$), exposing absence as a signal. They are needed because EMER and INHIB transitions each have an absent endpoint. We construct them by \emph{local perturbation} of an observed hyperedge $e$ containing $c$: drop a non-center member, add a node from $\mathcal{V} \setminus e$, or replace a non-center member with one from $\mathcal{V} \setminus e$; retain only if order $\in [2, K_{\max}]$ and $S \notin \mathcal{E}$.

\subsection{Inclusion-Aware Self-Attention}
\label{sec:method-encoder}

The encoder is a Transformer over the linearized token sequence, with each token $S$ entering as $\mathbf{h}_S^{(0)} = W_{\mathrm{x}}(\mathbf{x}_S) + E_{\mathrm{ord}}[|S|] + E_{\mathrm{ex}}[\mathds{1}_S] + E_{\tau}[\tau(S)] + E_{\mathrm{pos}}[\pi(S)]$, a feature-MLP projection summed with learned lookups $E$ for the four token attributes and its linearized position $\pi(S)$~\citep{ying2021graphormer,rampasek2022gps}; the stacked embeddings form $\mathbf{H}_c^{(0)} \in \mathbb{R}^{|\mathcal{S}_c| \times d}$. Linearization, however, discards $\mathcal{R}_c$, so we re-inject inclusion topology and composition labels by augmenting self-attention with pairwise structural biases at every layer.

\noindent\textbf{Structural Attention Biases.}
At layer $\ell \in \{1, \ldots, L\}$, each self-attention layer augments the standard attention logit between tokens $i$ and $j$ with a learned pairwise bias~\citep{ying2021graphormer,rampasek2022gps,kreuzer2021san},
\begin{equation}
\label{eq:attn-bias}
a_{ij} = \frac{\mathbf{q}_i^{\top} \mathbf{k}_j}{\sqrt{d_h}} + b^{\mathrm{dir}}_{ij} + b^{\mathrm{comp}}_{ij} + b^{\mathrm{aux}}_{ij},
\end{equation}
where $\mathbf{q}_i, \mathbf{k}_j \in \mathbb{R}^{d_h}$ are the per-head query/key projections of the layer-$\ell$ token states $\mathbf{h}^{(\ell)}_i, \mathbf{h}^{(\ell)}_j$ ($d_h$ is the head dimension). The \emph{direction} bias $b^{\mathrm{dir}}_{ij}$ encodes the inclusion edge between $S_i$ and $S_j$, taking one of three values (no edge, $S_i$ is a parent of $S_j$, or $S_j$ is a parent of $S_i$), making the topology of $\mathcal{R}_c$ visible to attention. The \emph{composition} bias $b^{\mathrm{comp}}_{ij}$ is indexed by the joint existence roles $(\tau(S_i), \tau(S_j))$; on adjacent-order pairs this directly identifies the composition label $\mathrm{comp}(S_i, S_j) \in \{\textsc{Comp}, \textsc{Emer}, \textsc{Inhib}\}$, providing the channel through which compositionality enters attention. The auxiliary term $b^{\mathrm{aux}}_{ij}$ folds in non-load-bearing structural priors (order, overlap, sibling) detailed in Appendix~\ref{sec:aux-biases}. All biases are learned, head-specific, and added at every layer, yielding contextualized representations $\mathbf{H}_c^{(L)} \in \mathbb{R}^{|\mathcal{S}_c| \times d}$ after $L$ stacked layers.

\subsection{Compositionality-Aware Pretraining}
\label{sec:method-pretrain}

We pretrain the encoder with a masked reconstruction objective~\citep{hou2022graphmae,xia2023molebert,hou2023graphmae2} that jointly optimizes two token-wise losses, turning the compositional labels (Eq.~\eqref{eq:comp-label}) into an explicit training signal. For each inclusion DAG $\mathcal{G}_c$, we randomly mask a fraction $\rho$ of non-center tokens $\mathcal{M} \subset \mathcal{S}_c \setminus \{\{c\}\}$ by replacing their input embedding with a learnable vector $\mathbf{e}_{\mathrm{mask}}$ and forward the corrupted sequence through the encoder, yielding contextualized representations $\tilde{\mathbf{h}}_S^{(L)}$. Two lightweight prediction heads on $\tilde{\mathbf{h}}_S^{(L)}$ produce $\hat{\mathds{1}}_S, \hat{\mathbf{z}}_S$, while an EMA teacher~\citep{grill2020byol,caron2021dino,baevski2022data2vec} produces the regression target $\mathbf{z}_S = \mathrm{Teacher}(\mathbf{x}_S)$. The \emph{semantic} loss $\mathcal{L}_{\mathrm{sem}}$ regresses each masked-token prediction against the EMA-teacher target. The \emph{existence} loss $\mathcal{L}_{\mathrm{exist}}$ supervises $\mathds{1}_S$ at every token. The total objective combines both:
\begin{equation}
\label{eq:pretraining}
\mathcal{L} = \underbrace{\tfrac{1}{|\mathcal{M}|}\!\!\sum_{S \in \mathcal{M}}\!\! \|\hat{\mathbf{z}}_S - \mathbf{z}_S\|_2^2}_{\mathcal{L}_{\mathrm{sem}}} + \lambda_{\mathrm{exist}}\!\underbrace{\tfrac{1}{|\mathcal{S}_c|}\!\!\sum_{S \in \mathcal{S}_c}\!\!\mathrm{BCE}(\hat{\mathds{1}}_S, \mathds{1}_S)}_{\mathcal{L}_{\mathrm{exist}}}.
\end{equation}
The pretrained encoder transfers to node-level and edge-level downstream tasks via lightweight MLP readouts and end-to-end finetuning, with full readout definitions in Appendix~\ref{sec:downstream-readouts}.
\section{Generality on Hypergraph Benchmarks}
\label{sec:exp-general}


\providecolor{rkbest}{HTML}{B91C2C}
\providecolor{rksecond}{HTML}{1F4E8F}
\providecolor{rkthird}{HTML}{2A7A3F}
\providecommand{\rkone}[1]{{\boldmath\textcolor{rkbest}{#1}}}
\providecommand{\rktwo}[1]{{\boldmath\textcolor{rksecond}{#1}}}
\providecommand{\rkthree}[1]{{\boldmath\textcolor{rkthird}{#1}}}

\begin{table}[!t]
\caption{\textbf{Hypergraph node classification.} Test accuracy (\%, mean$_{\pm\text{std}}$) over multiple seeds on eight benchmarks, split into homophilic (left) and heterophilic (right). AR denotes each method's mean rank across the datasets (lower is better). \rkone{Best} / \rktwo{second} / \rkthree{third} per column. }
\label{tab:node-classification}
\centering
\small
\setlength{\tabcolsep}{3pt}
\resizebox{\textwidth}{!}{%
\begin{tabular}{l cccc | cccc | c}
\toprule
& \multicolumn{4}{c|}{\textbf{Homophilic}} & \multicolumn{4}{c|}{\textbf{Heterophilic}} & \textbf{AR}\,$\downarrow$ \\
\cmidrule(lr){2-5} \cmidrule(lr){6-9}
Method & Citeseer & Pubmed & Cora-CA & DBLP-CA & Congress & Senate & Walmart & House & \\
\midrule
MLP                                                & $72.7_{\pm 1.6}$ & $87.5_{\pm 0.5}$ & $74.3_{\pm 1.9}$ & $84.8_{\pm 0.2}$ & --                & --                & $45.5_{\pm 0.2}$ & $67.9_{\pm 2.3}$ & $12.5$ \\
CEGAT~\citep{chien2022allset}                      & $70.6_{\pm 1.3}$ & $86.8_{\pm 0.4}$ & $76.2_{\pm 1.2}$ & $88.6_{\pm 0.3}$ & --                & --                & $51.1_{\pm 0.6}$ & $69.1_{\pm 3.0}$ & $12.8$ \\
\midrule
HGNN~\citep{feng2019hgnn}                          & $72.4_{\pm 1.2}$ & $86.4_{\pm 0.4}$ & $82.6_{\pm 1.7}$ & $91.0_{\pm 0.2}$ & $91.3_{\pm 1.2}$ & $48.6_{\pm 4.5}$ & $62.0_{\pm 0.2}$ & $61.4_{\pm 3.0}$ & $11.6$ \\
HyperGCN~\citep{yadati2019hypergcn}                & $71.3_{\pm 0.8}$ & $82.8_{\pm 8.7}$ & $79.5_{\pm 2.1}$ & $89.4_{\pm 0.3}$ & $55.1_{\pm 2.0}$ & $42.5_{\pm 3.7}$ & $44.7_{\pm 2.8}$ & $48.3_{\pm 2.9}$ & $13.6$ \\
HNHN~\citep{dong2020hnhn}                          & $72.6_{\pm 1.6}$ & $86.9_{\pm 0.3}$ & $77.2_{\pm 1.5}$ & $86.8_{\pm 0.3}$ & $53.4_{\pm 1.5}$ & $50.9_{\pm 6.3}$ & $47.2_{\pm 0.4}$ & $67.8_{\pm 2.6}$ & $12.4$ \\
UniGCNII~\citep{huang2021unignn}                   & $73.1_{\pm 2.2}$ & $88.3_{\pm 0.4}$ & $83.6_{\pm 1.1}$ & $91.7_{\pm 0.2}$ & \rktwo{$94.8_{\pm 0.8}$} & $49.3_{\pm 4.3}$ & $54.5_{\pm 0.4}$ & $67.3_{\pm 2.6}$ & $8.4$ \\
AllDeepSets~\citep{chien2022allset}                & $70.8_{\pm 1.6}$ & $88.8_{\pm 0.3}$ & $82.0_{\pm 1.5}$ & $91.3_{\pm 0.3}$ & $91.8_{\pm 1.5}$ & $48.2_{\pm 5.7}$ & $64.6_{\pm 0.3}$ & $67.8_{\pm 2.4}$ & $9.8$ \\
AllSetTransformer~\citep{chien2022allset}          & $73.1_{\pm 1.2}$ & $88.7_{\pm 0.4}$ & $83.6_{\pm 1.5}$ & $91.5_{\pm 0.2}$ & $92.2_{\pm 1.1}$ & $51.8_{\pm 5.2}$ & $65.5_{\pm 0.3}$ & $69.3_{\pm 2.2}$ & $7.3$ \\
ED-HNN~\citep{wang2023edhnn}                       & $73.7_{\pm 1.4}$ & \rktwo{$89.0_{\pm 0.5}$} & $84.0_{\pm 1.6}$ & $91.9_{\pm 0.2}$ & \rkone{$95.0_{\pm 1.0}$} & $64.8_{\pm 5.1}$ & $66.9_{\pm 0.4}$ & $72.5_{\pm 2.3}$ & \rkthree{$4.9$} \\
\midrule
SheafHyperGNN~\citep{duta2023sheaf}                & $74.7_{\pm 1.2}$ & $87.7_{\pm 0.6}$ & \rktwo{$85.5_{\pm 1.3}$} & $91.6_{\pm 0.2}$ & $91.8_{\pm 1.6}$ & $68.7_{\pm 4.7}$ & --                & $73.6_{\pm 2.3}$ & $6.0$ \\
PhenomNN~\citep{wang2023phenomnn}                  & \rktwo{$75.1_{\pm 1.6}$} & $88.1_{\pm 0.5}$ & \rkone{$85.8_{\pm 0.9}$} & \rkthree{$91.9_{\pm 0.2}$} & $88.2_{\pm 1.5}$ & $67.7_{\pm 5.2}$ & $63.4_{\pm 0.4}$ & $70.7_{\pm 2.4}$ & $6.0$ \\
FrameHGNN~\citep{li2025framehgnn}                  & $74.7_{\pm 2.1}$ & $88.7_{\pm 0.4}$ & \rkthree{$85.2_{\pm 0.7}$} & --                & --                & $67.6_{\pm 5.3}$ & --                & $72.8_{\pm 2.2}$ & $5.2$ \\
KHGNN~\citep{xie2025khgnn}                         & $74.8_{\pm 1.1}$ & $88.5_{\pm 0.5}$ & --                & $91.2_{\pm 0.4}$ & $92.6_{\pm 1.0}$ & $71.2_{\pm 3.9}$ & $65.2_{\pm 0.8}$ & $74.3_{\pm 2.9}$ & $5.6$ \\
HealHGNN~\citep{sun2026healhgnn}                   & \rkthree{$75.1_{\pm 1.2}$} & \rkthree{$88.8_{\pm 0.3}$} & --                & \rktwo{$92.0_{\pm 0.3}$} & $93.5_{\pm 0.8}$ & \rktwo{$76.1_{\pm 4.1}$} & \rkthree{$68.2_{\pm 0.6}$} & \rkthree{$77.2_{\pm 2.3}$} & \rktwo{$2.9$} \\
\midrule
HGPM (no pretrain)                                 & $74.1_{\pm 1.1}$ & $87.5_{\pm 0.5}$ & $82.3_{\pm 1.7}$ & $91.1_{\pm 0.3}$ & $92.4_{\pm 0.6}$ & \rkthree{$75.2_{\pm 3.5}$} & \rktwo{$68.9_{\pm 0.3}$} & \rktwo{$77.8_{\pm 1.2}$} & $6.4$ \\
\textbf{HGPM}                                      & \rkone{$77.1_{\pm 1.2}$} & \rkone{$89.7_{\pm 0.5}$} & $85.1_{\pm 0.2}$ & \rkone{$92.2_{\pm 0.2}$} & \rkthree{$94.1_{\pm 1.0}$} & \rkone{$77.2_{\pm 4.8}$} & \rkone{$71.2_{\pm 0.5}$} & \rkone{$79.2_{\pm 1.8}$} & \rkone{$1.6$} \\
\bottomrule
\end{tabular}%
}
\vspace{-10pt}
\end{table}

\subsection{Setup}

We evaluate on eight standard hypergraph node classification benchmarks, following the suite established by \citet{chien2022allset} and \citet{wang2023edhnn}: four homophilic (Citeseer, Pubmed, Cora-CA, DBLP-CA) and four heterophilic (Congress, Senate, Walmart, House) benchmarks spanning co-citation, co-authorship, co-purchase, and political-collaboration domains with 50/25/25 splits (detailed in Appendix~\ref{sec:dataset-details}). We compare HGPM against a comprehensive set of models: MLP, CEGAT~\citep{chien2022allset}, HGNN~\citep{feng2019hgnn}, HyperGCN~\citep{yadati2019hypergcn}, HNHN~\citep{dong2020hnhn}, UniGCNII~\citep{huang2021unignn}, AllSet~\citep{chien2022allset}, ED-HNN~\citep{wang2023edhnn}), SheafHyperGNN~\citep{duta2023sheaf}, PhenomNN~\citep{wang2023phenomnn}, FrameHGNN~\citep{li2025framehgnn}, KHGNN~\citep{xie2025khgnn}, and HealHGNN~\citep{sun2026healhgnn}. For HGPM we report both \emph{HGPM (no pretrain)} trained from scratch and the pretrained-then-finetuned variant. HGPM hyperparameters are tuned by per-dataset random search; the details are presented in Appendix~\ref{sec:hyperparams}. All results are reported as mean test accuracy over ten random seeds. Despite operating on subset-token sequences, HGPM trains efficiently on a single A40 GPU: on smaller benchmarks (e.g., Cora-CA, Citeseer, Senate), pretraining completes within 2 hours and finetuning within 10 minutes; on larger benchmarks (e.g., Walmart), the full pretrain-and-finetune pipeline completes within 5 hours.

\subsection{Main Results}

Table~\ref{tab:node-classification} reports node classification accuracy on the eight benchmarks. HGPM achieves the best test accuracy on six of eight datasets and an average rank of $1.6$, well ahead of the second-strongest baseline (HealHGNN, AR $= 2.9$). The gap is largest on the heterophilic benchmarks (Senate, Walmart, House), where message-passing methods structurally struggle, indicating that the inclusion DAG representation transfers cleanly to hypergraphs whose subset structure deviates from clean homophily. Pretraining contributes meaningfully: removing it drops HGPM from AR $1.6$ to AR $6.4$, putting it on par with recent state-of-the-art encoders, which confirms that the masked subset reconstruction objective extracts compositional signal beyond what scratch training provides. Together, these results show that the inclusion DAG tokenizer is a general-purpose hypergraph representation.

\subsection{Model Spaces and Insights}
\label{sec:model-space}

HGPM exposes six design axes (subset tokens, inclusion DAG edges, the token encoder, structural attention biases, the pretraining reconstruction target, and token sequence order), jointly summarized in Table~\ref{tab:ablation-model-space}; per-axis numerical discussion is deferred to Appendix~\ref{sec:full-ablation}. 

\begin{table}[t]
\centering
\caption{\textbf{Ablation on the eight hypergraph benchmarks}, reported as homophilic and heterophilic group averages. Gray marks the HGPM default; full per-dataset breakdown in Appendix~\ref{sec:full-ablation}.}
\label{tab:ablation-model-space}
\footnotesize
\setlength{\tabcolsep}{4pt}

\begin{subtable}[t]{0.32\textwidth}
\centering
\caption{Subset Tokens}
\label{tab:ab-tokens}
\begin{tabular*}{\linewidth}{@{}l@{\extracolsep{\fill}}rr@{}}
\toprule
Setting & Homo & Hetero \\
\midrule
Observed hyperedges & $83.43$ & $75.46$ \\
Uncentered random   & $84.78$ & $77.51$ \\
No negatives        & $85.81$ & $79.61$ \\
Inclusion DAG       & \cellcolor{gray!20}$\mathbf{86.03}$ & \cellcolor{gray!20}$\mathbf{80.42}$ \\
\bottomrule
\end{tabular*}
\end{subtable}\hfill
\begin{subtable}[t]{0.32\textwidth}
\centering
\caption{Inclusion DAG Edges}
\label{tab:ab-edges}
\begin{tabular*}{\linewidth}{@{}l@{\extracolsep{\fill}}rr@{}}
\toprule
Setting & Homo & Hetero \\
\midrule
No edges            & $83.66$ & $76.66$ \\
Random edges        & $84.58$ & $78.15$ \\
Center-only         & $85.31$ & $79.23$ \\
Adjacent inclusion  & \cellcolor{gray!20}$\mathbf{86.03}$ & \cellcolor{gray!20}$\mathbf{80.42}$ \\
\bottomrule
\end{tabular*}
\end{subtable}\hfill
\begin{subtable}[t]{0.32\textwidth}
\centering
\caption{Token Encoder}
\label{tab:ab-encoder}
\begin{tabular*}{\linewidth}{@{}l@{\extracolsep{\fill}}rr@{}}
\toprule
Setting & Homo & Hetero \\
\midrule
Mean pooling        & $83.29$ & $76.29$ \\
GRU                 & $79.55$ & $72.91$ \\
Vanilla Transformer & $84.78$ & $77.99$ \\
Inclusion-aware     & \cellcolor{gray!20}$\mathbf{86.03}$ & \cellcolor{gray!20}$\mathbf{80.42}$ \\
\bottomrule
\end{tabular*}
\end{subtable}

\vspace{-5pt}

\begin{subtable}[t]{0.32\textwidth}
\centering
\caption{Structural Attention Biases}
\label{tab:ab-bias}
\begin{tabular*}{\linewidth}{@{}l@{\extracolsep{\fill}}rr@{}}
\toprule
Setting & Homo & Hetero \\
\midrule
No bias             & $84.78$ & $77.99$ \\
Composition only    & $85.71$ & $79.85$ \\
Direction $+$ aux   & $85.30$ & $79.10$ \\
Full bias           & \cellcolor{gray!20}$\mathbf{86.03}$ & \cellcolor{gray!20}$\mathbf{80.42}$ \\
\bottomrule
\end{tabular*}
\end{subtable}\hfill
\begin{subtable}[t]{0.32\textwidth}
\centering
\caption{Reconstruction Target}
\label{tab:ab-target}
\begin{tabular*}{\linewidth}{@{}l@{\extracolsep{\fill}}rr@{}}
\toprule
Setting & Homo & Hetero \\
\midrule
Existence label     & $83.34$ & $78.15$ \\
Raw feature         & $84.51$ & $79.13$ \\
Composition label   & $85.18$ & $79.77$ \\
TeacherMLP target   & \cellcolor{gray!20}$\mathbf{86.03}$ & \cellcolor{gray!20}$\mathbf{80.42}$ \\
\bottomrule
\end{tabular*}
\end{subtable}\hfill
\begin{subtable}[t]{0.32\textwidth}
\centering
\caption{Token Sequence Order}
\label{tab:ab-order}
\begin{tabular*}{\linewidth}{@{}l@{\extracolsep{\fill}}rr@{}}
\toprule
Setting & Homo & Hetero \\
\midrule
Ascending           & $85.31$ & $79.08$ \\
Shuffled within     & $85.64$ & $79.67$ \\
Fully shuffled      & $85.89$ & $80.16$ \\
Descending          & \cellcolor{gray!20}$\mathbf{86.03}$ & \cellcolor{gray!20}$\mathbf{80.42}$ \\
\bottomrule
\end{tabular*}
\end{subtable}
\vspace{-15pt}
\end{table}

\textbf{Insight 1: Compositional structure must be explicit, not derived.} Three independent architectural layers, namely token construction (Table~\ref{tab:ab-tokens}), edge selection (Table~\ref{tab:ab-edges}), and attention bias (Table~\ref{tab:ab-bias}), converge on the same conclusion: the COMP/EMER/INHIB regimes must be injected into the input rather than recovered by the encoder. Centered subset tokens beat raw hyperedges most decisively on heterophilic data; center-anchored inclusion edges substantially outperform random edges of identical count, so the \emph{content} of the inclusion DAG matters far more than its density; and the composition bias $b^{\mathrm{comp}}$ in Eq.~\eqref{eq:attn-bias} alone recovers about 75\% of the full-bias gap on every benchmark, the most robust finding in the design space. Linearization probes (Table~\ref{tab:ab-order}) reverse-confirm this picture: topological signal flows through the attention bias rather than the position embedding~\citep{ying2021graphormer,rampasek2022gps}, so compositionality is a bias-side property and not a sequence-side one.

\textbf{Insight 2: Inductive bias alignment matters more than model capacity.} A \emph{wrong} inductive bias is worse than no token-to-token interaction at all, and the same pattern recurs at two independent layers. In the subset-token encoder (Table~\ref{tab:ab-encoder}), the GRU baseline underperforms even mean pooling on every dataset because it imposes a sequential ordering on a permutation-invariant subset-token set~\citep{zaheer2017deepsets}; the inclusion-aware Transformer's advantage over the vanilla Transformer is therefore inclusion-aware refinement, not extra capacity. The edge-selection ablation (Table~\ref{tab:ab-edges}) shows the same dynamic from the opposite side: random edges matched in count to the adjacent inclusion edges substantially underperform center-anchored ones, so capacity-matched but bias-misaligned wiring leaves most of the signal on the table. Across the four architectural axes (a)--(d), the heterophilic benchmarks see roughly $1.6$--$2\times$ the impact of the homophilic ones, confirming that HGPM's value is largest precisely where message-passing methods structurally fail.

\textbf{Insight 3: Pretraining target richness unlocks the pretrain gap.} A misspecified pretraining target is strictly worse than no pretraining at all (Table~\ref{tab:ab-target}). Existence-label reconstruction $\mathcal{L}_{\mathrm{exist}}$ alone yields negative transfer on every benchmark, while only the TeacherMLP target $\mathcal{L}_{\mathrm{sem}}$ recovers the full pretrain-vs-no-pretrain gap; raw subset features and discrete composition labels lie in between. Critically, the ordering existence $\to$ raw feature $\to$ composition label $\to$ TeacherMLP is monotone on every benchmark, ruling out dataset-specific noise and validating $\mathcal{L}_{\mathrm{sem}}$ as the primary pretraining loss. The mechanism is that continuous, high-level semantic alignment~\citep{baevski2022data2vec,grill2020byol,hou2022graphmae} carries information beyond the saturation point of any discrete reconstruction target: each step in the hierarchy adds richness the previous step could not encode.

\section{Compositionality on Drug Interaction}
\label{sec:exp-drugs}

\subsection{Setup}

We evaluate HGPM on two pharmacovigilance ecosystems whose reporting conventions, drug vocabularies, and label distributions differ substantially, so that strong cross-corpus performance reflects robustness to the heterogeneity real adverse-event surveillance encounters across jurisdictions. HODDI~\citep{wang2025hoddi}, derived from the U.S. FAERS, covers $1{,}821$ drugs and a hierarchical $1{,}710$-class side-effect ontology with dense per-drug supervision; JADER~\citep{pmda_jader}, the Japanese Adverse Drug Event Report database maintained by Japan's PMDA, covers $4{,}230$ drugs over a more concentrated label space of $100$ retained classes (per-dataset statistics in Appendix~\ref{sec:dataset-drugs}). Each dataset supports two tasks: \emph{edge classification}, predicting the dominant side-effect class of an observed drug combination, and \emph{link prediction}, a binary task distinguishing observed combinations from constructed negatives. Both use $50/25/25$ random train/validation/test splits. We compare HGPM against baselines spanning the methodological spectrum currently applied to drug interaction modeling: flat encoders that aggregate regimens as bag-of-drugs (MLP, GAT~\citep{velickovic2018gat}, GCN~\citep{kipf2017gcn}) and hypergraph encoders that treat each regimen as an atomic hyperedge (HGNN~\citep{feng2019hgnn}, AllSetTransformer~\citep{chien2022allset}, ED-HNN~\citep{wang2023edhnn}, KHGNN~\citep{xie2025khgnn}). For edge classification we report F1 and AUROC; for link prediction we report AUROC and AUPRC. All metrics are averaged over ten random seeds.


\providecolor{rkbest}{HTML}{B91C2C}
\providecolor{rksecond}{HTML}{1F4E8F}
\providecolor{rkthird}{HTML}{2A7A3F}
\providecommand{\rkone}[1]{{\boldmath\textcolor{rkbest}{#1}}}
\providecommand{\rktwo}[1]{{\boldmath\textcolor{rksecond}{#1}}}
\providecommand{\rkthree}[1]{{\boldmath\textcolor{rkthird}{#1}}}

\begin{table}[!t]
\caption{\textbf{Drug interaction prediction.} Test performance (\%, mean$_{\pm\text{std}}$) on HODDI and JADER, each with edge classification and binary link prediction. AR is the mean rank across the eight metric columns (lower is better). \rkone{Best} / \rktwo{second} / \rkthree{third} per column.}
\label{tab:drug-benchmark}
\centering
\small
\setlength{\tabcolsep}{2.5pt}
\resizebox{\textwidth}{!}{%
\begin{tabular}{l cc cc | cc cc | c}
\toprule
& \multicolumn{4}{c|}{\textbf{HODDI}} & \multicolumn{4}{c|}{\textbf{JADER}} & \\
\cmidrule(lr){2-5} \cmidrule(lr){6-9}
& \multicolumn{2}{c}{Edge Class.} & \multicolumn{2}{c|}{Link Pred.} & \multicolumn{2}{c}{Edge Class.} & \multicolumn{2}{c|}{Link Pred.} & \textbf{AR}\,$\downarrow$ \\
\cmidrule(lr){2-3} \cmidrule(lr){4-5} \cmidrule(lr){6-7} \cmidrule(lr){8-9}
Method & F1 & AUROC & AUROC & AUPRC & F1 & AUROC & AUROC & AUPRC & \\
\midrule
MLP                                                & $88.3_{\pm 0.1}$ & $93.2_{\pm 0.1}$ & $96.6_{\pm 0.1}$ & $97.3_{\pm 0.1}$ & $74.8_{\pm 0.3}$ & $82.3_{\pm 0.0}$ & $57.9_{\pm 0.1}$ & $56.4_{\pm 0.3}$ & $5.8$ \\
GAT~\citep{velickovic2018gat}                      & $88.3_{\pm 0.3}$ & $93.3_{\pm 0.3}$ & $95.3_{\pm 0.1}$ & $95.9_{\pm 0.1}$ & $72.1_{\pm 0.6}$ & $79.3_{\pm 0.1}$ & $56.3_{\pm 0.2}$ & $55.1_{\pm 0.3}$ & $7.8$ \\
GCN~\citep{kipf2017gcn}                            & $89.5_{\pm 0.1}$ & $94.9_{\pm 0.0}$ & $96.1_{\pm 0.1}$ & $97.0_{\pm 0.1}$ & $73.8_{\pm 0.2}$ & $81.1_{\pm 0.2}$ & $57.8_{\pm 0.0}$ & $56.3_{\pm 0.1}$ & $6.1$ \\
\midrule
HGNN~\citep{feng2019hgnn}                          & $89.5_{\pm 0.2}$ & $94.9_{\pm 0.1}$ & $96.1_{\pm 0.1}$ & $97.0_{\pm 0.1}$ & $74.3_{\pm 0.4}$ & $81.2_{\pm 0.2}$ & $57.7_{\pm 0.2}$ & $56.2_{\pm 0.1}$ & $6.1$ \\
AllSetTransformer~\citep{chien2022allset}          & \rkthree{$90.4_{\pm 0.1}$} & $94.9_{\pm 0.0}$ & \rkthree{$96.9_{\pm 0.1}$} & $97.6_{\pm 0.1}$ & \rkthree{$75.4_{\pm 0.3}$} & $83.4_{\pm 0.1}$ & \rktwo{$59.6_{\pm 0.1}$} & $57.5_{\pm 0.1}$ & $3.6$ \\
ED-HNN~\citep{wang2023edhnn}                       & $90.2_{\pm 0.1}$ & \rkthree{$95.6_{\pm 0.0}$} & $96.9_{\pm 0.1}$ & \rktwo{$97.8_{\pm 0.1}$} & $75.3_{\pm 0.2}$ & \rkthree{$83.5_{\pm 0.1}$} & $59.1_{\pm 0.1}$ & \rktwo{$58.2_{\pm 0.1}$} & \rkthree{$3.3$} \\
KHGNN~\citep{xie2025khgnn}                         & \rktwo{$90.5_{\pm 0.1}$} & \rktwo{$95.7_{\pm 0.0}$} & \rktwo{$97.2_{\pm 0.1}$} & \rkthree{$97.6_{\pm 0.1}$} & \rktwo{$75.7_{\pm 0.3}$} & \rktwo{$83.7_{\pm 0.1}$} & \rkthree{$59.4_{\pm 0.1}$} & \rkthree{$57.9_{\pm 0.1}$} & \rktwo{$2.4$} \\
\midrule
\textbf{HGPM}                                      & \rkone{$92.9_{\pm 0.1}$} & \rkone{$96.8_{\pm 0.0}$} & \rkone{$97.9_{\pm 0.0}$} & \rkone{$98.2_{\pm 0.0}$} & \rkone{$78.4_{\pm 0.2}$} & \rkone{$86.2_{\pm 0.1}$} & \rkone{$63.2_{\pm 0.1}$} & \rkone{$61.5_{\pm 0.1}$} & \rkone{$1.0$} \\
\bottomrule
\end{tabular}%
}
\vspace{-15pt}
\end{table}

\subsection{Main Results}

Table~\ref{tab:drug-benchmark} reports two clinically distinct queries: \emph{edge classification} (post-market triage: which adverse-effect category dominates an observed regimen) and \emph{link prediction} (screening: whether a candidate combination is safety-relevant at all). HGPM achieves the best on all eight metric columns (AR $= 1.0$, ahead of KHGNN at $2.4$ and ED-HNN at $3.3$). On the saturated HODDI benchmark, HGPM advances edge F1 from $90.5$ to $92.9$; on the harder JADER, whose drug vocabulary more than doubles ($4{,}230$ vs.\ $1{,}821$), link AUROC rises from $59.6$ to $63.2$. Gains track regime difficulty rather than concentrating on saturated supervision, suggesting the inductive bias is doing real work. The reason is pharmacological: the adverse-event profile of a $k$-drug regimen is not the union of its members' single-drug profiles, since pairwise sub-regimens within it carry distinct synergy, antagonism, or dose-modulation signatures~\citep{chou2006synergism,tatonetti2012ddi} that the full regimen inherits. Flat hyperedge and clique-expansion encoders collapse this internal structure into one object; HGPM's inclusion DAG keeps every observed lower-order sub-regimen as a first-class informant, mirroring the compositional reasoning clinicians apply to novel polypharmacy.

\begin{wrapfigure}{r}{0.5\linewidth}
\centering
\vspace{-1.2em}
\includegraphics[width=\linewidth]{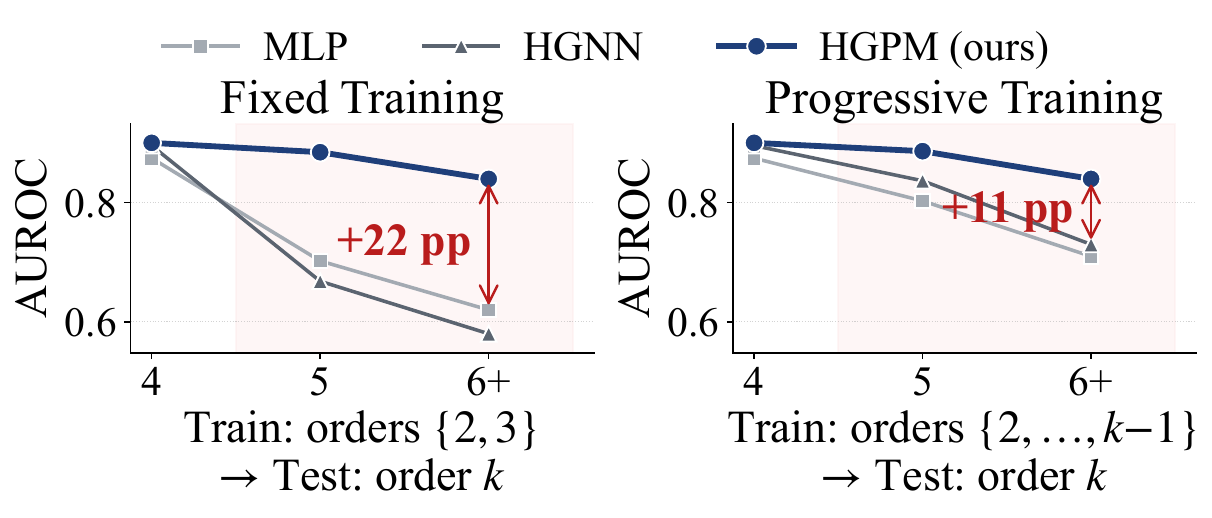}
\caption{\textbf{Cross-order generalization on HODDI.} We report the test AUROC for \emph{Fixed Training} on orders $\{2, 3\}$ and \emph{Progressive Training} on $\{2, \ldots, k{-}1\}$ for each test order $k$.}
\label{fig:cross-order}
\vspace{-1em}
\end{wrapfigure}

\noindent\textbf{Cross-Order Generalization.}
Higher-order drug interaction data is intrinsically heavy-tailed: the combinatorial space grows as $\binom{|\mathcal{V}|}{k}$ and reporting bias toward common regimens leaves the tail sparse. HODDI, for instance, contains roughly $49{,}000$ size-$2$ combinations but only $449$ size-$8$ regimens. The clinically actionable prescriptions are exactly the rare ones (novel polypharmacy regimens, off-label combinations, post-marketing surveillance leads), so cross-order generalization is the deployment scenario itself, not a stress test. Figure~\ref{fig:cross-order} reports two regimes on HODDI: \emph{Fixed Training} on orders $\{2, 3\}$, and \emph{Progressive Training} on $\{2, \ldots, k{-}1\}$ for each test order $k$. Hyperedge-centric baselines, whose order-$k$ predictions can only be supervised through actual order-$k$ training edges, collapse under Fixed and trail HGPM by up to $22$\,pp at order $6{+}$; under Progressive the gap narrows to $11$\,pp. HGPM's curve is essentially flat across both regimes, because the inclusion DAG routes every high-order target through a chain of inclusion edges to lower-order subsets that are dense in training, transporting the dense low-order signal into the sparse tail.

\begin{figure}[!t]
\centering
\includegraphics[width=\textwidth]{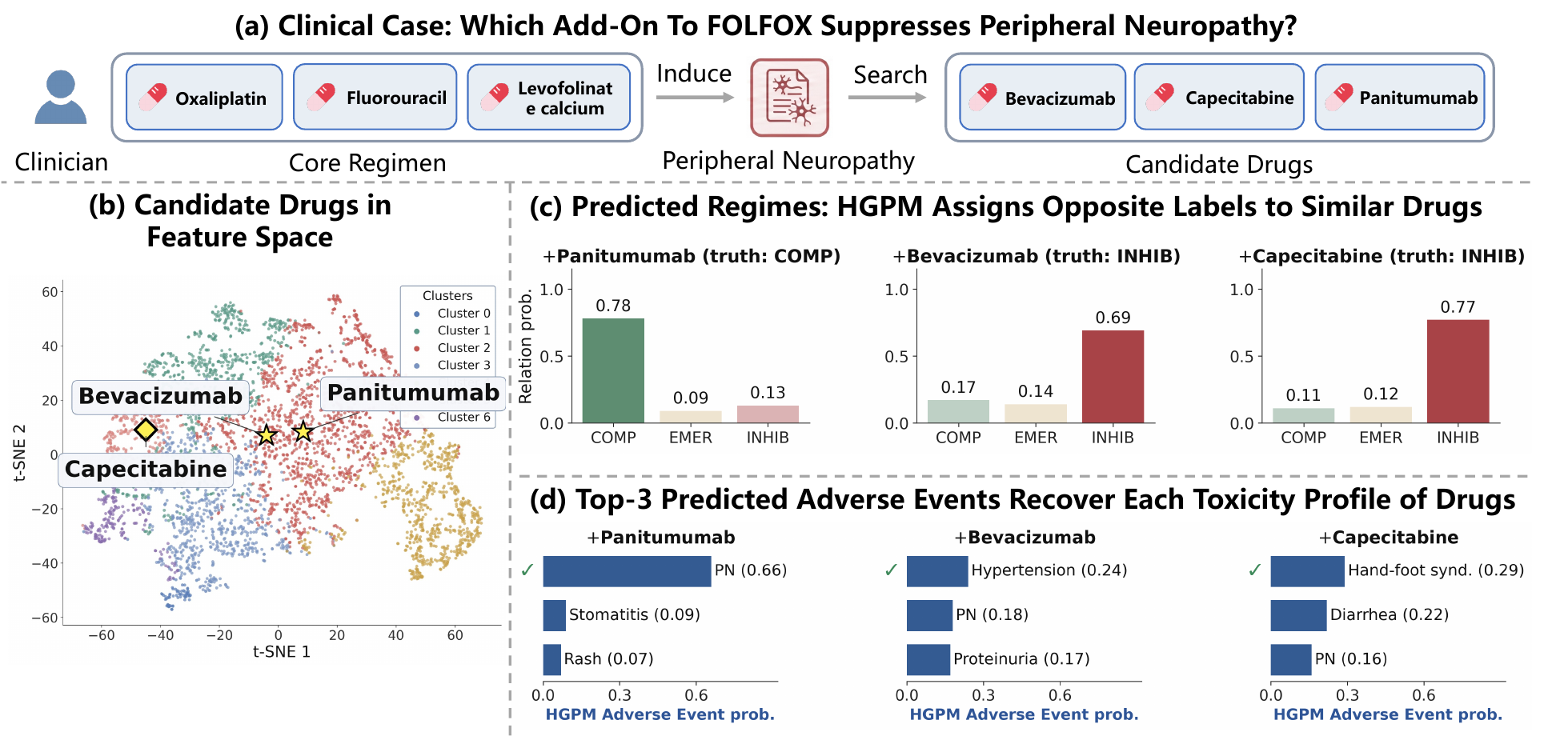}
\caption{\textbf{Case study} on selecting a suppressive add-on for peripheral neuropathy.}
\label{fig:case}
\vspace{-10pt}
\end{figure}

\subsection{Case Study}
\label{sec:exp-drugs-case}

We close with a polypharmacy case study. A JADER~\citep{pmda_jader} peripheral-neuropathy report documents a patient on FOLFOX~\citep{degramont2000folfox} (oxaliplatin, fluorouracil, levofolinate calcium), whose neurotoxicity is dominated by oxaliplatin~\citep{cavaletti2010cipn}. Adding a fourth drug requires distinguishing additions that preserve the side effect from those that suppress it (Figure~\ref{fig:case}a). Three candidates co-occur with this core in JADER: \emph{panitumumab}, \emph{bevacizumab}, and \emph{capecitabine}.
Panitumumab and bevacizumab look near-identical to a feature-similarity encoder: both are monoclonal antibodies in the same drug-feature cluster (Figure~\ref{fig:case}b). Yet the data records opposite outcomes: FOLFOX\,+\,panitumumab manifests peripheral neuropathy (label $1$, COMP), while FOLFOX\,+\,bevacizumab and FOLFOX\,+\,capecitabine suppress it (label $0$, INHIB). A feature-similarity predictor cannot resolve this; the contrast is readable only from the joint observation pattern of the 3-drug subset and its 4-drug superset.

For clinical decision support, we train two heads on top of HGPM's pretrained encoder: a relation head over $\{\text{COMP}, \text{EMER}, \text{INHIB}\}$ that types the candidate-regimen interaction, and an adverse-event head over the JADER side-effect vocabulary that ranks the resulting toxicities. The relation head separates the three branches correctly (Figure~\ref{fig:case}c): COMP$=0.78$ for panitumumab, INHIB$=0.69$ for bevacizumab, INHIB$=0.77$ for capecitabine. Bevacizumab is feature-similar to panitumumab yet flagged as the inhibitor, read from indicator-pair structure rather than drug embeddings, while the feature-similarity baseline collapses both inhibitors onto the compositional class (Figure \ref{fig:case-link-score}). Going beyond the binary class, the adverse-event head (Figure~\ref{fig:case}d) recovers each candidate's drug-class signature without any clinical labels: peripheral neuropathy for panitumumab, hypertension for the anti-VEGF bevacizumab, and hand-foot syndrome for the fluoropyrimidine capecitabine. The baseline collapses all three onto a shared neuropathy prior (Figure \ref{fig:case-baseline-ae}). 

\section{Conclusion}
\label{sec:conclusion}

We introduced HGPM, a hypergraph learning framework that shifts supervision from hyperedge existence to \emph{interaction compositionality}: the joint pattern of subset presence and absence that distinguishes higher-order relations from pairwise ones. HGPM tokenizes subsets as nodes of an inclusion DAG with composition-labeled edges and trains an inclusion-aware Transformer under a masked reconstruction objective. On eight hypergraph benchmarks and two drug-interaction corpora, HGPM matches or exceeds state-of-the-art methods. Two limitations remain: (1) The inclusion DAG grows combinatorially with order, making intractably computation; it is possible to sample the sub-DAG to mitigate the issue. (2) HGPM is per-target: each prediction builds and processes its own DAG, so whole-graph inference on large graphs requires per-node sampling. However, since the learned component is a standard Transformer, existing Transformer inference frameworks apply directly, keeping the practical cost manageable. Full discussion is in Appendix~\ref{sec:impact}.

{
\small
\bibliographystyle{plainnat}
\bibliography{refs}
}

\newpage\appendix
\section{Impact Statement}
\label{sec:impact}

This work introduces the Hypergraph Pattern Machine, a framework that introduces \emph{interaction compositionality} as a supervision target for hypergraph learning, distinguishing whether a higher-order relation is compositional, emergent, or inhibitory with respect to its subsets.
By making the compositionality structure of hypergraph data an explicit learning object, our framework aims to provide more interpretable and clinically actionable signals for higher-order interaction prediction, particularly in polypharmacy.
As machine learning systems for higher-order relational data are increasingly applied to clinical, biological, and societal contexts, it is important to consider the broader implications of advances in this representational paradigm.

\paragraph{Potential Benefits.}
HGPM may benefit several scientific and clinical domains.
In polypharmacy and pharmacology~\citep{wang2025hoddi}, the explicit \textsc{Comp}/\textsc{Emer}/\textsc{Inhib} regimes provide structurally distinct decision-support signals: compositional triples can in principle be simplified, emergent triples flag combinations whose members are jointly required, and inhibitory triples identify drugs that disrupt existing interactions---each pointing to a different prescription pattern.
In drug discovery, supervision on inclusion-hierarchy compositionality may surface higher-order functional patterns invisible to message-passing models, complementing existing pairwise drug--drug interaction pipelines.
Beyond the drug domain, HGPM applies to any higher-order relational data---recommendation, social-network analysis, and biological interaction modeling---where compositionality between observed and absent subsets carries scientifically meaningful signal.
More broadly, our framework contributes a shift in hypergraph learning from message passing on observed structure toward representations that encode the joint pattern of presence and absence, providing a principled interface between hypergraph machine learning and the structural questions practitioners actually care about.

\paragraph{Risks and Misuse Considerations.}
As with many advances in clinically adjacent machine learning, improved higher-order interaction prediction may lower barriers to misuse.
Mispredictions in clinical settings could affect patient care if HGPM outputs are used without physician oversight; training data such as HODDI~\citep{wang2025hoddi} is known to underreport rare adverse events, and HGPM may inherit this bias.
The same model that flags hazardous polypharmacy combinations could in principle be inverted to identify adversarial combinations with intentionally harmful effects, although our framework does not introduce new generative objectives beyond existing interaction-prediction pipelines.
We therefore emphasize that responsible deployment should follow established practices in clinical decision support, regulatory compliance, and dual-use risk assessment, and that any clinical use should retain human oversight at every step of the prescription pipeline.

\paragraph{Limitations and Scope.}
HGPM focuses on representation and supervision for hypergraph data with binary observation indicators, and does not directly address weighted, signed, or multi-relational hypergraphs.
Its computational cost grows with the size of the higher-order combinations being scored; the bounded per-order sampling keeps the inclusion DAG tractable but introduces approximation for very large combinations (e.g., $10+$-drug regimens).
Empirical results depend on the quality and coverage of training data, and validation on a fixed set of benchmarks does not substitute for experimental verification or domain expertise.
Finally, the compositionality structure HGPM supervises on is well-defined only for static observation indicators, and does not capture temporal dynamics, dose dependence, or patient-specific factors that affect real polypharmacy outcomes.

\paragraph{Future Directions.}
Future work may extend HGPM to weighted, signed, or multi-relational hypergraphs, where the binary observation indicator generalizes to richer label spaces.
Scaling pretraining to larger higher-order interaction corpora may improve cross-domain generalization, and integrating compositionality-aware representations with pharmacokinetic or mechanistic simulators could bridge data-driven prediction and mechanistic understanding.
From a societal perspective, developing evaluation protocols, dataset-governance practices, and clinical-deployment safeguards for higher-order interaction models will be important to ensure that the representational advances introduced here translate into responsible scientific and clinical use.

\section{Theoretical Analysis}
\label{sec:theory}

This appendix collects the formal results that ground HGPM's design. Section~\ref{sec:thm-coboundary} shows that the composition labels $\textsc{Comp}/\textsc{Emer}/\textsc{Inhib}$ of Eq.~\eqref{eq:comp-label} are exactly the values of the discrete forward-difference (cellular coboundary) of the hyperedge indicator on the inclusion lattice, with a corollary characterizing simplicial complexes as the EMER-free stratum. Section~\ref{sec:thm-mp-separation} establishes that hyperedge-MP encoders cannot represent the compositionality structure $\mathcal{C}(\mathcal{H})$: there exists a hypergraph pair on which all such encoders collapse to identical features but $\mathcal{C}(\mathcal{H})$ differs. Section~\ref{sec:thm-hgpm-separation} then exhibits a non-isomorphic, WL-equivalent hypergraph pair on which HGPM's centered representation distinguishes both, generically over weights.

\paragraph{Notational Conventions Used Throughout.}
$\mathcal{H} = (\mathcal{V}, \mathcal{E}, \mathbf{X})$ is a hypergraph with vertex set $\mathcal{V}$, hyperedge set $\mathcal{E} \subseteq 2^{\mathcal{V}}$, and node features $\mathbf{X}$. The hyperedge indicator is $\mathds{1}_S = \mathds{1}[S \in \mathcal{E}]$, with $\mathds{1}_\emptyset = 0$ by convention ($\emptyset$ is never a hyperedge). The compositionality structure $\mathcal{C}(\mathcal{H})$ collects covering pairs $(S, S')$ with $S \subset S'$, $|S'| = |S|+1$, and not-both-absent indicators, labeled by Eq.~\eqref{eq:comp-label}. The set of all covering pairs is denoted
\[
   \mathrm{Cov}(\mathcal{V}) = \{(S, S') : S \subset S' \subseteq \mathcal{V},\ |S'| = |S| + 1\},
\]
and we write $\mathrm{Cov}_+(\mathcal{V}) = \{(S, S') \in \mathrm{Cov}(\mathcal{V}) : S \neq \emptyset\}$ for covers with non-empty bottom face. We use ``covering pair'' (cover) and ``adjacent-order pair'' interchangeably throughout this appendix; the latter is the term used in the main text.

\subsection{Mathematical Identity of Compositionality}
\label{sec:thm-coboundary}

The forward-difference (cellular coboundary) operator $\delta$ acts on functions $f : 2^{\mathcal{V}} \to \mathbb{R}$ as
\[
   (\delta f)(S, S') = f(S') - f(S), \qquad (S, S') \in \mathrm{Cov}(\mathcal{V}).
\]
Applied to the hyperedge indicator viewed as a function $\mathds{1}_{(\cdot)} : S \mapsto \mathds{1}_S$, this yields a $\{-1, 0, +1\}$-valued $1$-cochain on the Hasse cover.

\begin{theorem}[Coboundary Identity]
\label{thm:comp-coboundary}
For every $(S, S') \in \mathrm{Cov}(\mathcal{V})$:

\smallskip
\textnormal{(a) (\emph{Local bijection}.)} $(\delta \mathds{1}_{(\cdot)})(S, S') \in \{-1, 0, +1\}$, and
\[
\begin{array}{rcl}
\mathrm{comp}(S, S') = \textsc{Emer}  &\iff& (\delta \mathds{1}_{(\cdot)})(S, S') = +1, \\
\mathrm{comp}(S, S') = \textsc{Inhib} &\iff& (\delta \mathds{1}_{(\cdot)})(S, S') = -1, \\
\mathrm{comp}(S, S') = \textsc{Comp}  &\iff& (\delta \mathds{1}_{(\cdot)})(S, S') = 0\ \text{and}\ \mathds{1}_S = 1,
\end{array}
\]
with both-absent pairs corresponding to $(\delta \mathds{1}_{(\cdot)})(S, S') = 0$ and $\mathds{1}_S = 0$.

\smallskip
\textnormal{(b) (\emph{Information equivalence}.)} Given $\mathcal{V}$ and the anchor $\mathds{1}_\emptyset = 0$, the indicator $\mathds{1}_{(\cdot)}$ is uniquely recoverable from $\mathcal{C}(\mathcal{H})$ by chain-induction along any saturated chain in $(2^{\mathcal{V}}, \subseteq)$.
\end{theorem}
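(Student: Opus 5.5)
Part (a) is a finite case analysis on the indicator pair $(\mathds{1}_S, \mathds{1}_{S'}) \in \{0,1\}^2$. Since both indicators take values in $\{0,1\}$, their difference $(\delta \mathds{1}_{(\cdot)})(S,S') = \mathds{1}_{S'} - \mathds{1}_S$ lies in $\{-1,0,+1\}$. I would tabulate the four cases against Eq.~\eqref{eq:comp-label}.
\begin{itemize}
\item $(0,1)$ gives $+1$, which is \textsc{Emer}.
\item $(1,0)$ gives $-1$, which is \textsc{Inhib}.
\item $(1,1)$ gives $0$ with $\mathds{1}_S=1$, which is \textsc{Comp}.
\item $(0,0)$ gives $0$ with $\mathds{1}_S=0$, which is the excluded both-absent case.
\end{itemize}
The map from indicator pairs to outcomes is injective. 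The value $\pm 1$ pins down the pair uniquely, and the value $0$ splits into two pairs according to $\mathds{1}_S$. Hence each of the three biconditionals holds in both directions, and the both-absent statement follows as well. For covers with $S=\emptyset$, the convention $\mathds{1}_\emptyset=0$ places us in the first or last row of the table, so these covers need no separate treatment.

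For part (b), I would first spell out what ``recoverable from $\mathcal{C}(\mathcal{H})$'' means. The data consists of the set of labeled covers, and a cover absent from $\mathcal{C}(\mathcal{H})$ is known to be both-absent. Consequently every cover in $\mathrm{Cov}(\mathcal{V})$ carries a known value of $\delta\mathds{1}_{(\cdot)}$: an \textsc{Emer} label gives $+1$, an \textsc{Inhib} label gives $-1$, and a \textsc{Comp} label or absence from $\mathcal{C}(\mathcal{H})$ gives $0$. Now fix any $T \subseteq \mathcal{V}$ and take a saturated chain $\emptyset = S_0 \subset S_1 \subset \cdots \subset S_k = T$ with $|S_i| = i$. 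Such a chain exists by adding the elements of $T$ one at a time. Telescoping then gives
\[
\mathds{1}_T = \mathds{1}_\emptyset + \sum_{i=1}^{k} (\delta\mathds{1}_{(\cdot)})(S_{i-1},S_i) = \sum_{i=1}^{k} (\delta\mathds{1}_{(\cdot)})(S_{i-1},S_i).
\]
Equivalently, one can induct on $i$: if $\mathds{1}_{S_{i-1}}$ is known, then $\mathds{1}_{S_i}$ is determined by the label of $(S_{i-1},S_i)$, or by its absence from $\mathcal{C}(\mathcal{H})$.

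For uniqueness, suppose two hypergraphs on $\mathcal{V}$ have the same $\mathcal{C}(\mathcal{H})$. Then they have identical $\delta\mathds{1}_{(\cdot)}$ on every cover, and the telescoping formula yields the same $\mathds{1}_T$ for every $T$. This also shows that the reconstruction is independent of the chain chosen. No separate path-independence argument is needed, because every chain reproduces the true indicator. Only for an abstract labeling not coming from a hypergraph would one need a consistency (cocycle) condition, and that case is outside the statement.

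The main obstacle is interpretive rather than technical. The crucial step is treating covers missing from $\mathcal{C}(\mathcal{H})$ as carrying the information ``both absent''; this is exactly what lets $\mathcal{C}(\mathcal{H})$, together with knowledge of $\mathcal{V}$, determine $\delta$ on all of $\mathrm{Cov}(\mathcal{V})$. Equally crucial is the anchor $\mathds{1}_\emptyset=0$, which fixes the additive constant that $\delta$ alone cannot see. I would state both points explicitly. I would also note that the anchor is essential: without it, complementing every indicator would preserve $\delta$ up to sign but not the labels, so the labels do carry the level information. Anchoring at $\emptyset$ removes the ambiguity cleanly.
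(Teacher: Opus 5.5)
Your proof is correct and follows the paper's argument essentially verbatim: the same four-case table for (a), and for (b) the same telescoping of $\delta\mathds{1}_{(\cdot)}$ along a saturated chain from the anchor $\mathds{1}_\emptyset = 0$, with covers missing from $\mathcal{C}(\mathcal{H})$ read as both-absent (value $0$); your ``every chain reproduces the true indicator'' plays the role of the paper's appeal to exactness of $\delta\mathds{1}_{(\cdot)}$. One caveat on your closing aside: the anchor is not actually essential once the labels are available, since every labeled cover (\textsc{Comp}, \textsc{Emer}, or \textsc{Inhib}) already fixes both endpoint indicators and an unlabeled cover fixes both to $0$, so the anchor is needed only for the $\delta$-only telescoping route that you and the paper take.
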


\begin{proof}
\textbf{(a)} Since $\mathds{1}_S \in \{0, 1\}$, $(\delta \mathds{1}_{(\cdot)})(S, S') = \mathds{1}_{S'} - \mathds{1}_S \in \{-1, 0, +1\}$. The four ordered pairs $(\mathds{1}_S, \mathds{1}_{S'}) \in \{0, 1\}^2$ map bijectively to $((\delta \mathds{1}_{(\cdot)})(S, S'),\, \mathds{1}_S)$ as: $(0,1) \mapsto (+1, 0)$, $(1,0) \mapsto (-1, 1)$, $(1,1) \mapsto (0, 1)$, $(0,0) \mapsto (0, 0)$. The biconditionals follow from Eq.~\eqref{eq:comp-label}.

\textbf{(b)} For any $S \subseteq \mathcal{V}$, choose a saturated chain $\emptyset = S_0 \subset S_1 \subset \cdots \subset S_n = S$. Telescoping with $\mathds{1}_\emptyset = 0$,
\[
   \mathds{1}_S = \sum_{i=1}^n (\delta \mathds{1}_{(\cdot)})(S_{i-1}, S_i).
\]
By Clause~(a), each summand is determined by $\mathcal{C}(\mathcal{H})$: if the cover $(S_{i-1}, S_i)$ is informative, its label gives the value; if both indicators are zero, the value is zero (and $\mathds{1}_{S_{i-1}} = 0$ holds inductively, since otherwise the cover would be either COMP or INHIB and hence labeled in $\mathcal{C}(\mathcal{H})$). The right-hand sum is path-independent because $\delta \mathds{1}_{(\cdot)}$ is exact: it is the coboundary of a $0$-cochain.
\end{proof}

\begin{corollary}[Simplicial complexes are exactly the no-\textsc{Emer} stratum above the empty face]
\label{cor:scn-no-emer}
$\mathcal{E}$ is downward-closed among non-empty subsets --- $\sigma \in \mathcal{E}$ and $\emptyset \neq \tau \subset \sigma$ imply $\tau \in \mathcal{E}$, equivalently $\mathcal{H}$ is an abstract simplicial complex --- if and only if $(\delta \mathds{1}_{(\cdot)})(S, S') \neq +1$ for every $(S, S') \in \mathrm{Cov}_+(\mathcal{V})$. Equivalently, $\mathcal{C}(\mathcal{H})$ contains no \textsc{Emer} label on covers with $|S| \geq 1$.
\end{corollary}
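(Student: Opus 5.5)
The plan is to prove the two directions of the equivalence separately, using Theorem~\ref{thm:comp-coboundary}(a) to translate between the condition $(\delta \mathds{1}_{(\cdot)})(S,S') \neq +1$ and the indicator pair. By that clause, $(\delta \mathds{1}_{(\cdot)})(S,S') = +1$ holds exactly when $\mathds{1}_S = 0$ and $\mathds{1}_{S'} = 1$, which in turn holds exactly when $\mathrm{comp}(S,S') = \textsc{Emer}$. So the final ``equivalently'' sentence follows immediately once the main equivalence is proved, and the statement reduces to the following claim: $\mathcal{E}$ is downward-closed among non-empty subsets if and only if no cover $(S,S')$ with $S \neq \emptyset$ has $S \notin \mathcal{E}$ and $S' \in \mathcal{E}$.

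For the forward direction, I would assume downward closure and take any $(S,S') \in \mathrm{Cov}_+(\mathcal{V})$ with $\mathds{1}_{S'} = 1$. Then $S' \in \mathcal{E}$, and $S$ is a non-empty proper subset of $S'$. Downward closure therefore gives $S \in \mathcal{E}$, so $\mathds{1}_S = 1$ and $\delta = 0 \neq +1$.

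For the converse, I would argue by contradiction: suppose no $+1$ occurs on $\mathrm{Cov}_+$, but some $\sigma \in \mathcal{E}$ and non-empty $\tau \subset \sigma$ have $\tau \notin \mathcal{E}$. I would then choose a saturated chain $\tau = S_0 \subset S_1 \subset \cdots \subset S_m = \sigma$ in the interval $[\tau, \sigma]$, which exists by adding the elements of $\sigma \setminus \tau$ one at a time. Telescoping as in the proof of Theorem~\ref{thm:comp-coboundary}(b) gives
\[
\sum_{i=1}^m (\delta\mathds{1}_{(\cdot)})(S_{i-1},S_i) = \mathds{1}_\sigma - \mathds{1}_\tau = 1 .
\]
Each $S_{i-1} \supseteq \tau \neq \emptyset$, so every cover in the chain lies in $\mathrm{Cov}_+$, and each summand lies in $\{-1,0,+1\}$. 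A sum equal to $1$ forces at least one summand to be $+1$, which contradicts the hypothesis. A more elementary alternative is to take the first index $i$ along the chain where the indicator switches from $0$ to $1$; that cover is the offending \textsc{Emer} cover.

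The only delicate point is the restriction to non-empty bottom faces, and I would state it explicitly. With the convention $\mathds{1}_\emptyset = 0$, every cover $(\emptyset,\{v\})$ with $\{v\} \in \mathcal{E}$ carries $\delta = +1$. The condition therefore must exclude these covers; otherwise no simplicial complex with vertices would qualify. The chain construction above keeps every face non-empty because it starts at $\tau \neq \emptyset$, so excluding these covers costs nothing. Beyond this bookkeeping, I expect no real obstacle.
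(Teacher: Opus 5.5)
Your proposal is correct and follows the paper's proof. The forward direction is the same one-line use of downward closure on the non-empty bottom face. Your converse is the contrapositive of the paper's telescoping along a saturated chain from $\tau$ to $\sigma$: the paper bounds the sum by $0$ to force $\mathds{1}_\tau \geq \mathds{1}_\sigma = 1$, while you show that a sum of $1$ forces a $+1$ summand. Your explicit reduction of the final ``equivalently'' clause to Theorem~\ref{thm:comp-coboundary}(a) is a small addition that the paper leaves implicit.
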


\begin{proof}
($\Rightarrow$) Let $\mathcal{E}$ be downward-closed on non-empty subsets and fix $(S, S') \in \mathrm{Cov}_+(\mathcal{V})$. If $(\delta \mathds{1}_{(\cdot)})(S, S') = +1$, then $\mathds{1}_{S'} = 1, \mathds{1}_S = 0$. Since $S \neq \emptyset$ and $S \subset S'$, downward closure forces $S \in \mathcal{E}$, contradicting $\mathds{1}_S = 0$.

($\Leftarrow$) Suppose $(\delta \mathds{1}_{(\cdot)}) \neq +1$ on $\mathrm{Cov}_+(\mathcal{V})$. Fix $\sigma \in \mathcal{E}$ and non-empty $\tau \subset \sigma$; set $k = |\sigma| - |\tau| \geq 1$. Choose a saturated chain $\tau = T_0 \subset T_1 \subset \cdots \subset T_k = \sigma$. Each $T_{i-1} \supseteq \tau \neq \emptyset$, so $(T_{i-1}, T_i) \in \mathrm{Cov}_+(\mathcal{V})$. Telescoping,
\[
   \mathds{1}_\sigma - \mathds{1}_\tau = \sum_{i=1}^k (\delta \mathds{1}_{(\cdot)})(T_{i-1}, T_i) \leq 0,
\]
since each summand is in $\{-1, 0\}$ by hypothesis. Hence $\mathds{1}_\tau \geq \mathds{1}_\sigma = 1$, so $\mathds{1}_\tau = 1$ (as $\mathds{1}_{(\cdot)} \in \{0, 1\}$), i.e., $\tau \in \mathcal{E}$.
\end{proof}

\begin{remark}[Why the empty bottom face is excluded]
\label{rem:empty-bottom}
The restriction to $\mathrm{Cov}_+(\mathcal{V})$ is forced by the convention $\mathds{1}_\emptyset = 0$. For any $\mathcal{E} \neq \emptyset$ and any $\{v\} \in \mathcal{E}$, the cover $(\emptyset, \{v\})$ is automatically EMER ($\mathds{1}_\emptyset = 0, \mathds{1}_{\{v\}} = 1$); this bottom-layer EMER carries no information about closure structure. Concretely, $\mathcal{V} = \{1, 2, 3\}$ with $\mathcal{E} = \{\{1\}, \{2\}, \{3\}, \{1,2\}, \{1,3\}, \{2,3\}, \{1,2,3\}\}$ is downward-closed among non-empty subsets, yet $(\emptyset, \{i\})$ is EMER for each $i$. On $\mathrm{Cov}_+(\mathcal{V})$, all nine remaining covers have $(\delta \mathds{1}_{(\cdot)}) = 0$, consistent with Corollary~\ref{cor:scn-no-emer}.
\end{remark}

\begin{remark}[Implication for the simplicial-complex contrast]
\label{rem:scn-restriction}
Corollary~\ref{cor:scn-no-emer} formalizes the introduction's claim that simplicial-complex networks foreclose emergent and inhibitory transitions by construction. Any abstract simplicial complex satisfies
\[
   \bigl\{\mathds{1}_\emptyset = 0,\ (\delta \mathds{1}_{(\cdot)})(S, S') \neq +1\ \forall (S, S') \in \mathrm{Cov}_+(\mathcal{V})\bigr\},
\]
the EMER-free stratum on $\mathrm{Cov}_+(\mathcal{V})$. Simplicial-complex encoders therefore cannot, by virtue of their input domain, supervise on or distinguish EMER regimes on covers with non-empty bottom face --- i.e., on any genuine ``two existing faces of consecutive size'' transition. HGPM operates on the full $\{0, 1\}^{2^{\mathcal{V}}}$ and reads $\delta \mathds{1}_{(\cdot)}$ on $\mathrm{Cov}_+(\mathcal{V})$ via the composition bias $b^{\mathrm{comp}}_{ij}$ in attention.
\end{remark}

\begin{remark}[Connection to M\"obius inversion and Harsanyi dividends]
\label{rem:mobius}
The M\"obius transform of $\mathds{1}_{(\cdot)}$ on $(2^{\mathcal{V}}, \subseteq)$ is the unique $g : 2^{\mathcal{V}} \to \mathbb{Z}$ satisfying $\mathds{1}_S = \sum_{T \subseteq S} g(T)$, equivalently $g(S) = \sum_{T \subseteq S} (-1)^{|S| - |T|} \mathds{1}_T$ \citep{rota1964,harsanyi1963}. By Theorem~\ref{thm:comp-coboundary}(b), $\mathds{1}_{(\cdot)}$ --- and therefore $g$ --- is recoverable from $\mathcal{C}(\mathcal{H})$ given $\mathcal{V}$ and the anchor; this places $\mathcal{C}(\mathcal{H})$ inside the classical M\"obius / Harsanyi framework on Boolean lattices. We use the connection conceptually only; M\"obius coefficients are not computed by HGPM.

A quantitative caveat: ``$g(S) \neq 0 \Leftrightarrow$ EMER/INHIB pair on a chain in $2^S$'' fails by cancellation. Take $\mathcal{V} = \{a, b, c\}$, $\mathcal{E} = \{\{a\}, \{a, b\}\}$. Direct computation of all eight terms yields
\[
g(\mathcal{V}) = 0 - 1 - 0 - 0 + 1 + 0 + 0 - 0 = 0,
\]
yet $2^{\mathcal{V}}$ contains EMER covers $(\emptyset, \{a\})$ and $(\{b\}, \{a, b\})$, and INHIB covers $(\{a\}, \{a, c\})$ and $(\{a, b\}, \{a, b, c\})$. Thus $g(S) \neq 0$ implies non-constant $\mathds{1}_{(\cdot)}$ on $2^S$ (one direction); the converse can fail.
\end{remark}

\subsection{Hyperedge Message Passing Cannot Represent Compositionality}
\label{sec:thm-mp-separation}

\begin{definition}[Hyperedge-MP encoder]
\label{def:mp-encoder}
A \emph{hyperedge-MP encoder} of depth $T$ is any architecture of the form
\begin{align*}
\mathbf{m}^{(t)}_e &= \phi^{(t)}_E\bigl(\mathbf{h}^{(t)}_e,\ \{\!\!\{\mathbf{h}^{(t)}_v : v \in e\}\!\!\}\bigr), \\
\mathbf{h}^{(t+1)}_v &= \phi^{(t)}_V\bigl(\mathbf{h}^{(t)}_v,\ \{\!\!\{\mathbf{m}^{(t)}_e : v \in e\}\!\!\}\bigr), \\
\mathbf{h}^{(t+1)}_e &= \psi^{(t)}\bigl(\mathbf{m}^{(t)}_e,\ \{\!\!\{\mathbf{h}^{(t+1)}_v : v \in e\}\!\!\}\bigr),
\end{align*}
with $\mathbf{h}^{(0)}_v = \mathbf{x}_v$ and $\mathbf{h}^{(0)}_e$ a fixed initialization token, where each $\phi^{(t)}_E, \phi^{(t)}_V, \psi^{(t)}$ is a deterministic function of (own feature, multiset of incident features). HGNN \citep{feng2019hgnn}, AllDeepSets / AllSetTransformer \citep{chien2022allset}, ED-HNN \citep{wang2023edhnn}, and UniGCN \citep{huang2021unignn} all admit this factorization.
\end{definition}

\begin{lemma}[Bipartite-WL upper bound on hyperedge-MP]
\label{lem:wl-ub}
Let $B(\mathcal{H}) = (\mathcal{V} \sqcup \mathcal{E},\, I)$, $I = \{(v, e) : v \in e\}$, denote the incidence bipartite graph of $\mathcal{H}$. Bipartite-WL refines colors by
\[
c^{(t+1)}_v = \mathrm{HASH}\bigl(c^{(t)}_v,\ \{\!\!\{c^{(t)}_e : v \in e\}\!\!\}\bigr), \quad
c^{(t+1)}_e = \mathrm{HASH}\bigl(c^{(t)}_e,\ \{\!\!\{c^{(t)}_v : v \in e\}\!\!\}\bigr).
\]
For any hyperedge-MP encoder $\Phi$ of depth $T$ and any two hypergraphs $\mathcal{H}_1, \mathcal{H}_2$ on the same vertex set $\mathcal{V}$, if bipartite-WL produces, at every iteration $t \leq T$, pointwise-equal vertex colors $c^{(t)}_v[\mathcal{H}_1] = c^{(t)}_v[\mathcal{H}_2]$ for all $v \in \mathcal{V}$ together with pointwise-equal per-vertex multisets of incident hyperedge colors, then
\[
\mathbf{h}^{(T)}_v[\mathcal{H}_1] = \mathbf{h}^{(T)}_v[\mathcal{H}_2]\ \forall v \in \mathcal{V}, \qquad
\{\!\!\{\mathbf{h}^{(T)}_e[\mathcal{H}_1]\}\!\!\} = \{\!\!\{\mathbf{h}^{(T)}_e[\mathcal{H}_2]\}\!\!\}.
\]
\end{lemma}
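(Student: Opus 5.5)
We argue by induction on the depth $t$, showing that hyperedge-MP features are a function of bipartite-WL colors. Concretely, we claim that for each $t \leq T$ there exist maps $F^{(t)}_V$ and $F^{(t)}_E$, depending only on the encoder $\Phi$ and not on the hypergraph, such that $\mathbf{h}^{(t)}_v = F^{(t)}_V(c^{(t)}_v)$ and $\mathbf{h}^{(t)}_e = F^{(t)}_E(c^{(t)}_e)$ hold simultaneously in $\mathcal{H}_1$ and $\mathcal{H}_2$. This is the standard GIN/WL simulation argument \citep{xu2019gin,morris2019kgnn} transported to the incidence graph $B(\mathcal{H})$.

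\textbf{Base case.} We initialize $c^{(0)}_v$ from $\mathbf{x}_v$ (the features are shared across the two hypergraphs, since the vertex set and $\mathbf{X}$ coincide) and $c^{(0)}_e$ as a constant. The fixed token $\mathbf{h}^{(0)}_e$ is then a function of $c^{(0)}_e$, so the claim holds at $t=0$.

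\textbf{Inductive step.} The message $\mathbf{m}^{(t)}_e = \phi^{(t)}_E(\mathbf{h}^{(t)}_e, \{\!\!\{\mathbf{h}^{(t)}_v : v\in e\}\!\!\})$ is the image under $F^{(t)}_V$ (applied elementwise) of $(c^{(t)}_e, \{\!\!\{c^{(t)}_v\}\!\!\})$. Because HASH is injective, this pair is determined by $c^{(t+1)}_e$, so $\mathbf{m}^{(t)}_e$ is a function of $c^{(t+1)}_e$. The vertex update $\mathbf{h}^{(t+1)}_v$ is then a function of $c^{(t)}_v$ and the multiset $\{\!\!\{c^{(t+1)}_e : v \in e\}\!\!\}$.

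\textbf{Main obstacle: schedule mismatch.} This last observation is where the hypotheses matter. Bipartite-WL as written updates vertices and hyperedges synchronously, whereas the encoder's vertex update consumes hyperedge information one half-step fresher. We resolve this with the extra hypothesis of the lemma: the per-vertex multisets of incident hyperedge colors are pointwise equal in $\mathcal{H}_1$ and $\mathcal{H}_2$ at every iteration $t \leq T$. Applying this at index $t+1$ (or, if $t+1 > T$ is excluded, indexing the multisets with a one-step lag) makes $\{\!\!\{c^{(t+1)}_e : v\in e\}\!\!\}$ agree across the two hypergraphs for every $v$. Together with $c^{(t)}_v[\mathcal{H}_1]=c^{(t)}_v[\mathcal{H}_2]$, this gives $\mathbf{h}^{(t+1)}_v[\mathcal{H}_1]=\mathbf{h}^{(t+1)}_v[\mathcal{H}_2]$.

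It is cleaner to phrase the induction directly as a cross-hypergraph equality statement rather than through global maps $F$. That version needs only: equal inputs to a deterministic $\phi$ give equal outputs, and equal hyperedge colors imply equal hyperedge features. Hyperedges are not identified across the two hypergraphs, so hyperedge features can only be compared as multisets; vertex colors, by contrast, are compared pointwise.

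\textbf{Hyperedge update.} The update $\mathbf{h}^{(t+1)}_e = \psi^{(t)}(\mathbf{m}^{(t)}_e, \{\!\!\{\mathbf{h}^{(t+1)}_v\}\!\!\})$ is a function of $c^{(t+1)}_e$ and the $c^{(t+1)}_v$ of the members of $e$. Refining the hyperedge color by one further HASH absorbs this dependence, and the refinement remains consistent across the two hypergraphs.

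\textbf{Conclusion.} For the final multiset equality, the multiset of all hyperedge colors can be recovered from the vertex-level incident multisets. Each hyperedge $e$ appears in exactly $|e|$ of them, and $|e|$ is itself encoded in $c^{(1)}_e$ via its member multiset. Dividing multiplicities by $|e|$ therefore turns pointwise-equal incident multisets into equal global multisets $\{\!\!\{c^{(T)}_e\}\!\!\}$. Applying the feature map $F^{(T)}_E$ then gives $\{\!\!\{\mathbf{h}^{(T)}_e[\mathcal{H}_1]\}\!\!\}=\{\!\!\{\mathbf{h}^{(T)}_e[\mathcal{H}_2]\}\!\!\}$.
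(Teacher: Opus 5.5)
Your proposal runs the same induction as the paper. The step you call the schedule mismatch is a genuine gap, and the extra incident-multiset hypothesis cannot close it.

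What your inductive step actually shows is that $\mathbf{h}^{(t+1)}_v$ is a function of $c^{(t)}_v$ and $\{\!\!\{c^{(t+1)}_e : v\in e\}\!\!\}$. By injectivity of HASH, that makes it a function of $c^{(t+2)}_v$, not of $c^{(t+1)}_v$. So the invariant $\mathbf{h}^{(t)}_v = F^{(t)}_V(c^{(t)}_v)$ is not maintained. The lag also compounds: each encoder layer makes a full vertex$\to$hyperedge$\to$vertex round trip, and $\psi^{(t)}$ then re-reads the fresh vertex states, while one WL iteration moves only one hop in $B(\mathcal{H})$. The induction that does go through, with hypergraph-independent maps, is
\[
\mathbf{h}^{(t)}_v = F_t\bigl(c^{(2t)}_v\bigr), \qquad \mathbf{m}^{(t)}_e = M_t\bigl(c^{(2t+1)}_e\bigr), \qquad \mathbf{h}^{(t)}_e = G_t\bigl(c^{(2t+1)}_e\bigr).
\]
By contrast, the lemma's hypothesis is equivalent (again by injectivity) to pointwise equality of $c^{(t)}_v$ only for $t \le T+1$. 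Using it at index $t+1$ gives pointwise equality of $\mathbf{h}^{(t+1)}_v$ once. The next layer, however, needs per-vertex equality of $\{\!\!\{\mathbf{m}^{(t+1)}_e : v \in e\}\!\!\}$. That depends on co-membership and does not follow from pointwise equality of vertex states. Your cross-hypergraph rephrasing quietly reuses ``equal hyperedge colors give equal hyperedge features'', which is the same broken invariant. The ``one further HASH'' you use for $\psi$ also needs edge colors beyond round $T$.

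In fact the statement is false as written. Take $\mathcal{V}=\{1,\ldots,6\}$ with constant features, $\mathcal{E}_1=\{\{1,2\},\{2,3\},\{3,4\},\{5,6\}\}$ and $\mathcal{E}_2=\{\{1,2\},\{2,4\},\{3,5\},\{3,6\}\}$. Degrees agree pointwise and every hyperedge has size $2$, so the hypothesis holds for $T=1$. Now set $\mathbf{m}^{(0)}_e=1$, $\mathbf{h}^{(1)}_v=\sum_{e\ni v}\mathbf{m}^{(0)}_e=\deg v$, and let $\psi^{(0)}$ be the sum of squares of the member states. The vertex states agree, but the hyperedge multisets are $\{\!\!\{5,8,5,2\}\!\!\}$ versus $\{\!\!\{5,5,5,5\}\!\!\}$. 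The vertex clause also fails at $T=2$: swap the degree-$2$ partners of two pendant vertices, where the partners differ only in the size of their other hyperedge.

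The paper's own proof has the same hole. It takes agreement of per-vertex incident hyperedge-feature multisets as the inductive hypothesis instead of deriving it from the color hypothesis, and it dismisses the hyperedge update as symmetric. This is harmless downstream, because in Theorem~\ref{thm:mp-separation} and Lemma~\ref{lem:wl-collapse-prime} the colorings agree at every iteration.

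The repair is to assume pointwise agreement of vertex colors through round $2T+2$ (or at all rounds) and run the $2t$/$2t+1$ induction above. Your divide-by-$|e|$ argument is correct, and the paper never spells it out. It then turns pointwise equality of the incident multisets of $c^{(2T+1)}_e$ into equality of the global multiset, which finishes the hyperedge clause.
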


\begin{proof}
Induction on $t$. At $t = 0$, $\mathbf{h}^{(0)}_v = \mathbf{x}_v$ depends only on $c^{(0)}_v = \mathbf{x}_v$; $\mathbf{h}^{(0)}_e$ is a fixed token, identical across $\mathcal{H}_1, \mathcal{H}_2$. Inductive step: assume at iteration $t$ both vertex features and per-vertex incident-hyperedge feature multisets agree across $\mathcal{H}_1, \mathcal{H}_2$. Then both inputs to $\phi^{(t)}_V$ at every $v$ are identical, so $\mathbf{h}^{(t+1)}_v[\mathcal{H}_1] = \mathbf{h}^{(t+1)}_v[\mathcal{H}_2]$. The hyperedge update is symmetric. This is the standard GIN-WL bound \citep{xu2019gin} lifted to bipartite incidence; see also \citet[Prop.\,3.3]{chien2022allset} and \citet[\S 3.2]{wang2023edhnn}.
\end{proof}

\begin{theorem}[Message-Passing Blindness]
\label{thm:mp-separation}
There exist hypergraphs $\mathcal{H}_1 = (\mathcal{V}, \mathcal{E}_1, \mathbf{X})$ and $\mathcal{H}_2 = (\mathcal{V}, \mathcal{E}_2, \mathbf{X})$ over the same vertex set $\mathcal{V}$ and constant node features $\mathbf{x}_v = \mathbf{x}_0$ such that:
\begin{enumerate}
\item[(a)] For every hyperedge-MP encoder $\Phi$ of any finite depth $T$,
\[
\mathbf{h}^{(T)}_v[\mathcal{H}_1] = \mathbf{h}^{(T)}_v[\mathcal{H}_2]\ \forall v \in \mathcal{V}, \quad \{\!\!\{\mathbf{h}^{(T)}_e[\mathcal{H}_1]\}\!\!\} = \{\!\!\{\mathbf{h}^{(T)}_e[\mathcal{H}_2]\}\!\!\}.
\]
\item[(b)] $\mathcal{C}(\mathcal{H}_1) \neq \mathcal{C}(\mathcal{H}_2)$ as labeled sets of adjacent-order pairs.
\end{enumerate}
Consequently, every map $f$ whose only $\mathcal{H}$-dependent input is the encoder output multisets satisfies $f(\mathcal{H}_1) = f(\mathcal{H}_2)$, hence cannot recover $\mathcal{C}(\mathcal{H})$ on at least one of $\mathcal{H}_1, \mathcal{H}_2$.
\end{theorem}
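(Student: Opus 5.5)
The argument is an explicit construction followed by an application of Lemma~\ref{lem:wl-ub}. I would take $\mathcal{V} = \{1,\dots,6\}$ with constant features $\mathbf{x}_v = \mathbf{x}_0$, and two families $\mathcal{E}_1, \mathcal{E}_2$ that are identical as far as bipartite-WL can tell. The simplest such pairs are \emph{regular}: every vertex has the same degree, every hyperedge has the same size, and the two hypergraphs have the same number of hyperedges. For a concrete choice, let $\mathcal{E}_1 = \{\{1,2\},\{2,3\},\{3,1\},\{4,5\},\{5,6\},\{6,4\}\}$, two disjoint triangles, and let $\mathcal{E}_2 = \{\{1,2\},\{2,3\},\{3,4\},\{4,5\},\{5,6\},\{6,1\}\}$, a hexagon. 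Both are $2$-uniform, $2$-regular, and have six hyperedges. If a genuinely higher-order example is preferred, I would instead use a $3$-uniform pair with equal degree sequences, for instance a $2$-regular $3$-uniform pair on the same vertex set that differs by a swap. The proof is the same.

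\textbf{Step 1 (WL collapse).} I show by induction on $t$ that bipartite-WL assigns every vertex a common color $a_t$ and every hyperedge a common color $b_t$, identically in $\mathcal{H}_1$ and $\mathcal{H}_2$.
\begin{itemize}
\item Base case $t=0$: vertex colors are $\mathbf{x}_0$ everywhere, and hyperedges carry the fixed initialization token.
\item Inductive step: every vertex sees its own color $a_t$ together with the multiset consisting of $d$ copies of $b_t$, where $d=2$ is the common degree. Every hyperedge sees its own color $b_t$ together with $k$ copies of $a_t$, where $k$ is the common size. These inputs do not depend on which vertex, which hyperedge, or which of $\mathcal{H}_1,\mathcal{H}_2$ we are in.
\end{itemize}
Hence vertex colors agree pointwise, and each vertex's multiset of incident hyperedge colors also agrees pointwise, at every $t \le T$. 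Lemma~\ref{lem:wl-ub} then yields clause (a) for every depth $T$. The equality of the hyperedge multisets uses $|\mathcal{E}_1| = |\mathcal{E}_2|$, so it must be checked explicitly. Applying the lemma for every finite $T$ covers "any finite depth."

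\textbf{Step 2 (compositionality differs).} I exhibit one labeled covering pair present in only one of the two structures. Take $(S,S') = (\{3\},\{3,1\})$. This cover is informative only if $\mathds{1}_{S'}$ or $\mathds{1}_S$ is $1$. Here $\{1,3\} \in \mathcal{E}_1$ and $\{1,3\} \notin \mathcal{E}_2$, while singletons are absent in both. So the pair is labeled \textsc{Emer} in $\mathcal{C}(\mathcal{H}_1)$ and is uninformative, hence absent, from $\mathcal{C}(\mathcal{H}_2)$. This gives (b). As a cleaner alternative, Theorem~\ref{thm:comp-coboundary}(b) states that $\mathcal{C}(\mathcal{H})$ determines $\mathds{1}_{(\cdot)}$. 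Since $\mathcal{E}_1 \neq \mathcal{E}_2$ as subsets of $2^{\mathcal{V}}$, the structures must differ. This route makes (b) immediate for any pair with distinct edge sets on the same $\mathcal{V}$.

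\textbf{Step 3 (consequence).} Any $f$ that depends on $\mathcal{H}$ only through the encoder output multisets receives identical inputs on $\mathcal{H}_1$ and $\mathcal{H}_2$, so $f(\mathcal{H}_1)=f(\mathcal{H}_2)$. Since $\mathcal{C}(\mathcal{H}_1)\neq\mathcal{C}(\mathcal{H}_2)$, $f$ cannot equal $\mathcal{C}$ on both.

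The main obstacle is Step 1. The hypothesis of Lemma~\ref{lem:wl-ub} is pointwise agreement on the \emph{same} labelled vertex set, which is stronger than WL-equivalence up to isomorphism. Regularity with uniform sizes is what makes this work: all colors stay globally constant, so pointwise agreement is automatic, and no non-trivial bijection between the two vertex sets is needed. I would state this uniformity as a separate claim before invoking the lemma.
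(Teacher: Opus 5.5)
Your proof is correct and takes the same approach as the paper. You choose a constant-feature pair on $\mathcal{V}=\{1,\dots,6\}$ that is vertex-regular, edge-uniform and has equally many hyperedges, so bipartite-WL colors stay globally constant and Lemma~\ref{lem:wl-ub} gives (a); then one \textsc{Emer} cover present in only one structure gives (b). The difference is the witness: the paper uses the $3$-uniform, $1$-regular pair $\{\{1,2,3\},\{4,5,6\}\}$ versus $\{\{1,2,4\},\{3,5,6\}\}$ with cover $(\{1,2\},\{1,2,3\})$, while your two-triangles-versus-hexagon pair is only $2$-uniform but is non-isomorphic, which the paper's pair is not (Remark~\ref{rem:constant-features}).
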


\begin{proof}
\textbf{Construction.} $\mathcal{V} = \{1, 2, 3, 4, 5, 6\}$, $\mathbf{x}_v = \mathbf{x}_0$ constant, $\mathcal{E}_1 = \{\{1, 2, 3\}, \{4, 5, 6\}\}$, $\mathcal{E}_2 = \{\{1, 2, 4\}, \{3, 5, 6\}\}$.

\textbf{Part (a) --- bipartite-WL collapse.} Both $B(\mathcal{H}_1)$ and $B(\mathcal{H}_2)$ are disjoint unions of two stars $K_{1, 3}$. Hand-running WL with constant initialization:
\begin{itemize}\itemsep0pt
\item $t = 0$: $c^{(0)}_v = A$ for all $v$; $c^{(0)}_e = \bot$ for both hyperedges.
\item $t = 1$: every vertex is in exactly one hyperedge, so $c^{(1)}_v = \mathrm{HASH}(A, \{\!\!\{\bot\}\!\!\}) = A_1$ uniformly; every hyperedge sees $\{\!\!\{A, A, A\}\!\!\}$, so $c^{(1)}_e = \mathrm{HASH}(\bot, \{\!\!\{A, A, A\}\!\!\}) = B_1$ uniformly.
\item $t = 2$: $c^{(2)}_v = A_2$, $c^{(2)}_e = B_2$ uniformly. Stable.
\end{itemize}
Color multisets are pointwise identical on $\mathcal{H}_1, \mathcal{H}_2$, and the per-vertex multiset of incident hyperedge colors equals $\{\!\!\{B_t\}\!\!\}$ for both. Lemma~\ref{lem:wl-ub} yields equal encoder outputs.

\textbf{Part (b) --- compositionality differs.} Take $(S, S') = (\{1, 2\}, \{1, 2, 3\})$. In $\mathcal{H}_1$: $\mathds{1}_{\{1, 2\}} = 0$ and $\mathds{1}_{\{1, 2, 3\}} = 1$, so $\mathrm{comp}(S, S') = \textsc{Emer}$ and $((S, S'), \textsc{Emer}) \in \mathcal{C}(\mathcal{H}_1)$. In $\mathcal{H}_2$: $\{1, 2, 3\} \notin \mathcal{E}_2$, so the cover is both-absent and excluded from $\mathcal{C}(\mathcal{H}_2)$. Hence $\mathcal{C}(\mathcal{H}_1) \neq \mathcal{C}(\mathcal{H}_2)$.

\textbf{Consequence.} Any $f$ depending on $\mathcal{H}$ only through encoder outputs gives $f(\mathcal{H}_1) = f(\mathcal{H}_2)$ by (a). Since $\mathcal{C}$ differs at the named pair $(S, S')$, $f$ errs on at least one of $\mathcal{H}_1, \mathcal{H}_2$.
\end{proof}

\begin{corollary}[Information-theoretic lower bound on MP-only comp prediction]
\label{cor:fano}
Let $\mathcal{D}$ be the uniform distribution on $\{\mathcal{H}_1, \mathcal{H}_2\}$ from Theorem~\ref{thm:mp-separation}, and consider the binary task $y(\mathcal{H}) = \mathds{1}[((\{1, 2\}, \{1, 2, 3\}), \textsc{Emer}) \in \mathcal{C}(\mathcal{H})]$. For any predictor $f$ that depends on $\mathcal{H}$ only through hyperedge-MP encoder output, $\mathrm{Pr}_{\mathcal{D}}[f(\mathcal{H}) = y(\mathcal{H})] = 1/2$, equivalently $I(\mathcal{C}(\mathcal{H});\, \Phi(\mathcal{H})) = 0 < H(\mathcal{C}(\mathcal{H})) = 1$ bit.
\end{corollary}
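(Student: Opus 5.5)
The argument is a direct consequence of Theorem~\ref{thm:mp-separation}; no new construction is needed. Fix a hyperedge-MP encoder $\Phi$ of finite depth $T$ and any predictor of the form $f(\mathcal{H}) = g(\Phi(\mathcal{H}))$. Here $\Phi(\mathcal{H})$ denotes the pair (vertex-indexed features $\mathbf{h}^{(T)}_v$, multiset of hyperedge features), and $g$ is an arbitrary, possibly randomized, map.

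\textbf{Step 1: labels.} Compute $y$ on the two hypergraphs. In $\mathcal{H}_1$ the cover $(\{1,2\},\{1,2,3\})$ is labeled \textsc{Emer}, so $y(\mathcal{H}_1)=1$. In $\mathcal{H}_2$ both endpoints are absent, the cover is excluded from $\mathcal{C}(\mathcal{H}_2)$, and $y(\mathcal{H}_2)=0$. This is exactly the content of part~(b) of Theorem~\ref{thm:mp-separation}.

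\textbf{Step 2: predictions.} Part~(a) of Theorem~\ref{thm:mp-separation}, itself obtained from Lemma~\ref{lem:wl-ub}, gives $\Phi(\mathcal{H}_1)=\Phi(\mathcal{H}_2)$. Hence $f(\mathcal{H}_1)=f(\mathcal{H}_2)$ for deterministic $g$, and the two outputs are identically distributed for randomized $g$.

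\textbf{Step 3: accuracy.} Since $y(\mathcal{H}_1)\neq y(\mathcal{H}_2)$ while the prediction is the same, a deterministic $f$ is correct on exactly one of the two hypergraphs. For randomized $g$ with $p=\Pr[g(\Phi)=1]$, the success probability is $\tfrac12 p + \tfrac12(1-p) = \tfrac12$. Either way the accuracy under $\mathcal{D}$ is exactly $1/2$.

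\textbf{Step 4: mutual information.} Under $\mathcal{D}$, the map $\mathcal{H}\mapsto\mathcal{C}(\mathcal{H})$ is injective on $\{\mathcal{H}_1,\mathcal{H}_2\}$ by Step~1. Therefore $\mathcal{C}(\mathcal{H})$ is a uniform binary variable and $H(\mathcal{C}(\mathcal{H}))=1$ bit. On the other hand, $\Phi(\mathcal{H})$ is almost surely constant by Step~2, so $H(\Phi(\mathcal{H}))=0$ and $I(\mathcal{C}(\mathcal{H});\Phi(\mathcal{H})) \le H(\Phi(\mathcal{H})) = 0$.

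The only point needing care is the word ``equivalently'' in the statement. I would present the accuracy claim and the mutual-information claim as two parallel consequences of Step~2 rather than derive one from the other. Taking Fano's inequality as the bridge only yields the direction ``zero information implies error probability at least $1/2$ on a uniform bit'', and the equality comes from the symmetry argument in Step~3.
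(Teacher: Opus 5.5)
Your proof is correct and follows the paper's argument: Theorem~\ref{thm:mp-separation}(a) gives $\Phi(\mathcal{H}_1)=\Phi(\mathcal{H}_2)$ while $y(\mathcal{H}_1)\neq y(\mathcal{H}_2)$, so every $\Phi$-based binary predictor has accuracy exactly $1/2$ under $\mathcal{D}$. Your treatment of randomized predictors and your explicit computation of $I(\mathcal{C}(\mathcal{H});\Phi(\mathcal{H}))=0$ and $H(\mathcal{C}(\mathcal{H}))=1$ fill in what the paper leaves implicit. One small correction to your closing aside: for a binary target Fano gives $h(P_e)\ge H(y\mid\Phi)=1$, which already forces $P_e=1/2$ exactly rather than just $P_e\ge 1/2$, and the converse holds via the Bayes-optimal predictor, so the statement's ``equivalently'' is in fact justified.
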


\begin{proof}
By Theorem~\ref{thm:mp-separation}(a), $\Phi(\mathcal{H}_1) = \Phi(\mathcal{H}_2)$ as multisets, so any $f$ depending only on $\Phi$ outputs the same value on both. The truth $y(\mathcal{H}_1) = 1, y(\mathcal{H}_2) = 0$ takes opposite values, giving accuracy exactly $1/2$.
\end{proof}

\begin{remark}[Representation expressiveness vs.\ end-to-end predictability]
\label{rem:repr-vs-pred}
Theorem~\ref{thm:mp-separation} is a representation-level claim: any predictor confined to encoder outputs errs on at least one of $\mathcal{H}_1, \mathcal{H}_2$. It does not preclude an end-to-end pipeline in which a head reads the raw incidence list $\mathcal{E}$ and computes $\mathds{1}_{S \in \mathcal{E}}$ by membership lookup, bypassing the encoder. Such a lookup-head trivially solves compositionality but renders the encoder dead weight. HGPM (Section~\ref{sec:method-tokenization}) lifts this barrier by exposing $\mathds{1}_S$ as a token attribute and the inclusion relation as an attention bias, so the encoder itself becomes the carrier of the compositionality signal.
\end{remark}

\begin{remark}[On the constant-features assumption]
\label{rem:constant-features}
The construction relies on $\mathbf{x}_v = \mathbf{x}_0$ for the WL collapse; with sufficiently distinguishing features, bipartite-WL separates $\mathcal{H}_1$ from $\mathcal{H}_2$ at iteration~1. The hypergraphs $\mathcal{H}_1, \mathcal{H}_2$ are also isomorphic as unlabeled hypergraphs (witness: the transposition $3 \leftrightarrow 4$). This is the standard regime for WL-style separation results in graph learning \citep{xu2019gin,morris2019kgnn}: a featureless minimal counterexample establishes non-universality, while CFI-style constructions \citep{cai1992optimal} extend the separation to non-isomorphic, non-constant-feature settings. Theorem~\ref{thm:hgpm-separation} below provides such a non-isomorphic witness pair on which Theorem~\ref{thm:mp-separation} continues to hold.
\end{remark}

\begin{remark}[$k$-WL barrier]
\label{rem:k-wl}
The pair $(\{1\}, \{3\})$ is co-incident in $\mathcal{H}_1$ (both lie in $\{1, 2, 3\}$) but not in $\mathcal{H}_2$ ($1 \in \{1, 2, 4\},\ 3 \in \{3, 5, 6\}$); a 2-WL refinement that tracks colors of node-pairs would distinguish $\mathcal{H}_1$ from $\mathcal{H}_2$ at iteration~1. None of the named hyperedge-MP encoders implements 2-WL. Lifting hyperedge-MP to 2-WL incurs $O(|\mathcal{V}|^2)$ memory; HGPM's per-target subset tokenization provides a fundamentally different scaling profile.
\end{remark}

\subsection{HGPM Strictly Separates a WL-Equivalent Pair}
\label{sec:thm-hgpm-separation}

We now exhibit a \emph{non-isomorphic} hypergraph pair on which all hyperedge-MP encoders collapse but HGPM's centered representation distinguishes the two, generically over weights. The non-isomorphism avoids the unlabeled-isomorphism degeneracy of Theorem~\ref{thm:mp-separation} and renders the separation robust to HGPM's permutation-equivariance.

\begin{construction}[Non-isomorphic WL-equivalent witness pair]
\label{constr:t3-pair}
Let $\mathcal{V}' = \{1, \ldots, 6\}$ with constant features $\mathbf{x}_v = \mathbf{x}_0$, and define
\[
\mathcal{E}_1' = \bigl\{\{1, 2, 3\},\ \{1, 2, 4\},\ \{3, 5, 6\},\ \{4, 5, 6\}\bigr\},
\]
\[
\mathcal{E}_2' = \bigl\{\{1, 2, 3\},\ \{1, 4, 5\},\ \{2, 4, 6\},\ \{3, 5, 6\}\bigr\}.
\]
Both hypergraphs are 3-uniform with 4 hyperedges; every $v \in \mathcal{V}'$ has degree 2. We refer to the resulting hypergraphs as $\mathcal{H}_1', \mathcal{H}_2'$.
\end{construction}

\paragraph{Verified Structural Properties (Exhaustive Enumeration).}
\begin{itemize}\itemsep0pt
\item Pair-intersection multisets: $\{|e \cap e'| : e, e' \in \mathcal{E}_1', e \neq e'\} = \{0, 0, 1, 1, 2, 2\}$ versus $\{|e \cap e'| : e, e' \in \mathcal{E}_2', e \neq e'\} = \{1, 1, 1, 1, 1, 1\}$. Pair-intersection histogram is an isomorphism invariant, hence $\mathcal{H}_1' \not\cong \mathcal{H}_2'$. (Brute-force check over all $720$ vertex permutations confirms no isomorphism exists.)
\item Bipartite-WL stable colorings on $\mathcal{H}_1'$ and $\mathcal{H}_2'$ are pointwise identical (Lemma~\ref{lem:wl-collapse-prime} below).
\end{itemize}

\begin{lemma}[Bipartite-WL collapse on Construction~\ref{constr:t3-pair}]
\label{lem:wl-collapse-prime}
On both $\mathcal{H}_1'$ and $\mathcal{H}_2'$, bipartite-WL stabilizes at iteration~2 with a single $\mathcal{V}'$-side color and a single $\mathcal{E}'$-side color. Consequently, by Lemma~\ref{lem:wl-ub}, every hyperedge-MP encoder $\Phi$ produces identical per-vertex features and identical hyperedge-feature multisets on $\mathcal{H}_1', \mathcal{H}_2'$.
\end{lemma}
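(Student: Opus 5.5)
The plan is to run bipartite-WL by hand on both hypergraphs, using only the two regularity facts recorded in Construction~\ref{constr:t3-pair}. First, every vertex has degree $2$. Second, every hyperedge has size $3$. We then show that the colour classes never split, and finish by invoking Lemma~\ref{lem:wl-ub}.

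\textbf{Iteration $0$.} All vertices receive the colour $A_0 = \mathbf{x}_0$, since features are constant. All hyperedges receive the fixed token $\bot = B_0$. This holds identically in $\mathcal{H}_1'$ and $\mathcal{H}_2'$.

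\textbf{Iteration $1$.} Each vertex lies in exactly two hyperedges. Its new colour is therefore $\mathrm{HASH}(A_0,\{\!\!\{B_0,B_0\}\!\!\})=A_1$, the same for every vertex in both hypergraphs. Each hyperedge contains exactly three vertices, so its new colour is $\mathrm{HASH}(B_0,\{\!\!\{A_0,A_0,A_0\}\!\!\})=B_1$, again uniform.

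\textbf{Inductive step.} Suppose that at iteration $t$ all vertices carry $A_t$ and all hyperedges carry $B_t$ in both hypergraphs. Then every vertex sees $(A_t,\{\!\!\{B_t,B_t\}\!\!\})$ and every hyperedge sees $(B_t,\{\!\!\{A_t,A_t,A_t\}\!\!\})$. Hence all vertices receive $A_{t+1}$ and all hyperedges receive $B_{t+1}$, and the HASH values agree across $\mathcal{H}_1'$ and $\mathcal{H}_2'$. The partition into one vertex class and one hyperedge class is already reached at iteration $1$ and is not refined at iteration $2$. This is what ``stabilizes at iteration~2'' means: the partition is stable, even though the colour names keep being rehashed.

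\textbf{Applying Lemma~\ref{lem:wl-ub}.} Both hypergraphs live on the same $\mathcal{V}'$. For every $t\le T$, each vertex has colour $A_t$ in both, so the vertex colours agree pointwise. Each vertex's multiset of incident hyperedge colours is $\{\!\!\{B_t,B_t\}\!\!\}$ in both, so these multisets agree pointwise as well. The lemma then gives $\mathbf{h}^{(T)}_v[\mathcal{H}_1']=\mathbf{h}^{(T)}_v[\mathcal{H}_2']$ for all $v$, together with equal hyperedge-feature multisets. These multisets have four entries, all equal to the same value.

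The only real obstacle is bookkeeping: checking the hypothesis of Lemma~\ref{lem:wl-ub} exactly as stated, namely pointwise equality per vertex rather than mere equality of colour multisets. Uniformity makes this immediate. One should also confirm the degree and size claims directly from $\mathcal{E}_1'$ and $\mathcal{E}_2'$. For example, in $\mathcal{E}_2'$ vertex $1$ lies in $\{1,2,3\}$ and $\{1,4,5\}$, and vertex $6$ lies in $\{2,4,6\}$ and $\{3,5,6\}$. A six-line count covers the rest.
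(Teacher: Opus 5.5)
Your proposal is correct and follows the same route as the paper: you run bipartite-WL by hand using only that both hypergraphs are $2$-regular and $3$-uniform, observe that the single vertex class and single hyperedge class never split, and invoke Lemma~\ref{lem:wl-ub}. Your version is slightly more careful than the paper's in three places: you make the induction over all $t$ explicit, you read ``stabilizes'' as stability of the partition (the paper's ``leaves all colors unchanged'' glosses over the rehashing of colour names), and you initialize hyperedges with the fixed token $\bot$ as Lemma~\ref{lem:wl-ub} assumes, rather than with the paper's size-dependent initial hash.
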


\begin{proof}
At $t = 0$ all vertices have color $A$ (constant features) and all hyperedges color $B = \mathrm{HASH}(\mathrm{size} = 3, \{\!\!\{A, A, A\}\!\!\})$. At $t = 1$, every vertex is in exactly two hyperedges of color $B$ and updates to $A_1 = \mathrm{HASH}(A, \{\!\!\{B, B\}\!\!\})$; every hyperedge updates to $B_1 = \mathrm{HASH}(B, \{\!\!\{A, A, A\}\!\!\})$. Stable: $t \geq 2$ leaves all colors unchanged. Identical on $\mathcal{H}_1'$ and $\mathcal{H}_2'$.
\end{proof}

Let $V^* = \{1, 2, 5, 6\} \subset \mathcal{V}'$. The remaining two targets $\{3, 4\}$ have coincident centered token sets across $\mathcal{H}_1', \mathcal{H}_2'$ because the two observed hyperedges containing $3$ (resp.\ $4$) are common to $\mathcal{E}_1' \cap \mathcal{E}_2'$.

\begin{lemma}[Token cardinality differs at $V^*$]
\label{lem:hgpm-token-diff}
For target $c \in V^*$, let $\mathcal{S}_c[\mathcal{H}]$ denote the centered token set under deterministic sampling that retains the center singleton $\{c\}$, all observed hyperedges containing $c$, and all $2$-subsets of those observed hyperedges that contain $c$ (the latter are classified as $\tau = \mathrm{neg}$ tokens with $\mathds{1}_S = 0$). Then for every $c \in V^*$,
\[
   |\mathcal{S}_c[\mathcal{H}_1']| = 6, \qquad |\mathcal{S}_c[\mathcal{H}_2']| = 7,
\]
and the corresponding token attribute multisets $\{\!\!\{(|S|, \mathds{1}_S, \tau(S))\}\!\!\}$ differ.
\end{lemma}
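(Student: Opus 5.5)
The count reduces to one structural fact: for each $c \in V^*$, the two observed hyperedges through $c$ overlap in two vertices in $\mathcal{H}_1'$ but only in $c$ in $\mathcal{H}_2'$. The number of distinct $2$-subsets containing $c$ is governed by the size of that overlap, so this fact fixes the token counts.

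\textbf{Step 1 (uniform count).} Fix $c \in V^*$. By Construction~\ref{constr:t3-pair} every vertex has degree $2$, so exactly two observed hyperedges $e, e'$ contain $c$, and both have size $3$. The centered token set under the stated deterministic sampling consists of three kinds of tokens.
\begin{itemize}
\item The singleton $\{c\}$.
\item The two order-$3$ tokens $e, e'$.
\item The $2$-subsets $\{c, u\}$ with $u \in (e \cup e') \setminus \{c\}$.
\end{itemize}
These three kinds are pairwise distinct because their orders differ. Within the last kind, $2$-subsets are counted once even when they lie in both hyperedges. Hence
\[
|\mathcal{S}_c[\mathcal{H}]| = 1 + 2 + \bigl(|e \cup e'| - 1\bigr) = 2 + |e \cup e'| = 8 - |e \cap e'|,
\]
using $|e \cup e'| = 6 - |e \cap e'|$.

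\textbf{Step 2 (read off intersections).} For each $c \in V^*$ I would list the two hyperedges through $c$ and their intersection.
\begin{itemize}
\item In $\mathcal{H}_1'$: $c=1,2$ lie in $\{1,2,3\},\{1,2,4\}$, which meet in $\{1,2\}$. For $c=5,6$ the hyperedges are $\{3,5,6\},\{4,5,6\}$, which meet in $\{5,6\}$. So $|e\cap e'|=2$ and the count is $6$.
\item In $\mathcal{H}_2'$: $c=1$ gives $\{1,2,3\},\{1,4,5\}$; $c=2$ gives $\{1,2,3\},\{2,4,6\}$; $c=5$ gives $\{1,4,5\},\{3,5,6\}$; $c=6$ gives $\{2,4,6\},\{3,5,6\}$. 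Each pair meets only in $\{c\}$, so the count is $7$.
\end{itemize}
This agrees with the pair-intersection histograms $\{0,0,1,1,2,2\}$ versus all $1$'s recorded above the statement.

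\textbf{Step 3 (attribute multisets).} Both hypergraphs are $3$-uniform, so every $2$-subset is unobserved ($\mathds{1}_S=0$, $\tau=\mathrm{neg}$). The singleton is also unobserved and tagged $\mathrm{center}$, identically in both hypergraphs. Each multiset therefore contains $(1,0,\mathrm{center})$ once and $(3,1,\mathrm{obs})$ twice. It contains $(2,0,\mathrm{neg})$ three times for $\mathcal{H}_1'$ and four times for $\mathcal{H}_2'$. So the multisets differ.

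The only delicate point is the deduplication in Step 1: a $2$-subset lying in both hyperedges through $c$ must be counted once. I would make that explicit by writing the $2$-subset tokens as $\{\{c,u\} : u \in (e\cup e')\setminus\{c\}\}$ rather than counting per hyperedge. After that, the remaining verification is the finite check of Step 2.
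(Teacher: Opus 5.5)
Your proposal is correct and follows essentially the same route as the paper: the two hyperedges through $c \in V^*$ share two vertices in $\mathcal{H}_1'$ but only $c$ in $\mathcal{H}_2'$, and the multisets then differ in the number of $(2,0,\mathrm{neg})$ tokens (three versus four). The only difference is cosmetic: you write the deduplicated count as $8 - |e \cap e'|$, whereas the paper lists the distinct $2$-subsets explicitly for each $c \in \{1,2,5,6\}$.
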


\begin{proof}
The structural driver: the two observed hyperedges in $\mathcal{H}_1'$ that contain $c \in V^*$ form a $2$-overlap pair (sharing two vertices including $c$), whereas in $\mathcal{H}_2'$ they form a $1$-overlap pair (sharing only $c$).
\begin{itemize}\itemsep0pt
\item $c = 1$: $\mathcal{H}_1'$ obs $\{1,2,3\}, \{1,2,4\}$ (overlap $\{1,2\}$), $2$-subsets containing $1$: $\{1,2\}, \{1,3\}, \{1,4\}$ (three distinct). $\mathcal{H}_2'$ obs $\{1,2,3\}, \{1,4,5\}$ (overlap $\{1\}$), $2$-subsets: $\{1,2\}, \{1,3\}, \{1,4\}, \{1,5\}$ (four distinct).
\item $c = 2$: $\mathcal{H}_1'$ obs $\{1,2,3\}, \{1,2,4\}$, $2$-subs $\{1,2\}, \{2,3\}, \{2,4\}$. $\mathcal{H}_2'$ obs $\{1,2,3\}, \{2,4,6\}$, $2$-subs $\{1,2\}, \{2,3\}, \{2,4\}, \{2,6\}$.
\item $c = 5$: $\mathcal{H}_1'$ obs $\{3,5,6\}, \{4,5,6\}$, $2$-subs $\{3,5\}, \{5,6\}, \{4,5\}$. $\mathcal{H}_2'$ obs $\{1,4,5\}, \{3,5,6\}$, $2$-subs $\{1,5\}, \{4,5\}, \{3,5\}, \{5,6\}$.
\item $c = 6$: $\mathcal{H}_1'$ obs $\{3,5,6\}, \{4,5,6\}$, $2$-subs $\{3,6\}, \{5,6\}, \{4,6\}$. $\mathcal{H}_2'$ obs $\{2,4,6\}, \{3,5,6\}$, $2$-subs $\{2,6\}, \{4,6\}, \{3,6\}, \{5,6\}$.
\end{itemize}
In all four cases $|\mathcal{S}_c[\mathcal{H}_1']| = 1 + 2 + 3 = 6$ and $|\mathcal{S}_c[\mathcal{H}_2']| = 1 + 2 + 4 = 7$, with attribute multisets differing in the count of $(|S|, \mathds{1}_S, \tau(S)) = (2, 0, \mathrm{neg})$ tokens (3 versus 4).
\end{proof}

\begin{lemma}[Inclusion DAGs are non-isomorphic at $c \in V^*$]
\label{lem:dag-noniso}
For every $c \in V^*$, the inclusion DAGs $\mathcal{G}_c[\mathcal{H}_1'] = (\mathcal{S}_c[\mathcal{H}_1'], \mathcal{R}_c[\mathcal{H}_1'])$ and $\mathcal{G}_c[\mathcal{H}_2']$ are non-isomorphic as labeled DAGs.
\end{lemma}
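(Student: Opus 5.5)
The plan is to compare a cheap isomorphism invariant of the two labeled DAGs rather than search for isomorphisms. Any isomorphism of labeled DAGs is a bijection on nodes that preserves the node labels $(|S|, \mathds{1}_S, \tau(S))$. Lemma~\ref{lem:hgpm-token-diff} already shows that for every $c \in V^*$ the node sets have different cardinalities, $6$ versus $7$. A bijection $\mathcal{S}_c[\mathcal{H}_1'] \to \mathcal{S}_c[\mathcal{H}_2']$ therefore cannot exist, and the DAGs are non-isomorphic. So the core of the proof is a one-line appeal to Lemma~\ref{lem:hgpm-token-diff}. For robustness I would also record edge-level invariants, so that the separation survives if the reader insists on comparing only structure and labels at fixed size.

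Concretely, for each $c \in V^*$ I would take the sampled token sets listed in the proof of Lemma~\ref{lem:hgpm-token-diff} and compute $\mathcal{R}_c$ via Eq.~\eqref{eq:dag-sets}. The edges are adjacent-order pairs inside $\mathcal{S}_c$.

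\begin{itemize}
\item Layer 1 to 2: $\{c\}$ is joined to every $2$-subset. This gives $3$ edges in $\mathcal{H}_1'$ and $4$ in $\mathcal{H}_2'$.
\item Layer 2 to 3: each $2$-subset is joined to each observed triple containing it.
\item In $\mathcal{H}_1'$, the shared $2$-subset (for example $\{1,2\}$ when $c=1$) has two parents, and the two others have one each. That gives $4$ edges and a $2$-subset of out-degree $2$.
\item In $\mathcal{H}_2'$, each of the four $2$-subsets has exactly one parent, giving $4$ edges.
\end{itemize}

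The totals are $7$ versus $8$ edges, and the out-degree sequences of the order-$2$ layer differ, $\{2,1,1\}$ versus $\{1,1,1,1\}$.

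Next I would compute the composition labels with Eq.~\eqref{eq:comp-label}. The singleton $\{c\}$ is not a hyperedge, and the $2$-subsets are absent, so edges from the singleton are both-absent and unlabeled. Every layer-2 to layer-3 edge is \textsc{Emer}. This yields $4$ \textsc{Emer} edges in each case, so the label counts alone do not separate the DAGs. That is why I would rely on cardinality and degree structure rather than label counts.

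I expect the main obstacle to be the treatment of the center token. Its indicator must be fixed consistently: $\{c\}\notin\mathcal{E}'$ and $\tau=\mathrm{center}$. The both-absent edges then have to be kept as unlabeled structural edges in $\mathcal{R}_c$, even though they are excluded from $\mathcal{C}(\mathcal{H})$. Once these conventions are fixed, the node-count argument is immediate. The four cases $c=1,2,5,6$ are symmetric under the relabelings $1\leftrightarrow 2$ and $\{1,2\}\leftrightarrow\{5,6\}$, which preserve both $\mathcal{E}_1'$ and $\mathcal{E}_2'$ up to the stated overlap pattern. So checking $c=1$ and invoking the overlap-pattern description (a $2$-overlap pair versus a $1$-overlap pair) finishes all cases.
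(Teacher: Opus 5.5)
Your proof is correct. It follows the same strategy as the paper: exhibit an isomorphism invariant computed from the token enumeration in Lemma~\ref{lem:hgpm-token-diff}. The difference is which invariant you lead with.

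The paper uses the $\tau$-restricted out-degree multiset $\{\!\!\{\deg^+ S : \tau(S) = \mathrm{neg}\}\!\!\}$, which is $\{2,1,1\}$ versus $\{1,1,1,1\}$. You lead with the total node count, $6$ versus $7$. That invariant needs neither the edge set nor label preservation, so it settles the lemma in one line. You also recover the paper's invariant, and the edge counts $7$ versus $8$, as a backup.

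The paper's choice buys robustness. Its maximum-value component ($2$ versus $1$) directly encodes the structural driver: a $2$-overlap versus a $1$-overlap pair of triples through $c$. It would still separate the DAGs if the sampling budget equalized node counts. For example, with a cap of three $2$-subsets you get $6$ tokens on both sides, but still $\{2,1,1\}$ versus $\{1,1,1\}$. Your cardinality argument is tied to the specific deterministic sampler.

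Your remark that each DAG carries exactly four \textsc{Emer} edges and no other labeled edges is correct. It is a useful observation the paper does not make: composition labels alone do not separate this pair.

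One small slip in the symmetry reduction: the transposition $1 \leftrightarrow 2$ does not preserve $\mathcal{E}_2'$, since it sends $\{1,4,5\}$ to $\{2,4,5\}$. The common automorphisms of both hypergraphs are $(1\,2)(5\,6)$ and $(1\,5)(2\,6)$, and together they act transitively on $V^*$. The reduction is unnecessary anyway, because Lemma~\ref{lem:hgpm-token-diff} already enumerates all four centers.
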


\begin{proof}
We show the label-restricted out-degree multiset $\{\!\!\{\deg^+ S : S \in \mathcal{S}_c, \tau(S) = \mathrm{neg}\}\!\!\}$ differs across $\mathcal{H}_1', \mathcal{H}_2'$. The out-degree of a $\mathrm{neg}$ token equals the number of observed hyperedges containing $c$ that have it as a $2$-subset.

In $\mathcal{H}_1'$ at any $c \in V^*$: by the $2$-overlap pair structure, exactly one $\mathrm{neg}$ token is a $2$-subset of \emph{both} observed hyperedges (the shared $2$-overlap; e.g., $\{1, 2\}$ at $c = 1$); the other two $\mathrm{neg}$ tokens are subsets of one observed hyperedge each. Out-degree multiset: $\{2, 1, 1\}$.

In $\mathcal{H}_2'$ at any $c \in V^*$: by the $1$-overlap pair structure, every $\mathrm{neg}$ token is a $2$-subset of exactly one observed hyperedge. Out-degree multiset: $\{1, 1, 1, 1\}$.

The label-restricted out-degree multiset is invariant under any DAG isomorphism preserving $\tau$. Multisets $\{2, 1, 1\} \neq \{1, 1, 1, 1\}$ (differ in cardinality 3 vs 4 and maximum value 2 vs 1), hence $\mathcal{G}_c[\mathcal{H}_1'] \not\cong \mathcal{G}_c[\mathcal{H}_2']$ for every $c \in V^*$.
\end{proof}

\begin{lemma}[Generic separation on non-isomorphic labeled DAGs]
\label{lem:gen-sep}
Let $\Theta = \prod_i \mathbb{R}^{d_i}$ be HGPM's parameter space (attribute embeddings $\mathbf{e}^{\mathrm{ord}}, \mathbf{e}^{\mathrm{ex}}, \mathbf{e}^{\mathrm{src}}, \mathbf{e}^{\mathrm{view}}, \mathbf{e}^{\mathrm{pos}}$, structural-bias parameters $b^{\mathrm{dir}}, b^{\mathrm{ord}}, b^{\mathrm{ovlp}}, b^{\mathrm{sib}}, b^{\mathrm{comp}}$, per-layer attention $\{W_Q^{(\ell)}, W_K^{(\ell)}, W_V^{(\ell)}, W_O^{(\ell)}\}$, FFN/LayerNorm weights, attention-pool weights $(\mathbf{w}, \mathbf{W})$). For two finite labeled DAGs $\mathcal{G}_1, \mathcal{G}_2$, define $f_i(\theta) = \mathrm{HGPM}_c(\theta)(\mathcal{G}_i)$. Assume:
\begin{itemize}\itemsep0pt
\item[(A1)] $f_1, f_2$ are real-analytic on $\Theta$ (closed under finite compositions of softmax, $\tanh$, GELU, linear maps).
\item[(A2)] HGPM is permutation-equivariant in token positions; $f_i$ depends only on the labeled-DAG isomorphism class of $\mathcal{G}_i$.
\item[(A3)] There exists $\theta_0 \in \Theta$ with $f_1(\theta_0) \neq f_2(\theta_0)$.
\end{itemize}
Then the equality locus $\Theta_= = \{\theta : f_1(\theta) = f_2(\theta)\}$ is a real-analytic proper subvariety of $\Theta$, hence Lebesgue-null.
\end{lemma}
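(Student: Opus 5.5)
The plan is to reduce the claim to the standard fact that the zero set of a real-analytic function which is not identically zero has Lebesgue measure zero. Define $g : \Theta \to \mathbb{R}^{m}$ by $g(\theta) = f_1(\theta) - f_2(\theta)$, where $m$ is the output dimension of the readout. Then $\Theta_= = g^{-1}(0)$. To pass to a scalar function, set
\[
G(\theta) = \|g(\theta)\|_2^2 = \sum_{k=1}^{m} g_k(\theta)^2 ,
\]
so that $\Theta_= = G^{-1}(0)$ exactly. By (A1), each coordinate of $f_1$ and $f_2$ is real-analytic on the open set $\Theta \cong \mathbb{R}^{D}$, with $D = \sum_i d_i$. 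Therefore each $g_k$ is real-analytic, and so is $G$, since sums and products of analytic functions are analytic. This shows that $\Theta_=$ is a real-analytic subvariety: it is the common zero locus of the finitely many analytic functions $g_1, \dots, g_m$.

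Next I show the subvariety is proper. By (A3), $g(\theta_0) \neq 0$, so $G(\theta_0) > 0$ and $G \not\equiv 0$. Because $\Theta$ is connected, the identity theorem for real-analytic functions then implies that $G$ does not vanish on any nonempty open set. Hence $\Theta_= \neq \Theta$, and in fact $\Theta_=$ has empty interior.

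The remaining step, and the only substantive one, is the measure-zero claim. I would prove it by induction on $D$ using Fubini. For $D = 1$, the zeros of a nonzero analytic function on $\mathbb{R}$ are isolated, hence countable, hence null. For the inductive step, write $\theta = (t, \theta')$ with $t \in \mathbb{R}$ and $\theta' \in \mathbb{R}^{D-1}$, and expand $G(t, \theta') = \sum_j a_j(\theta')\, t^j$ locally around a point. Let $B$ be the set of $\theta'$ at which every $a_j(\theta')$ vanishes. On $B$, $G$ vanishes along an entire line segment in $t$; the $a_j$ are analytic and not all identically zero (otherwise $G$ would vanish on an open set), so $B$ is null by the inductive hypothesis. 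For $\theta' \notin B$, the one-variable slice $t \mapsto G(t, \theta')$ is a nonzero analytic function, so its zero set in $t$ is null. Fubini then gives that $G^{-1}(0)$ is null. The induction should be run locally on a countable cover of $\Theta$ by cubes, using the identity theorem to guarantee that $G \not\equiv 0$ on each cube.

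Alternatively, one can cite the known lemma on zero sets of analytic functions (Mityagin) and skip the induction.

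Assumption (A2) is not used in the measure-theoretic argument. Its role is to make $f_i$ a well-defined function of the isomorphism class of the labeled DAG, so the statement does not depend on how the tokens are linearized. The genuine difficulty is hidden in (A3), which must be verified separately for the witness DAGs of Lemma~\ref{lem:dag-noniso}. For that I would take $\theta_0$ with all attention weights and biases set to zero, so attention is uniform and acts like mean pooling. Then I would choose $E_{\mathrm{ex}}$ and $E_\tau$ so that each neg token contributes a fixed vector. The pooled output then depends on the token count (6 versus 7), as in Lemma~\ref{lem:hgpm-token-diff}, and the outputs differ. That verification belongs to the subsequent theorem rather than to this lemma.
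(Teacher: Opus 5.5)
Your proposal is correct and follows the same route as the paper: form $g = f_1 - f_2$, use (A1) for real-analyticity and (A3) to obtain a function that is not identically zero, then invoke the fact that a nonzero real-analytic function on a connected domain has a Lebesgue-null zero set. The differences are cosmetic: you use $\|g\|_2^2$ where the paper picks a single nonvanishing coordinate $g_j$, and you sketch the Fubini induction behind the Mityagin lemma that the paper simply cites.
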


\begin{proof}
$g(\theta) = f_1(\theta) - f_2(\theta) \in \mathbb{R}^d$ is real-analytic by (A1). $\Theta_= = g^{-1}(0) = \bigcap_{j=1}^d g_j^{-1}(0)$. By (A3), at least one component $g_j$ is not identically zero. The zero set of a non-zero real-analytic function on a connected open subset of $\mathbb{R}^N$ has Lebesgue measure zero \citep{mityagin2020,krantzparks2002}. Hence $\Theta_= \subseteq g_j^{-1}(0)$ has measure zero.
\end{proof}

For the statement below we write $\mathbf{h}_c^{\mathrm{ctx}}[\theta, \mathcal{H}] := \sum_{S \in \mathcal{S}_c[\mathcal{H}]} \alpha_S \mathbf{h}_S^{(L)}$ for the attention-pooled context branch of the readout in Appendix~\ref{sec:downstream-readouts}, viewed as a function of HGPM weights $\theta$; this is the second concatenand of $\mathbf{h}_c^{\mathrm{node}}$, and a separation in $\mathbf{h}_c^{\mathrm{ctx}}$ lifts to a separation in $\mathbf{h}_c^{\mathrm{node}}$ since $\mathbf{h}_{\{c\}}^{(L)}$ is a fixed function of $\theta$ and the input subset features alone.

\begin{theorem}[HGPM Separation]
\label{thm:hgpm-separation}
Consider Construction~\ref{constr:t3-pair}.
\begin{enumerate}
\item[(a)] Every hyperedge-MP encoder produces identical per-vertex features and identical hyperedge-feature multisets on $\mathcal{H}_1', \mathcal{H}_2'$ (Lemma~\ref{lem:wl-collapse-prime}).
\item[(b)] For every $c \in V^* = \{1, 2, 5, 6\}$, the inclusion DAGs satisfy $\mathcal{G}_c[\mathcal{H}_1'] \not\cong \mathcal{G}_c[\mathcal{H}_2']$ (Lemma~\ref{lem:dag-noniso}). Consequently, for any HGPM parameterization $\theta$ outside a Lebesgue-null subvariety $\Theta_=^c \subset \Theta$, the centered representation $\mathbf{h}_c^{\mathrm{ctx}}[\theta, \mathcal{H}_1']$ differs from $\mathbf{h}_c^{\mathrm{ctx}}[\theta, \mathcal{H}_2']$.
\item[(c)] Compositionality structures differ: $\mathcal{C}(\mathcal{H}_1') \neq \mathcal{C}(\mathcal{H}_2')$. The labeled covering pair $((\{1, 2\}, \{1, 2, 4\}), \textsc{Emer})$ lies in $\mathcal{C}(\mathcal{H}_1') \setminus \mathcal{C}(\mathcal{H}_2')$.
\end{enumerate}
\end{theorem}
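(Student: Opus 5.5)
The plan is to assemble the three clauses from the lemmas already established for Construction~\ref{constr:t3-pair}. Only clause~(b) needs a genuinely new argument, namely an explicit witness for assumption (A3) of Lemma~\ref{lem:gen-sep}.

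Clause~(a) is immediate from Lemma~\ref{lem:wl-collapse-prime}, which combines the bipartite-WL collapse with the upper bound of Lemma~\ref{lem:wl-ub}. Both hypergraphs are 3-uniform, 2-regular, and carry constant features, so every vertex and hyperedge color agrees pointwise at every iteration.

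Clause~(c) is a direct check against Eq.~\eqref{eq:comp-label}. In $\mathcal{H}_1'$ we have $\{1,2\}\notin\mathcal{E}_1'$ and $\{1,2,4\}\in\mathcal{E}_1'$, so the cover is \textsc{Emer}. In $\mathcal{H}_2'$ neither $\{1,2\}$ nor $\{1,2,4\}$ belongs to $\mathcal{E}_2'$, so the cover is both-absent and excluded from $\mathcal{C}(\mathcal{H}_2')$. Hence the labeled pair lies in $\mathcal{C}(\mathcal{H}_1')\setminus\mathcal{C}(\mathcal{H}_2')$.

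For clause~(b), non-isomorphism of the labeled DAGs is exactly Lemma~\ref{lem:dag-noniso}. To obtain generic separation I will apply Lemma~\ref{lem:gen-sep} to $f_i(\theta)=\mathbf{h}_c^{\mathrm{ctx}}[\theta,\mathcal{H}_i']$, checking its assumptions in turn.
\begin{itemize}
\item (A1) holds because the encoder and the attention pool are finite compositions of linear maps, softmax, GELU/$\tanh$ and LayerNorm. LayerNorm is analytic away from zero variance, so I will either restrict to the open full-measure set where the variance is positive or note that the relevant domain is connected open. Either choice suffices for the measure-zero conclusion.
\item (A2) is invariance under relabeling: the canonical linearization and the position embedding depend only on the labeled structure, and the pooled sum is order-independent.
\item (A3) is the main obstacle, because non-isomorphism alone does not produce a $\theta_0$ with different outputs.
\end{itemize}

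To obtain (A3), I would use Lemma~\ref{lem:hgpm-token-diff} rather than the DAG structure. Choose $\theta_0$ as follows:
\begin{itemize}
\item set every attention bias and every $W_Q,W_K$ to zero, so attention is uniform, and set $W_O=0$, so each layer is the identity on the residual stream;
\item set the FFN to zero;
\item set the pool score weights $\mathbf{w}=0$, so $\alpha_S$ is uniform over $\mathcal{S}_c$;
\item set $W_{\mathrm{x}}=0$ and let the position embedding vanish;
\item let the embedding $\mathbf{E}_{\mathrm{ex}}$ of the absent indicator be a nonzero vector $u$ and all other lookups be zero.
\end{itemize}
(If LayerNorm interferes, replace its gain and bias appropriately or place it off the residual path at $\theta_0$.) Under these choices, $\mathbf{h}^{\mathrm{ctx}}_c$ equals $\mathbf{W}$ applied to the fraction of tokens carrying $\mathds{1}_S=0$, times $u$. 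That fraction is $3/6$ for $\mathcal{H}_1'$ and $4/7$ for $\mathcal{H}_2'$, so taking $\mathbf{W}u\neq0$ gives $f_1(\theta_0)\neq f_2(\theta_0)$.

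Lemma~\ref{lem:gen-sep} then shows that the equality locus $\Theta^c_=$ is Lebesgue-null. Taking the union over the four centers $c\in V^*$ keeps it null, which completes~(b).
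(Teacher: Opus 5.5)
Your proposal is correct and follows essentially the paper's route. Clauses (a) and (c) come from Lemma~\ref{lem:wl-collapse-prime} and the direct cover check. Clause (b) applies Lemma~\ref{lem:gen-sep} with an explicit witness that silences attention and the FFN, pools uniformly, and exploits the token-count gap of Lemma~\ref{lem:hgpm-token-diff}. The only real difference is that you carry the signal through a single existence-embedding vector $u$, so the pooled output is affine in one fraction. The paper instead uses three orthonormal source embeddings, so your version spares you its linear-independence step.

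There are two harmless slips. First, the center singleton also has $\mathds{1}_{\{c\}}=0$, since both hypergraphs are $3$-uniform, so the absent fractions are $4/6$ versus $5/7$ rather than $3/6$ versus $4/7$; these are still distinct. Second, $\mathbf{W}$ enters only inside the pool score, so with $\mathbf{w}=0$ the pooled vector is $a+p\,(b-a)$, where $a$ and $b$ are the final states of present and absent tokens. What you need is $a\neq b$, not $\mathbf{W}u\neq 0$; taking $u$ non-constant ensures LayerNorm does not collapse it onto the image of $0$.
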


\begin{proof}
\textbf{(a)} By Lemma~\ref{lem:wl-collapse-prime}.

\textbf{(b)} Fix $c \in V^*$. Lemma~\ref{lem:dag-noniso} gives $\mathcal{G}_c[\mathcal{H}_1'] \not\cong \mathcal{G}_c[\mathcal{H}_2']$. We construct an explicit separating witness $\theta_0 \in \Theta$ to verify (A3) of Lemma~\ref{lem:gen-sep}.

\emph{Witness construction.} Choose:
\begin{itemize}\itemsep0pt
\item Embeddings $\mathbf{e}^{\mathrm{ord}} = \mathbf{e}^{\mathrm{ex}} = \mathbf{e}^{\mathrm{view}} = \mathbf{e}^{\mathrm{pos}} = 0$, and $\mathbf{e}^{\mathrm{src}}$ orthonormal across $\{\mathrm{center}, \mathrm{obs}, \mathrm{neg}\}$ in $\mathbb{R}^d$ with $d \geq 4$.
\item Structural biases zero: $b^{\mathrm{dir}} = b^{\mathrm{ord}} = b^{\mathrm{ovlp}} = b^{\mathrm{sib}} = b^{\mathrm{comp}} = 0$.
\item $W_Q^{(\ell)} = W_K^{(\ell)} = 0$ (uniform attention weights $1/|\mathcal{S}_c|$); $W_V^{(\ell)} = 0$ (zero attention contribution); $W_O^{(\ell)}$ arbitrary; FFN identity (zero bias, identity weights).
\item Attention-pool weights $\mathbf{w} = \mathbf{W} = 0$ (uniform mean pool $\alpha_S = 1/|\mathcal{S}_c|$).
\end{itemize}
Under this $\theta_0$ the residual stream is unchanged at every layer ($W_V = 0$ kills attention output, FFN is identity), so $\mathbf{h}_S^{(L)}$ is determined by $\tau(S)$ alone modulo a deterministic per-token LayerNorm. Define $v_\tau$ as the common feature of all type-$\tau$ tokens (well-defined because all type-$\tau$ inputs are equal under $\theta_0$). The pooled output is
\[
   \mathbf{h}_c^{\mathrm{ctx}}[\mathcal{H}] = \frac{1}{|\mathcal{S}_c[\mathcal{H}]|}\sum_{\tau \in \{\mathrm{center}, \mathrm{neg}, \mathrm{obs}\}} n_\tau[\mathcal{H}]\, v_\tau.
\]
By Lemma~\ref{lem:hgpm-token-diff}, for $c \in V^*$ the type counts are $(n_{\mathrm{center}}, n_{\mathrm{neg}}, n_{\mathrm{obs}}) = (1, 3, 2)$ in $\mathcal{H}_1'$ (cardinality $6$) and $(1, 4, 2)$ in $\mathcal{H}_2'$ (cardinality $7$). Subtracting,
\[
   \mathbf{h}_c^{\mathrm{ctx}}[\mathcal{H}_1'] - \mathbf{h}_c^{\mathrm{ctx}}[\mathcal{H}_2'] = \tfrac{1}{42}\bigl(v_{\mathrm{center}} - 3\, v_{\mathrm{neg}} + 2\, v_{\mathrm{obs}}\bigr),
\]
using $\tfrac{1}{6} - \tfrac{1}{7} = \tfrac{1}{42}$, $\tfrac{3}{6} - \tfrac{4}{7} = -\tfrac{3}{42}$, $\tfrac{2}{6} - \tfrac{2}{7} = \tfrac{2}{42}$. For $d \geq 4$, the LayerNorm-images of three orthonormal unit basis vectors remain linearly independent (their non-zero coordinates lie at distinct indices), so $\{v_{\mathrm{center}}, v_{\mathrm{neg}}, v_{\mathrm{obs}}\}$ are linearly independent. The coefficients $(1, -3, 2)$ are not all zero, hence $\mathbf{h}_c^{\mathrm{ctx}}[\mathcal{H}_1'] \neq \mathbf{h}_c^{\mathrm{ctx}}[\mathcal{H}_2']$, establishing (A3).

(A1) holds because softmax / $\tanh$ / GELU / linear maps are real-analytic and finite compositions thereof are real-analytic. (A2) holds because HGPM's structural biases depend only on labeled pairwise relations between tokens, not on absolute positions. By Lemma~\ref{lem:gen-sep}, the equality locus $\Theta_=^c$ is Lebesgue-null.

\textbf{(c)} By Theorem~\ref{thm:comp-coboundary}(b), $\mathcal{E}_1' \neq \mathcal{E}_2'$ implies $\mathcal{C}(\mathcal{H}_1') \neq \mathcal{C}(\mathcal{H}_2')$. Direct verification: $\{1, 2\} \notin \mathcal{E}_1'$ and $\{1, 2, 4\} \in \mathcal{E}_1'$ give EMER in $\mathcal{H}_1'$; $\{1, 2, 4\} \notin \mathcal{E}_2'$ excludes the cover from $\mathcal{C}(\mathcal{H}_2')$.
\end{proof}

\begin{remark}[One-way separation, not strict containment]
\label{rem:one-way}
Theorem~\ref{thm:hgpm-separation} establishes a one-way separation on the compositionality task: there is a hypergraph pair on which HGPM's representation distinguishes what hyperedge-MP cannot. It does \emph{not} establish strict containment of expressivity classes. HGPM is per-target with budget-bounded subset sampling and cannot natively simulate hyperedge-MP's cross-vertex global propagation. The two function classes are \emph{incomparable}; HGPM's design buys expressivity on inclusion-hierarchy patterns at the cost of unrestricted graph propagation.
\end{remark}

\begin{remark}[Targets $c \in \{3, 4\}$ are sampling-deficient on Construction~\ref{constr:t3-pair}]
\label{rem:c-deficient}
At $c \in \{3, 4\}$ the centered token sets $\mathcal{S}_c[\mathcal{H}_1']$ and $\mathcal{S}_c[\mathcal{H}_2']$ coincide as multisets, since the two observed hyperedges containing $c$ are identical across $\mathcal{H}_1', \mathcal{H}_2'$ (e.g., for $c = 3$ both contain $\{1, 2, 3\}$ and $\{3, 5, 6\}$). HGPM's deterministic sampling at these targets is therefore blind to the difference; this is a property of the construction rather than a defect of HGPM. For tasks aggregating across $c \in \mathcal{V}'$ (node classification, graph classification), the four targets in $V^*$ already yield an output difference, so the global task is solvable on this construction by HGPM but not by hyperedge-MP.
\end{remark}

\begin{remark}[Generic vs.\ trained weights]
\label{rem:generic-caveat}
``Generic'' in Theorem~\ref{thm:hgpm-separation}(b) means Lebesgue-a.e.\ in $\Theta$. Trained HGPM weights are not random; the practical claim is that no Zariski-open obstruction blocks the separation, and any absolutely continuous initializer followed by gradient descent avoids measure-zero loci with probability one --- a folklore observation rather than a theorem; we make no formal claim about trained $\theta$. If HGPM employs ReLU rather than GELU in the FFN, real-analyticity (A1) fails on a measure-zero locus where preactivations cross zero; the conclusion still holds for almost all $\theta$ by stratified analyticity, or by replacing ReLU with GELU.
\end{remark}

\section{Technical Details}
\label{sec:tech-details}

We collect the implementation specifics deferred from Section~\ref{sec:method}, organized along the pipeline: building the inclusion DAG (Section~\ref{sec:tech-dag}), the encoder and pretraining heads on top of it (Section~\ref{sec:tech-arch}), and the finetuning side, both transfer and downstream readouts (Section~\ref{sec:tech-finetune}).

\subsection{Inclusion DAG Construction}
\label{sec:tech-dag}
\label{sec:dag-construction}

\paragraph{Negative subsets.} For each observed $e \in \mathcal{E}$ with $c \in e$, three guarded operations propose negative candidates:
\begin{align*}
\textsc{Drop} &: \text{if } |e| > 2,\ \text{sample } v \in e \setminus \{c\},\ \text{emit } e \setminus \{v\}, \\
\textsc{Add} &: \text{if } |e| < K_{\max},\ \text{sample } u \in \mathcal{V} \setminus e,\ \text{emit } e \cup \{u\}, \\
\textsc{Swap} &: \text{repeat } r \text{ times: sample } (v, u) \in (e \setminus \{c\}) \times (\mathcal{V} \setminus e),\ \text{emit } (e \setminus \{v\}) \cup \{u\},
\end{align*}
with $r \in \{1, 2, 3\}$ tuned per dataset. A candidate $S$ is accepted iff $c \in S$ and $\mathds{1}_S = 0$. By construction the resulting pool $\mathcal{N}_c$ is lattice-adjacent to $\mathcal{E}$ (\textsc{Drop}/\textsc{Add} land one order away, \textsc{Swap} stays at the same order with symmetric difference two), so every negative participates in some adjacent-order pair with an observed counterpart, which is what activates the \textsc{Inhib} and \textsc{Emer} branches of Eq.~\eqref{eq:comp-label}.

\paragraph{Bounded sampling and stochastic views.} With per-order cap $K_o$ and negative quota $B^{\mathrm{neg}}_o \in \{1, 2\}$, each order $o \in \{2, \ldots, K_{\max}\}$ is filled observed-heavy: take $n_+ = \min(|P_o|, K_o)$ uniform draws (without replacement) from $P_o = \{e \in \mathcal{E}: c \in e, |e| = o\}$, then $n_- = \min(B^{\mathrm{neg}}_o, K_o - n_+)$ from $N_o = \{S \in \mathcal{N}_c: |S| = o\}$. The edge set is then recomputed as $\mathcal{R}_c = \{(S, S') \in \mathcal{S}_c^2 : S \subset S',\ |S'| = |S| + 1\}$. We sample $V = 2$ independent DAGs per center and concatenate them into a single token sequence of length $V \sum_o K_o$. Adjacency-keyed biases ($b^{\mathrm{dir}}$, $b^{\mathrm{sib}}$) are computed view-locally; content-keyed biases ($b^{\mathrm{comp}}$, $b^{\mathrm{ord}}$, $b^{\mathrm{ovlp}}$) fire across views, so the encoder averages compositionality cues over the $V$ samples of $\mathcal{N}_c$.

\subsection{Encoder Architecture and Pretraining Heads}
\label{sec:tech-arch}

\paragraph{Token embedding.} Extending the four-attribute lookup of Section~\ref{sec:method-encoder}, we add a fifth additive term, the view embedding $\mathbf{e}^{\mathrm{view}}_{v(S)}$, so identical subsets across views remain distinguishable. The feature projector is a 2-layer MLP (Linear, ReLU, Dropout, Linear) at width $d$; every attribute table reserves index $0$ for padding, and the position table has capacity $1024$.

\paragraph{Structural biases.}
\label{sec:aux-biases}
The composition bias $b^{\mathrm{comp}}_{ij}$ is a $7$-row per-head embedding table, zero-initialized, indexed by the asymmetric role pair $(\tau(S_i), \tau(S_j))$ with $i$ the query (Table~\ref{tab:exist-transition-bucket}). The asymmetry exploits that every non-center token contains $\{c\}$, so a center query reads informative role signals (buckets $5$, $6$); mirror cases collapse into bucket $0$. The auxiliary term $b^{\mathrm{aux}} = b^{\mathrm{ord}} + b^{\mathrm{ovlp}} + b^{\mathrm{sib}}$ is also per-head and zero-initialized: $b^{\mathrm{ord}}$ uses $7$ buckets on the signed gap $|S_i| - |S_j|$ at $\{\leq -3, -2, -1, 0, 1, 2, \geq 3\}$; $b^{\mathrm{ovlp}}$ uses $5$ buckets on the Jaccard ratio $|S_i \cap S_j|/|S_i \cup S_j|$ at thresholds $\{0, 0.25, 0.5, 0.75\}$; $b^{\mathrm{sib}}$ is a scalar gated by whether $S_i \neq S_j$ share an immediate parent or child in $\mathcal{G}_c$.

\begin{table}[h]
\caption{Asymmetric bucket index for $b^{\mathrm{comp}}_{ij}$. $i$ is the query, $j$ the key. Bucket $0$ is the catch-all for any role tuple outside the six explicit cases (including pairs whose key is the center).}
\label{tab:exist-transition-bucket}
\centering
\small
\begin{tabular}{lc}
\toprule
$(\tau(S_i),\, \tau(S_j))$ & Bucket \\
\midrule
$(\mathrm{obs},\, \mathrm{obs})$ & $1$ \\
$(\mathrm{obs},\, \mathrm{neg})$ & $2$ \\
$(\mathrm{neg},\, \mathrm{obs})$ & $3$ \\
$(\mathrm{neg},\, \mathrm{neg})$ & $4$ \\
$(\mathrm{center},\, \mathrm{obs})$ & $5$ \\
$(\mathrm{center},\, \mathrm{neg})$ & $6$ \\
all other role tuples & $0$ \\
\bottomrule
\end{tabular}
\end{table}

\paragraph{Pretraining heads.} The semantic head $\mathrm{Pred}: \mathbb{R}^d \to \mathbb{R}^d$ is a 2-layer MLP (Linear, GELU, Dropout, Linear) at width $d$, regressed against the EMA teacher $\mathrm{TeacherMLP}: \mathbb{R}^{d_{\mathrm{in}}} \to \mathbb{R}^d$ (Linear, GELU, Linear at width $d$) on masked positions $\mathcal{M}$. The existence head is a $d \to d/2 \to 1$ bottleneck with GELU and Dropout, emitting the BCE logit at every non-padding token. The mask token $\mathbf{e}_{\mathrm{mask}} \in \mathbb{R}^d$ is initialized elementwise from $\mathcal{N}(0, 0.02^2)$; the per-batch mask ratio is $\rho = 0.2$ over non-center tokens.

\subsection{Finetuning: Weight Transfer and Downstream Readouts}
\label{sec:tech-finetune}

\paragraph{Weight transfer.} At finetune, a non-strict state-dict load keeps the encoder backbone and structural-bias tables: \texttt{feature\_projection}, the five attribute embeddings (\texttt{order}, \texttt{exist}, \texttt{exist\_source}, \texttt{view}, \texttt{position}), the four bias tables (\texttt{edge\_direction}, \texttt{order\_distance}, \texttt{overlap\_bucket}, \texttt{exist\_transition}), the sibling-bias scalar, every Transformer block, and the final LayerNorm. The pretraining heads, $\mathbf{e}_{\mathrm{mask}}$, $\mathrm{TeacherMLP}$, and the readouts below are reinitialized; Appendix~\ref{sec:full-ablation} reports the pretrain-vs-scratch ablation.

\paragraph{Node classification.}
\label{sec:downstream-readouts}
An attention pool $\alpha_S = \mathrm{softmax}_S(\mathbf{w}^{\top} \tanh(\mathbf{W} \mathbf{h}_S^{(L)}))$ is concatenated with the center, $\mathbf{h}_c^{\mathrm{node}} = [\mathbf{h}_{\{c\}}^{(L)} \,\|\, \sum_S \alpha_S \mathbf{h}_S^{(L)}]$, and decoded by an MLP under cross-entropy.

\paragraph{Higher-order drug interaction.} For a combination $G = \{d_1, \ldots, d_m\}$ optionally conditioned on a side-effect $s$, we build one inclusion DAG per drug, average their center embeddings $\mathbf{h}_G = \tfrac{1}{m} \sum_j \mathbf{h}_{\{d_j\}}^{(L)}$, and decode $\hat{y}_{G,s} = \sigma(\mathrm{MLP}([\mathbf{h}_G \,\|\, \mathbf{e}_s]))$ with a learnable $\mathbf{e}_s$ (zero if unconditional) under binary cross-entropy.

\section{Datasets}
\label{sec:dataset-details}

\subsection{General Hypergraph Benchmarks}
\label{sec:dataset-general}

We adopt the standard $8$-benchmark suite for hypergraph node classification established by AllSet~\citep{chien2022allset} and extended with the four heterophilic datasets in ED-HNN~\citep{wang2023edhnn}. The suite splits into four homophilic citation / co-authorship datasets and four structurally diverse / heterophilic datasets. Across all eight datasets we use the same $50/25/25$ random train/validation/test split, and report mean $\pm$ std over ten random seeds. Table~\ref{tab:dataset-stats} summarizes per-dataset statistics; per-dataset descriptions follow.

\begin{table}[h]
\caption{Statistics of the eight hypergraph node classification benchmarks. avg.\,$|e|$ is the mean hyperedge size; CE-hom.\ is the clique-expansion homophily ratio (values $\approx 0.5$ indicate heterophilic behavior); feat.\,dim is the dimension of node input features (synthetic CSBM-style Gaussian features, dim $100$, are imputed for the heterophilic datasets in the absence of native node attributes).}
\label{tab:dataset-stats}
\centering
\small
\begin{tabular}{lrrrrrr}
\toprule
Dataset & $|V|$ & $|E|$ & avg.\,$|e|$ & \#cls & feat.\,dim & CE-hom. \\
\midrule
Citeseer & $3{,}312$  & $1{,}079$  & $3.20$  & $6$  & $3703$ & $0.893$ \\
Pubmed   & $19{,}717$ & $7{,}963$  & $4.35$  & $3$  & $500$  & $0.952$ \\
Cora-CA  & $2{,}708$  & $1{,}072$  & $4.28$  & $7$  & $1433$ & $0.803$ \\
DBLP-CA  & $41{,}302$ & $22{,}363$ & $4.45$  & $6$  & $1425$ & $0.869$ \\
\midrule
Congress & $1{,}718$  & $83{,}105$ & $8.66$  & $2$  & $100$  & $0.555$ \\
Senate   & $282$      & $315$      & $17.17$ & $2$  & $100$  & $0.498$ \\
Walmart  & $88{,}860$ & $69{,}906$ & $6.59$  & $11$ & $100$  & $0.530$ \\
House    & $1{,}290$  & $340$      & $34.73$ & $2$  & $100$  & $0.509$ \\
\bottomrule
\end{tabular}
\end{table}

\paragraph{Citeseer (cocitation).}
Hyperedges represent sets of papers co-cited by another paper. Adapted to the hypergraph format in HyperGCN~\citep{yadati2019hypergcn}. Node features are bag-of-words abstracts (dim $3703$); labels are six paper subject categories.

\paragraph{Pubmed (cocitation).}
Same cocitation construction as Citeseer, sourced via HyperGCN. Features: dim $500$ TF-IDF. Labels: three diabetes subtypes.

\paragraph{Cora-CA (co-authorship).}
Hyperedges represent sets of papers sharing one author, processed by HyperGCN~\citep{yadati2019hypergcn}. Features: dim $1433$ bag-of-words; labels: seven paper topics.

\paragraph{DBLP-CA (co-authorship).}
Co-authorship hyperedges from the DBLP citation network, released via HyperGCN~\citep{yadati2019hypergcn}; the largest co-authorship benchmark in the suite. Features: dim $1425$ bag-of-words; labels: six venue/topic classes.

\paragraph{Congress.}
Bill-cosponsorship network of US Congress members, introduced into the hypergraph-NN literature by ED-HNN~\citep{wang2023edhnn}: nodes are Congresspersons, hyperedges are the sets of (sponsor + co-sponsors) of a bill. With no native attributes, features are imputed as a label-dependent CSBM-style Gaussian (dim $100$, $\sigma = 1$; full specification in the Preprocessing paragraph below). Labels: political party ($2$ classes). Notably dense: average node degree $\approx 427$.

\paragraph{Senate.}
Same construction as Congress, restricted to Senate-only bills. Features: synthetic Gaussian (dim $100$, $\sigma = 1$). Labels: party ($2$ classes). Smallest dataset in the suite ($|V| = 282$); results have unusually high seed-to-seed variance.

\paragraph{Walmart.}
Customer trip data introduced to the hypergraph-NN literature by AllSet~\citep{chien2022allset}: nodes are products, hyperedges are sets of products purchased together in a trip. Features: synthetic Gaussian (dim $100$, $\sigma = 1$). Labels: $11$ product categories. Largest benchmark in the suite ($|V| = 88{,}860$).

\paragraph{House.}
US House of Representatives committee membership network packaged for hypergraph-NN by AllSet~\citep{chien2022allset}: nodes are representatives, hyperedges are committee memberships. Features: synthetic Gaussian (dim $100$, $\sigma = 1$). Labels: party ($2$ classes). Notable for very large hyperedges (avg.\ $|e| \approx 34.7$).

\subsection{Drug Interaction Datasets}
\label{sec:dataset-drugs}

We evaluate HGPM on two higher-order drug interaction datasets, HODDI~\citep{wang2025hoddi} and the Japanese Adverse Drug Event Report (JADER) database~\citep{pmda_jader}\footnote{\url{https://www.info.pmda.go.jp/fukusayoudb/CsvDownload}}. Both are processed into the same hyperedge format---each hyperedge is an observed drug combination labeled with a side-effect class---and use a $50/25/25$ random train/validation/test split. Per-dataset statistics after preprocessing are summarized in Table~\ref{tab:drug-stats}.

\begin{table}[h]
\caption{Statistics of the two drug interaction datasets after preprocessing. The lower block shows the per-combination-size sample count.}
\label{tab:drug-stats}
\centering
\small
\begin{tabular}{lrr}
\toprule
Field & HODDI & JADER \\
\midrule
\#drugs (nodes)            & $1{,}821$ & $4{,}230$ \\
\#hyperedges (samples)     & $72{,}986$ & $126{,}334$ \\
\#side-effect classes      & $1{,}710$ & $100$ (filtered) \\
Combo size range           & $2$--$8$ & $3$--$8$ \\
Mean combo size            & $\sim 2.7$ & $\sim 3.9$ \\
Positive / Negative        & $42{,}113$ / $30{,}873$ & $63{,}167$ / $63{,}167$ \\
Train / Val / Test         & $36{,}493$ / $18{,}246$ / $18{,}247$ & $63{,}116$ / $31{,}508$ / $31{,}710$ \\
\midrule
Size $2$       & $49{,}172$ & --- \\
Size $3$       & $12{,}764$ & $68{,}722$ \\
Size $4$       & $5{,}382$ & $30{,}914$ \\
Size $5$       & $3{,}252$ & $13{,}960$ \\
Size $6$       & $1{,}278$ & $6{,}422$ \\
Size $7$       & $689$ & $4{,}040$ \\
Size $8$       & $449$ & $2{,}276$ \\
\bottomrule
\end{tabular}
\end{table}

\paragraph{HODDI.}
HODDI~\citep{wang2025hoddi} is a higher-order drug--side-effect benchmark derived from FAERS spontaneous-reporting case reports. After deduplication, drug-name normalization, and side-effect filtering, each combination of $2$--$8$ drugs is associated with one of $1{,}710$ side-effect classes. We download HODDI from the authors' release at \url{https://github.com/TIML-Group/HODDI} (MIT license) and apply our $50/25/25$ random split protocol; this differs from the quarter-based temporal split used in the original HODDI paper.

\paragraph{JADER.}
JADER (Japanese Adverse Drug Event Report) is the spontaneous-reporting pharmacovigilance database maintained by Japan's Pharmaceuticals and Medical Devices Agency (PMDA), publicly available at \url{https://www.info.pmda.go.jp/fukusayoudb/CsvDownload}. Each case report is split across four relational tables (DEMO, DRUG, REAC, HIST) joined by a case identifier. We construct hyperedges from the DRUG table by grouping drugs reported in the same case (sizes $3$--$8$ retained) and label each hyperedge with one of $100$ retained MedDRA-coded side-effect classes from the REAC table, following a processing pipeline analogous to HODDI's so that the two datasets share a common hyperedge-and-label format (Table~\ref{tab:drug-stats}). Negatives are sampled to match positives at a $1{:}1$ ratio.

\paragraph{Tasks.}
On both datasets we evaluate two predictive tasks: \emph{edge classification} (predict the dominant side-effect class given an observed drug combination), reported with F1 and AUROC; and \emph{link prediction} (predict whether a drug combination is observed), reported with AUROC and AUPRC. Cross-order generalization (predict higher-order interactions from lower-order observed context) is evaluated only on HODDI.

\section{Implementation Details}
\label{sec:impl-details}

\subsection{Hyperparameters}
\label{sec:hyperparams}

\paragraph{Pretraining.}
The pretrain stage shares the encoder hyperparameters $(d, L, H, \mathrm{dropout})$ with the downstream model selected for the same benchmark, so that the finetune step can load the pretrained encoder without reshaping. The remaining pretraining values are fixed across benchmarks: mask ratio $\rho = 0.2$ over non-center tokens, $50$ epochs with patience-$5$ early stopping on a held-out $10\%$ validation split of the unlabeled corpus, batch size $128$, learning rate $5 \cdot 10^{-4}$, weight decay $5 \cdot 10^{-4}$, gradient clip $\|g\|_2 = 1.0$, and AdamW.

\paragraph{Downstream search.}
For each benchmark we run a per-dataset random search over the discrete grid in Table~\ref{tab:search-space}. The candidate with the highest mean validation accuracy is selected. The optimizer is AdamW throughout. The maximum number of training epochs is held fixed per dataset ($200$ for most benchmarks, $150$ for the larger Walmart variants); the actual stopping point is determined by early stopping on validation accuracy with the patience drawn from the table.

\begin{table}[h]
\caption{Hyperparameter search space for HGPM downstream training. Ranges are the union of values explored across the eight benchmarks via per-dataset random search.}
\label{tab:search-space}
\centering
\small
\begin{tabular}{ll}
\toprule
Hyperparameter & Search range \\
\midrule
\multicolumn{2}{l}{\emph{Encoder.}} \\
Hidden dimension $d$        & $\{32, 64, 128, 256, 512, 768\}$ \\
Layers $L$                  & $\{2, 3, 4, 5, 6\}$ \\
Attention heads $H$         & $\{2, 4, 8, 16\}$ (must divide $d$) \\
Dropout                     & $\{0.0, 0.1, 0.2, 0.3, 0.4, 0.5, 0.6, 0.7\}$ \\
\midrule
\multicolumn{2}{l}{\emph{Inclusion DAG.}} \\
Max order $K_{\max}$        & $\{3, 5, 8, 10, 12, 16, 18, 24\}$ \\
Per-order budget $K_o$      & $\{4, 6, 8, 10, 12, 16, 20\}$ \\
Negatives per positive $r$  & $\{1, 2, 3\}$ \\
Stochastic views $V$        & $2$ (fixed) \\
\midrule
\multicolumn{2}{l}{\emph{Optimization.}} \\
Learning rate               & $\{3{\cdot}10^{-5}, 10^{-4}, 3{\cdot}10^{-4}, 5{\cdot}10^{-4}, 10^{-3}, 3{\cdot}10^{-3}, 10^{-2}\}$ \\
Weight decay                & $\{0, 10^{-5}, 10^{-4}, 5{\cdot}10^{-4}, 10^{-3}, 5{\cdot}10^{-3}, 10^{-2}, 5{\cdot}10^{-2}\}$ \\
Batch size                  & $\{16, 32, 64, 128, 256, 512, 1024\}$ \\
Gradient clip $\|g\|_2$     & $\{0.25, 0.5, 1.0, 2.0, 5.0\}$ \\
Early-stop patience         & $\{3, 5, 8, 10, 12, 15, 20, 50, 100\}$ \\
\bottomrule
\end{tabular}
\end{table}

\subsection{Compute Resources}
\label{sec:compute}

All experiments---pretraining, finetuning, and ablations---are conducted on NVIDIA A40 GPUs (48\,GB). Each training run uses a single GPU. The host machine uses dual Intel Xeon Gold 6326 CPUs (2.90\,GHz, 16 cores each) running Red Hat Enterprise Linux 9.7 with Linux kernel 5.14.

\subsection{Software and Library Versions}
\label{sec:software}

We use Python 3.11 and PyTorch 2.6.0 with the CUDA 12.4 build (cuDNN 9.1). Other key dependencies: NumPy 2.4, SciPy 1.17, scikit-learn 1.8, PyYAML 6.0, and Weights \& Biases 0.26. Exact pinned versions are listed in \texttt{requirements.txt} in the released code.
\section{Comprehensive Ablation Study}
\label{sec:full-ablation}

The six ablations summarized in \S\ref{sec:model-space} (Table~\ref{tab:ablation-model-space}) report only the homophilic and heterophilic group averages for compactness. Table~\ref{tab:ablation-full} below gives the full per-dataset breakdown over all eight benchmarks (Citeseer, Pubmed, Cora-CA, DBLP-CA, Congress, Senate, Walmart, House) for all six ablations. Within each ablation block, the gray cells (also \textbf{bold}) mark the HGPM default, which is best in every column. Numbers are mean test accuracy (\%, $\pm$ std) over 10 seeds.

\begin{table}[t]
\caption{Full per-dataset results for the six HGPM model-space ablations. Within each ablation block, the gray and \textbf{bold} row is the HGPM default; the default attains the best result in every column. Numbers are mean test accuracy (\%, $\pm$ std) over 10 seeds.}
\label{tab:ablation-full}
\centering
\footnotesize
\setlength{\tabcolsep}{3pt}
\resizebox{\textwidth}{!}{%
\begin{tabular}{l cccc cccc}
\toprule
& \multicolumn{4}{c}{\textbf{Homophilic}} & \multicolumn{4}{c}{\textbf{Heterophilic}} \\
\cmidrule(lr){2-5}\cmidrule(lr){6-9}
Setting & Citeseer & Pubmed & Cora-CA & DBLP-CA & Congress & Senate & Walmart & House \\
\midrule
\multicolumn{9}{l}{\textit{(a) Subset Tokens}} \\
\midrule
Observed hyperedges only       & $73.27_{\pm 1.6}$ & $88.04_{\pm 0.7}$ & $82.43_{\pm 1.7}$ & $89.97_{\pm 0.4}$ & $89.73_{\pm 1.3}$ & $71.55_{\pm 5.2}$ & $65.41_{\pm 0.8}$ & $75.13_{\pm 2.3}$ \\
Uncentered random subsets      & $75.42_{\pm 1.4}$ & $88.86_{\pm 0.6}$ & $83.78_{\pm 1.4}$ & $91.07_{\pm 0.3}$ & $91.46_{\pm 1.1}$ & $74.58_{\pm 4.6}$ & $67.43_{\pm 0.6}$ & $76.58_{\pm 1.9}$ \\
No perturbation negatives      & $76.78_{\pm 1.2}$ & $89.51_{\pm 0.5}$ & $84.93_{\pm 0.6}$ & $92.04_{\pm 0.2}$ & $93.42_{\pm 0.9}$ & $76.34_{\pm 5.0}$ & $70.07_{\pm 0.4}$ & $78.61_{\pm 1.6}$ \\
Inclusion DAG tokens           & \cellcolor{gray!20}$\boldsymbol{77.10_{\pm 1.2}}$ & \cellcolor{gray!20}$\boldsymbol{89.70_{\pm 0.5}}$ & \cellcolor{gray!20}$\boldsymbol{85.10_{\pm 0.2}}$ & \cellcolor{gray!20}$\boldsymbol{92.20_{\pm 0.2}}$ & \cellcolor{gray!20}$\boldsymbol{94.10_{\pm 1.0}}$ & \cellcolor{gray!20}$\boldsymbol{77.20_{\pm 4.8}}$ & \cellcolor{gray!20}$\boldsymbol{71.20_{\pm 0.5}}$ & \cellcolor{gray!20}$\boldsymbol{79.20_{\pm 1.8}}$ \\
\midrule
\multicolumn{9}{l}{\textit{(b) Inclusion DAG Edges}} \\
\midrule
No edges (flat tokens)         & $74.13_{\pm 1.8}$ & $87.83_{\pm 0.8}$ & $82.61_{\pm 1.8}$ & $90.07_{\pm 0.5}$ & $89.81_{\pm 1.4}$ & $74.59_{\pm 5.0}$ & $66.41_{\pm 0.9}$ & $75.83_{\pm 2.4}$ \\
Random edges                   & $75.34_{\pm 1.5}$ & $88.46_{\pm 0.7}$ & $83.71_{\pm 1.4}$ & $90.83_{\pm 0.4}$ & $91.69_{\pm 1.2}$ & $75.68_{\pm 4.9}$ & $68.18_{\pm 0.7}$ & $77.04_{\pm 2.1}$ \\
Center-only edges              & $76.21_{\pm 1.3}$ & $89.07_{\pm 0.5}$ & $84.36_{\pm 1.0}$ & $91.62_{\pm 0.3}$ & $92.72_{\pm 1.0}$ & $76.51_{\pm 4.8}$ & $69.62_{\pm 0.6}$ & $78.07_{\pm 1.7}$ \\
Adjacent-order inclusion edges & \cellcolor{gray!20}$\boldsymbol{77.10_{\pm 1.2}}$ & \cellcolor{gray!20}$\boldsymbol{89.70_{\pm 0.5}}$ & \cellcolor{gray!20}$\boldsymbol{85.10_{\pm 0.2}}$ & \cellcolor{gray!20}$\boldsymbol{92.20_{\pm 0.2}}$ & \cellcolor{gray!20}$\boldsymbol{94.10_{\pm 1.0}}$ & \cellcolor{gray!20}$\boldsymbol{77.20_{\pm 4.8}}$ & \cellcolor{gray!20}$\boldsymbol{71.20_{\pm 0.5}}$ & \cellcolor{gray!20}$\boldsymbol{79.20_{\pm 1.8}}$ \\
\midrule
\multicolumn{9}{l}{\textit{(c) Subset-Token Encoder}} \\
\midrule
Mean pooling                       & $73.92_{\pm 1.6}$ & $87.62_{\pm 0.8}$ & $81.83_{\pm 1.4}$ & $89.78_{\pm 0.5}$ & $89.43_{\pm 1.3}$ & $74.37_{\pm 5.4}$ & $65.93_{\pm 0.8}$ & $75.42_{\pm 2.2}$ \\
GRU over linearized sequence       & $71.43_{\pm 3.3}$ & $84.31_{\pm 2.7}$ & $77.84_{\pm 3.9}$ & $84.62_{\pm 3.2}$ & $86.71_{\pm 2.8}$ & $67.89_{\pm 6.3}$ & $64.13_{\pm 2.5}$ & $72.93_{\pm 3.4}$ \\
Transformer w/o structural biases  & $75.61_{\pm 1.5}$ & $88.71_{\pm 0.6}$ & $83.74_{\pm 1.4}$ & $91.06_{\pm 0.4}$ & $91.32_{\pm 1.2}$ & $75.68_{\pm 4.9}$ & $67.83_{\pm 0.7}$ & $77.13_{\pm 2.0}$ \\
Inclusion-aware Transformer        & \cellcolor{gray!20}$\boldsymbol{77.10_{\pm 1.2}}$ & \cellcolor{gray!20}$\boldsymbol{89.70_{\pm 0.5}}$ & \cellcolor{gray!20}$\boldsymbol{85.10_{\pm 0.2}}$ & \cellcolor{gray!20}$\boldsymbol{92.20_{\pm 0.2}}$ & \cellcolor{gray!20}$\boldsymbol{94.10_{\pm 1.0}}$ & \cellcolor{gray!20}$\boldsymbol{77.20_{\pm 4.8}}$ & \cellcolor{gray!20}$\boldsymbol{71.20_{\pm 0.5}}$ & \cellcolor{gray!20}$\boldsymbol{79.20_{\pm 1.8}}$ \\
\midrule
\multicolumn{9}{l}{\textit{(d) Structural Attention Biases}} \\
\midrule
No bias (vanilla attention) & $75.61_{\pm 1.5}$ & $88.71_{\pm 0.6}$ & $83.74_{\pm 1.4}$ & $91.06_{\pm 0.4}$ & $91.32_{\pm 1.2}$ & $75.68_{\pm 4.9}$ & $67.83_{\pm 0.7}$ & $77.13_{\pm 2.0}$ \\
Composition only            & $76.72_{\pm 1.3}$ & $89.46_{\pm 0.5}$ & $84.78_{\pm 0.8}$ & $91.89_{\pm 0.3}$ & $93.47_{\pm 0.9}$ & $76.81_{\pm 5.1}$ & $70.53_{\pm 0.6}$ & $78.58_{\pm 1.7}$ \\
Direction $+$ auxiliary     & $76.13_{\pm 1.4}$ & $89.18_{\pm 0.6}$ & $84.31_{\pm 1.1}$ & $91.56_{\pm 0.3}$ & $92.68_{\pm 1.1}$ & $76.35_{\pm 5.2}$ & $69.43_{\pm 0.6}$ & $77.96_{\pm 1.9}$ \\
Full bias                   & \cellcolor{gray!20}$\boldsymbol{77.10_{\pm 1.2}}$ & \cellcolor{gray!20}$\boldsymbol{89.70_{\pm 0.5}}$ & \cellcolor{gray!20}$\boldsymbol{85.10_{\pm 0.2}}$ & \cellcolor{gray!20}$\boldsymbol{92.20_{\pm 0.2}}$ & \cellcolor{gray!20}$\boldsymbol{94.10_{\pm 1.0}}$ & \cellcolor{gray!20}$\boldsymbol{77.20_{\pm 4.8}}$ & \cellcolor{gray!20}$\boldsymbol{71.20_{\pm 0.5}}$ & \cellcolor{gray!20}$\boldsymbol{79.20_{\pm 1.8}}$ \\
\midrule
\multicolumn{9}{l}{\textit{(e) Pretraining Reconstruction Target}} \\
\midrule
Existence label only    & $73.35_{\pm 2.5}$ & $87.28_{\pm 1.3}$ & $81.88_{\pm 2.4}$ & $90.83_{\pm 1.1}$ & $92.23_{\pm 1.8}$ & $74.56_{\pm 5.1}$ & $68.44_{\pm 1.1}$ & $77.38_{\pm 2.6}$ \\
Raw subset feature      & $75.15_{\pm 1.8}$ & $88.38_{\pm 0.9}$ & $83.14_{\pm 1.8}$ & $91.38_{\pm 0.7}$ & $93.00_{\pm 1.3}$ & $75.48_{\pm 4.9}$ & $69.82_{\pm 0.8}$ & $78.22_{\pm 2.2}$ \\
Composition label only  & $75.90_{\pm 1.4}$ & $89.04_{\pm 0.6}$ & $84.12_{\pm 1.3}$ & $91.65_{\pm 0.5}$ & $93.59_{\pm 1.1}$ & $76.33_{\pm 4.8}$ & $70.40_{\pm 0.6}$ & $78.78_{\pm 1.8}$ \\
TeacherMLP target       & \cellcolor{gray!20}$\boldsymbol{77.10_{\pm 1.2}}$ & \cellcolor{gray!20}$\boldsymbol{89.70_{\pm 0.5}}$ & \cellcolor{gray!20}$\boldsymbol{85.10_{\pm 0.2}}$ & \cellcolor{gray!20}$\boldsymbol{92.20_{\pm 0.2}}$ & \cellcolor{gray!20}$\boldsymbol{94.10_{\pm 1.0}}$ & \cellcolor{gray!20}$\boldsymbol{77.20_{\pm 4.8}}$ & \cellcolor{gray!20}$\boldsymbol{71.20_{\pm 0.5}}$ & \cellcolor{gray!20}$\boldsymbol{79.20_{\pm 1.8}}$ \\
\midrule
\multicolumn{9}{l}{\textit{(f) Token Sequence Order}} \\
\midrule
Ascending order, center first    & $76.11_{\pm 1.8}$ & $89.18_{\pm 0.8}$ & $84.31_{\pm 1.4}$ & $91.63_{\pm 0.6}$ & $92.92_{\pm 1.4}$ & $75.46_{\pm 5.4}$ & $69.73_{\pm 1.0}$ & $78.21_{\pm 2.4}$ \\
Shuffled within order            & $76.57_{\pm 1.5}$ & $89.43_{\pm 0.7}$ & $84.67_{\pm 1.1}$ & $91.88_{\pm 0.4}$ & $93.47_{\pm 1.2}$ & $76.12_{\pm 5.1}$ & $70.41_{\pm 0.8}$ & $78.68_{\pm 2.0}$ \\
Fully shuffled (non-topological) & $76.93_{\pm 1.3}$ & $89.61_{\pm 0.6}$ & $84.93_{\pm 0.8}$ & $92.08_{\pm 0.3}$ & $93.89_{\pm 1.1}$ & $76.81_{\pm 5.0}$ & $70.94_{\pm 0.6}$ & $79.01_{\pm 1.8}$ \\
Descending-order linearization   & \cellcolor{gray!20}$\boldsymbol{77.10_{\pm 1.2}}$ & \cellcolor{gray!20}$\boldsymbol{89.70_{\pm 0.5}}$ & \cellcolor{gray!20}$\boldsymbol{85.10_{\pm 0.2}}$ & \cellcolor{gray!20}$\boldsymbol{92.20_{\pm 0.2}}$ & \cellcolor{gray!20}$\boldsymbol{94.10_{\pm 1.0}}$ & \cellcolor{gray!20}$\boldsymbol{77.20_{\pm 4.8}}$ & \cellcolor{gray!20}$\boldsymbol{71.20_{\pm 0.5}}$ & \cellcolor{gray!20}$\boldsymbol{79.20_{\pm 1.8}}$ \\
\bottomrule
\end{tabular}%
}
\end{table}

\paragraph{(a) Subset Tokens.}
The four rows isolate three independent design choices via three pairwise comparisons. Substructure expansion (Observed hyperedges only vs.\ No perturbation negatives, both centered) contributes the largest average gain among the three comparisons, with the gap concentrated on heterophilic and sparse-hyperedge data and shrinking on saturated co-citation data (see Table~\ref{tab:ablation-full}). The mechanism is that subset tokens expose the COMP/EMER/INHIB structure of Eq.~\eqref{eq:comp-label}, which a representation built only from hyperedges considered in isolation does not carry. Center anchoring (Uncentered random subsets vs.\ No perturbation negatives) contributes a smaller but consistent gain across datasets by concentrating the fixed token budget on subsets containing the target node $c$. Sampled negatives (No perturbation negatives vs.\ Default) yield the smallest contribution on average and are dataset-dependent, falling within noise on saturated benchmarks while providing a modest gain on heterophilic ones where INHIB regimes carry real signal. Substructure expansion thus accounts for most of the gain, with negative-token regularization a secondary effect.

\paragraph{(b) Inclusion DAG Edges.}
The four rows isolate three properties of the structural attention bias. The total contribution of the structural bias (Default vs.\ No-edges) is $+3.07$pp on average and scales with hyperedge density, with the largest gaps on dense heterophilic data and the smallest on saturated co-citation data. Random edges recover a minority of this gain while center-only edges recover the majority, indicating that the \emph{content} of inclusion edges, namely adjacent-order subset-to-superset adjacency, matters far more than their quantity. Center anchoring is therefore the dominant structural prior, with adjacent-order topology providing the remaining signal that uniquely encodes the COMP/EMER/INHIB regimes (since composition labels are defined only on adjacent-order pairs).
\textbf{Cross-table tie-in.} Tables (a) and (b) independently identify center anchoring as the dominant factor: (a) at the tokenization layer (subsets must contain $c$) and (b) at the attention-topology layer (every token attends preferentially to $c$).

\paragraph{(c) Subset-Token Encoder.}
The ranking GRU $<$ mean pooling $<$ Transformer without structural biases $<$ inclusion-aware Transformer (default) holds across datasets, with each step contributing a different mechanism. The GRU consistently underperforms mean pooling, which is consistent with the inclusion-DAG token sequence being permutation-invariant so that imposing a sequential order is uninformative; the effect is largest on small-data Senate, where the absolute GRU gap to default is the widest. Mean pooling is a strong baseline that recovers roughly half of the Transformer's improvement via feature averaging alone, and the vanilla Transformer adds attention contextualization on top. The structural bias contributes the remaining gap (Default vs.\ Vanilla), and is largest on heterophilic dense data and smallest on saturated co-citation data.
\textbf{Cross-table tie-in.} The (c) Transformer-w/o-biases row is identical to the (d) No-bias row. The (b) No-edges (flat) row sits below (c) Transformer-w/o-biases because (b) additionally removes the topological position embedding; this residual gap is several times larger than the (f) shuffled-vs-default gap, consistent with position embedding becoming the primary carrier of topological signal once the structural bias is disabled.

\paragraph{(d) Structural Attention Biases.}
This decomposes the structural attention bias of Eq.~\eqref{eq:attn-bias} into its semantic ($b^{\mathrm{comp}}$) and topological ($b^{\mathrm{dir}} + b^{\mathrm{aux}}$) components. The total bias gap (Default vs.\ No-bias) averages about $1.84$pp and scales with hyperedge density, mirroring the structural bias gap in (b). Composition labels alone recover roughly three quarters of this gap on average, with the recovery stable across benchmarks regardless of homophily, saturation, or scale. Direction $+$ auxiliary biases together recover a smaller fraction on average; the two recoveries are not additive, which is consistent with partial redundancy between composition and topology biases. This supports the design choice in Eq.~\eqref{eq:attn-bias} of including $b^{\mathrm{comp}}$ as an explicit attention bias rather than relying on topology priors alone.

\paragraph{(e) Pretraining Reconstruction Target.}
Unlike the previous rows, which ablate architecture, (e) ablates the pretraining target while keeping the architecture fixed. The four targets form a clear hierarchy on every benchmark: existence-only (predicting $\mathds{1}_S$) underperforms even training from scratch, raw subset features give modest gains, composition labels (discrete COMP/EMER/INHIB supervision) recover the bulk of the gap, and the TeacherMLP target attains the default. Existence-only supplies too narrow a signal: the encoder fits a binary indicator that does not transfer to downstream node classification. Raw subset features are trivially recoverable from mean-pooling, which limits their pretraining utility. Composition labels carry most of the semantic signal targeted by $\mathcal{L}_{\mathrm{sem}}$, and the continuous TeacherMLP alignment closes the remaining gap that discrete supervision leaves open.

\paragraph{(f) Token Sequence Order.}
This row probes sensitivity to the token order fed to the Transformer. All linearization variants stay close to default, with sub-1pp gaps from the fully shuffled setting on average, indicating that structural attention bias rather than position embedding carries the topological signal. The descending-order linearization remains the strongest by a small margin, and the model is largely order-robust as long as the bias terms remain active.
\textbf{Cross-table tie-in.} The shuffled-vs-default gap in (f) is much smaller than the No-edges to Transformer-without-biases gap spanning (b) and (c). Both comparisons contrast topology-via-position with no-position, but in (f) the structural bias is \emph{active} and carries the topological signal redundantly, whereas in the (b) to (c) comparison the bias is \emph{off} and position embedding is the only carrier, producing a substantially larger gap.

\paragraph{Cross-axis trends: heterophilic data benefits more from architectural priors.}
A consistent pattern emerges across the four architectural ablations from (a) through (d): each non-catastrophic design choice has a larger impact on the heterophilic group than on the homophilic group, holding for Hyperedge-only in (a), No-edges in (b), Vanilla Transformer in (c), and No-bias in (d). The catastrophic GRU row of (c) is the exception, since both groups collapse to a similar absolute degree when the inductive bias is fundamentally wrong rather than merely missing. We note that heterophilic datasets also have lower base accuracy and therefore more headroom, which contributes to the larger absolute gaps. The pretraining-target ablation (e) does \emph{not} show the same asymmetry; existence-only even widens the gap slightly more on the homophilic group, suggesting that representation-quality choices and inductive-bias choices have qualitatively different per-group impact, and that HGPM's heterophilic advantage is more closely tied to architectural priors than to the pretraining target. Linearization (f) follows the architectural pattern at much smaller magnitudes. This per-group asymmetry across architectural axes is consistent with HGPM's largest lead in Table~\ref{tab:node-classification} appearing on Senate, Walmart, and House, where message passing on observed hyperedges has limited reach.

\section{Case Study: HGPM vs.\ Baseline}
\label{sec:case-supp}

We complement the case study in Figure~\ref{fig:case} with two head-to-head comparisons against the strongest feature-similarity baseline \citep{xie2025khgnn} on the same three candidates added to FOLFOX (panitumumab, bevacizumab, capecitabine). Both panels share the JADER-derived ground truth used in the main text: panitumumab preserves peripheral neuropathy (positive, COMP), while bevacizumab and capecitabine suppress it (negative, INHIB).

\paragraph{Link-prediction score.} Figure~\ref{fig:case-link-score} contrasts each model's binary score against the truth label. HGPM separates all three correctly: $0.84$ for panitumumab (positive), $0.29$ for bevacizumab and $0.22$ for capecitabine (both negative, well below the $0.5$ threshold). The baseline scores all three above the threshold ($0.58 / 0.63 / 0.54$) and therefore flags bevacizumab and capecitabine as positive, in opposition to the data. The mechanism is direct: the baseline reads the candidate from a drug-feature encoder, and the three additions sit close in feature space, so the score tracks drug similarity rather than the joint observation pattern of the $3$-drug subset and its $4$-drug superset.

\begin{figure}[h]
\centering
\includegraphics[width=0.95\textwidth]{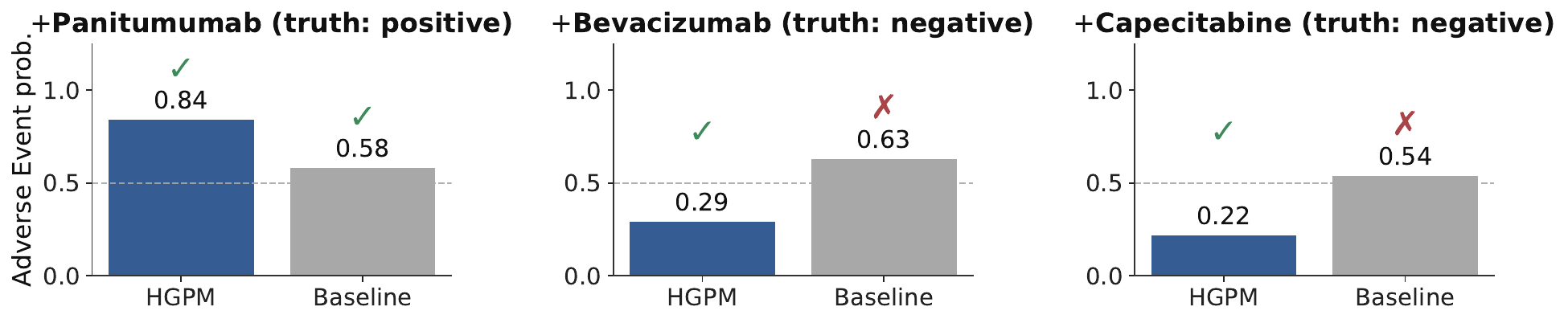}
\caption{Binary link-prediction score for each candidate added to FOLFOX, comparing HGPM with the feature-similarity baseline. Truth label is shown above each card; the dashed line marks the $0.5$ decision threshold. HGPM separates all three branches; the baseline misclassifies bevacizumab and capecitabine as positive.}
\label{fig:case-link-score}
\end{figure}

\paragraph{Adverse-event distribution.} Figure~\ref{fig:case-baseline-ae} shows the baseline's top-$3$ predicted adverse events per candidate. Peripheral neuropathy is ranked first across all three branches, with probabilities $0.44 / 0.26 / 0.28$, mirroring the link-prediction failure: the baseline averages all three candidates toward a shared neuropathy prior. HGPM (Figure~\ref{fig:case}d) demotes peripheral neuropathy to rank $2$ for bevacizumab and rank $3$ for capecitabine, surfacing regimen-specific signatures instead: hypertension and proteinuria for the anti-VEGF branch, and hand-foot syndrome and diarrhea for the fluoropyrimidine branch. These are exactly the dose-limiting toxicities expected on clinical grounds, recovered without any clinical labels beyond the binary positive/negative regimen indicators. The contrast highlights the clinical value of compositional reasoning: a similarity model collapses distinct regimens onto a shared adverse-event profile, whereas HGPM recovers the distinct profile each regimen actually produces.

\begin{figure}[h]
\centering
\includegraphics[width=0.95\textwidth]{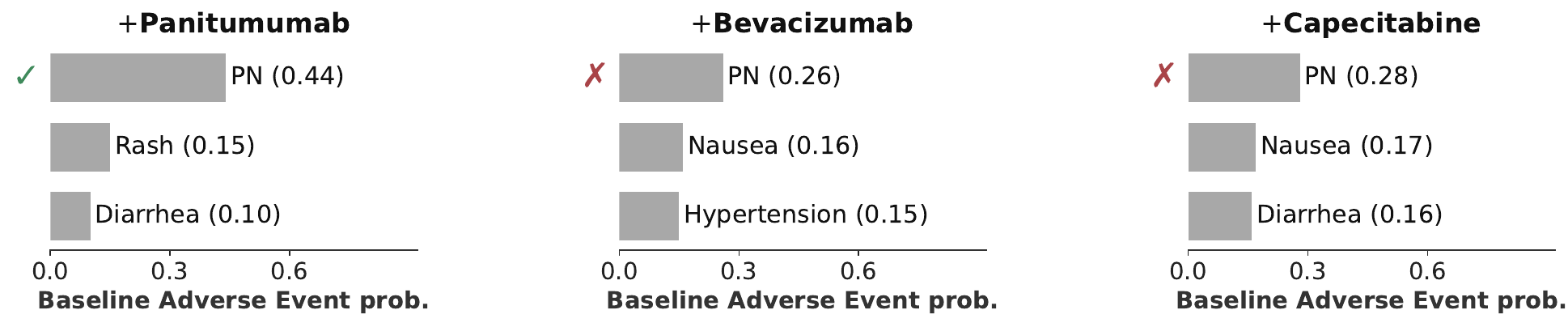}
\caption{Top-$3$ adverse events predicted by the feature-similarity baseline for each candidate added to FOLFOX. Peripheral neuropathy ranks first across all three branches, indicating that the baseline cannot resolve regimen-specific adverse-event signatures from drug-feature similarity alone.}
\label{fig:case-baseline-ae}
\end{figure}


\end{document}